\newtheorem{theorem}{Theorem}
\newtheorem{lemma}{Lemma}
\newtheorem{definition}{Definition}
\title{Lsh-sampling Breaks the Computation Chicken-and-egg Loop in Adaptive Stochastic Gradient Estimation\thanks{This is a full version of work in ICLR 2018 Workshop, that has been accepted with the title ``Fast and Accurate Stochastic Gradient Estimation'' at NeurIPS 2019.}}
\author{
Beidi Chen\\
      Rice University\\
      Houston, Texas\\
      \texttt{beidi.chen@rice.edu}\\
\And
Yingchen Xu\\
      Rice University\\
      Houston, Texas\\
      \texttt{yx26@rice.edu}\\
\And
Anshumali Shrivastava\\
      Rice University\\
      Houston, Texas\\
      \texttt{anshumali@rice.edu}\\
}
\begin{document}
\maketitle

\begin{abstract}
Stochastic Gradient Descent or SGD is the most popular optimization algorithm for large-scale problems. SGD estimates the gradient by uniform sampling with sample size one. There have been several other works that suggest faster epoch-wise convergence by using weighted non-uniform sampling for better gradient estimates. Unfortunately, the per-iteration cost of maintaining this adaptive distribution for gradient estimation is more than calculating the full gradient itself, which we call the chicken-and-the-egg loop. As a result, the false impression of faster convergence in iterations, in reality, leads to slower convergence in time. In this paper, we break this barrier by providing the first demonstration of a scheme, {\bf L}ocality sensitive hashing (LSH) sampled Stochastic {\bf G}radient {\bf D}escent (LGD), which leads to superior gradient estimation while keeping the sampling cost per iteration similar to that of the uniform sampling. Such an algorithm is possible due to the sampling view of LSH, which came to light recently. As a consequence of superior and fast estimation, we reduce the running time of all existing gradient descent algorithms, that relies on gradient estimates including Adam, Ada-grad, etc. We demonstrate the effectiveness of our proposal with experiments on linear models as well as the non-linear BERT, which is a recent popular deep learning based language representation model.
\end{abstract}

\section{Motivation}
\label{Motivation}


Stochastic gradient descent or commonly known as SGD is the most popular choice of optimization algorithm in large-scale setting for its computational efficiency. A typical interest in Machine Learning is to minimize the average loss function $f$ over the training data, with respect to the parameters $\theta$, i.e., the objective function of interest is
\vspace{-0.15in}
\begin{equation}\label{eq:objective}
\theta^* = \arg\min_{\theta} F(\theta) =  \arg\min_{\theta} \frac{1}{N} \sum_{i = 1}^N f(x_i, \theta).
\end{equation}
Throughout the paper, our training data $D = \{x_i,\ y_i\}_{i=1}^N$ will have $N$ instances with $d$ dimensional features $x_i \in \mathbb{R}^d$ and labels $y_i$. The labels can be continuous real valued for regression problems. For classification problem, they will take value in a discrete set, i.e., $y_i \in \{1, \ 2, \cdots, \ K \}$. Typically, the function $f$ is convex, thus a Gradient Descent (GD) algorithm can achieve the global optimum.
The objective function for least squares, $f(x_i, \theta) = (\theta \cdot x_i - y_i)^2$, used in regression setting is a classical example of $f$. 



SGD~\citep{bottou2010large} samples an instance $x_j$ uniformly from $N$ instances, and performs the gradient descent update: 
\vspace{-0.1in}
\begin{align}\label{eq:gradDescent}
\theta_t  =  \theta_{t-1} - \eta^t \nabla f(x_j, \theta_{t-1}),
\end{align}
where $\eta^t$ is the step size at the $t^{th}$ iteration. The gradient $\nabla f(x_j, \theta_{t-1})$ is only evaluated on $x_j$, using the current $\theta_{t-1}$. 
It should be noted that a full gradient of the objective is given by the average $\frac{1}{N} \sum_{i = 1}^N \nabla f(x_i, \theta_{t-1})$. Thus, a uniformly sampled gradient $\nabla f(x_j, \theta_{t-1})$ is an unbiased estimator of the full gradient, i.e.,
\vspace{-0.1in}
\begin{align}\label{eq:expgradDecent}
\mathbb{E}(\nabla f(x_j, \theta_{t-1})) = \frac{1}{N} \sum_{i = 1}^N \nabla f(x_i, \theta_{t-1}).
\end{align}
This is the key reason why, despite only using one sample, SGD still converges to the local minima, analogously to full gradient descent, provided $\eta^t$ is chosen properly~\citep{robbins1951stochastic,bottou2010large}.

It is known that the convergence rate of SGD is slower than that of the full gradient descent~\citep{shamir2013stochastic}.  Nevertheless, the cost of computing the full gradient requires $O(N)$ evaluations of $\nabla f$ compared to just $O(1)$ evaluation in SGD. Thus, with the cost of one epoch of full gradient descent, SGD can perform $O(N)$ epochs, which overcompensates the slow convergence (One epoch is one pass of the training data). Therefore, despite slow convergence rates, SGD is almost always the chosen algorithm in large-scale settings as the calculation of the full gradient in every epoch is prohibitively slow. Further improving SGD is still an active area of research. Any such improvement will directly speed up most of the state-of-the-art algorithms in machine learning.

The slower convergence of SGD in iterations is expected due to the poor estimation of the gradient (the average) by only sampling a single instance uniformly. Clearly, the variance of the one-sample estimator is high. As a consequence, there have been several efforts in finding better sampling strategies for estimating the gradients~\citep{zhao2014accelerating,needell2014stochastic,zhao2015stochastic,alain2015variance}.  The key idea behind these methods is to replace sampling from a uniform distribution with sampling from a weighted distribution which leads towards a lower and even optimal variance.


However, obtaining the optimal weighted distribution is not a straightforward task, due to its correlation with the $L_2$ norm of the gradients. Therefore, whenever the parameters and the gradients change, the weighted distribution has to change. Unfortunately, as argued in~\citep{gopal2016adaptive, perekrestenko2017faster}, all of these adaptive sampling methods for SGD, suffer from what we call the chicken-and-egg loop -- adaptive sampling improves stochastic estimation but maintaining the required adaptive distribution will cost up to $O(N)$ per iteration, which is also the cost of computing the full gradient exactly (or at least not $O(1)$). 
Not surprisingly ~\citep{namkoong2017adaptive} showed another $O(N)$ scheme that improves the running time compared with SGD using $O(N)$ leverage scores ~\citep{yang2016weighted} sampling. However, as noted $O(N)$ per iteration is prohibitive.

To the best of our knowledge, there does not exist any generic sampling scheme for adaptive gradient estimation, where the cost of maintaining and updating the distribution, per iteration, is $O(1)$ which is comparable to SGD. Our work provides first such sampling scheme utilizing the recent advances in sampling and unbiased estimation using Locality Sensitive Hashing~\citep{spring2016scalable,spring2017new}.

\subsection{Related Work: Adaptive Sampling for SGD}
\label{as}

For non-uniform sampling, we can sample each $x_i$ with an associated weight $w_i$. These $w_i$'s can be tuned to minimize the variance. It was first shown in~\citep{alain2015variance}, that sampling $x_i$ with probability in proportion to the $L_2$ norm (euclidean norm) of the gradient, i.e. $||\nabla f(x_i,\theta_{t-1})||_2$, leads to the optimal distribution that minimizes the variance. However, sampling $x_i$ with probability in proportion to $w_i = ||\nabla f(x_i,\theta_{t-1})||_2$, requires first computing all the $w_i$'s, which change in every iteration because $\theta_{t-1}$ gets updated. Therefore, maintaining the values of $w_i$'s is even costlier than computing the full gradient. ~\citep{gopal2016adaptive} proposed to mitigate this overhead partially by exploiting additional side information such as the cluster structure of the data. Prior to the realization of optimal variance distribution,
~\citep{zhao2014accelerating} and ~\citep{needell2014stochastic} proposed to sample a training instance with a probability proportional to the Lipschitz constant of the function $f(x_i,\theta_{t-1})$ or $\nabla f(x_i,\theta_{t-1})$ respectively. It is worth mentioning that before these works, a similar idea was used in designing importance sampling-based low-rank matrix approximation algorithms. The resulting sampling methods, known as leverage score sampling, are again proportional to the squared Euclidean norms of rows and columns of the underlying matrix ~\citep{drineas2012fast}. Nevertheless, as argued, in~\citep{gopal2016adaptive}, the cost of maintaining the distribution is prohibitive.

{\bf The Chicken-and-Egg Loop:} In summary, to speed up the convergence of stochastic gradient descent, we need non-uniform sampling for better estimates (low variance) of the full gradient. Any interesting non-uniform sampling is dependent on the data and the parameter $\theta_t$ which changes in every iteration. Thus, maintaining the non-uniform distribution for estimation requires $O(N)$ computations to calculate the weight $w_i$, which is the same cost as computing it exactly. It is not even clear that there exists any sweet and adaptive distribution which breaks this computational chicken-and-egg loop. We provide the first affirmative answer by giving an unusual distribution which is derived from probabilistic indexing based on locality sensitive hashing.

{\bf Our Contributions:} In this work, we propose a novel LSH-based sampler, that breaks the aforementioned chicken-and-egg loop. Our algorithm, which we call {\bf L}SH sampled Stochastic {\bf G}radient {\bf D}escent (LGD), are generated via hash lookups which have $O(1)$ cost. Moreover, the probability of selecting $x_i$ is provably adaptive. Therefore, the current gradient estimates is likely to have lower variance, compared to a single sample SGD, while the computational complexity of sampling is constant and of the order of SGD sampling cost. Furthermore, we demonstrate that LGD can be utilized to speed up any existing gradient-based optimization algorithm such as AdaGrad \citep{adagrad}. We also show the power of LGD with experiments on both linear and non-linear models.

As a direct consequence, we obtain a generic and efficient gradient descent algorithm which converges significantly faster than SGD, both in terms of iterations as well as running time. It should be noted that rapid iteration or epoch-wise convergence alone does not imply computational efficiency. For instance, Newtons method converges faster, epoch-wise, than any first-order gradient descent, but it is prohibitively slow in practice. The wall clock time or the amount of floating point operations performed to reach convergence should be the metric of consideration for useful conclusions. 

{\bf Accuracy Vs Running Time:} It is rare to see any fair (same computational setting) empirical comparisons of SGD with existing adaptive SGD schemes, which compare the improvement in accuracy with respect to running time on the same computational platform. Almost all methods compare accuracy with the number of epochs. It is unfair to SGD which can complete $O(N)$ updates at the computational cost (or running time) of one update for adaptive sampling schemes.

\begin{figure} [t]
\vspace{-0.3in}
	\begin{center}
	\includegraphics[width=5in]{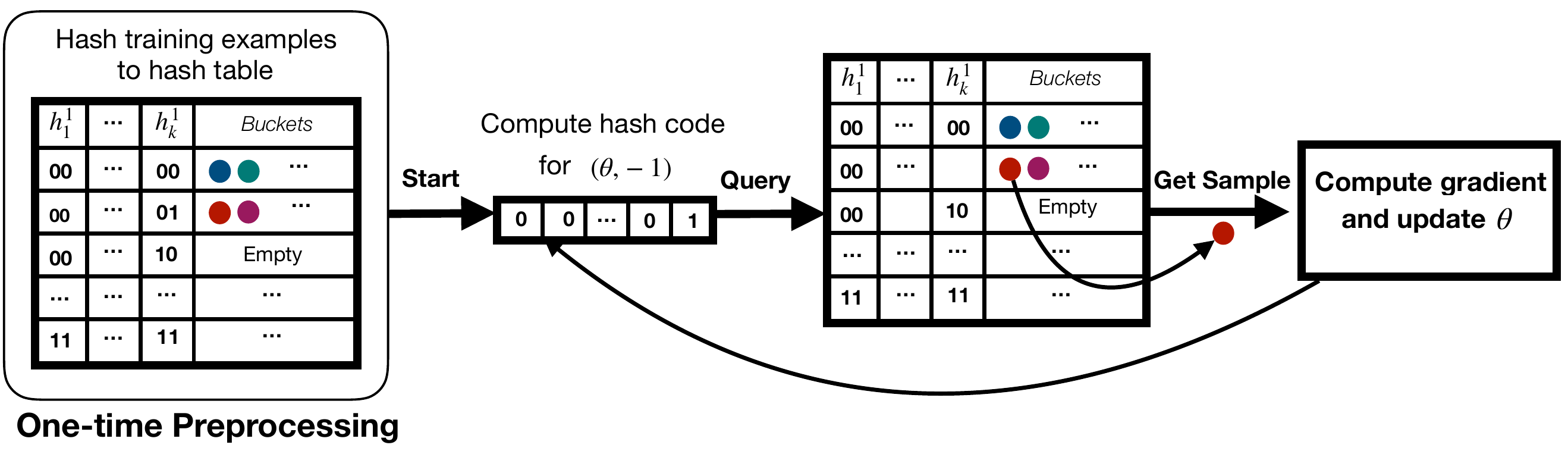}
	\end{center}
	\vspace{-0.1in}
	\caption{The work-flow of LGD Algorithm}
	\label{lgd}
	\vspace{-0.1in}
\end{figure}
\section{The LGD Algorithm}
\subsection{A Generic Framework for Efficient Gradient Estimation}
\label{3.1}
Our algorithm leverages the efficient estimations using locality sensitive hashing, which usually beats random sampling estimators while keeping the sampling cost near-constant. We first provide the intuition of our proposal, and the analysis will follow. Figure ~\ref{lgd} shows the complete work-flow of LGD algorithm. Consider least squares regression with loss function $\frac{1}{N} \sum_{i=1}^N (y_i - \theta_t \cdot x_i)^2$, where $\theta_t$ is the parameter in the $t^{th}$ iteration. The gradient is just like a partition function in classical discrete system. If we simply follow the procedures in~\citep{spring2017new}, we can easily show a generic unbiased estimator via adaptive sampling. However, better sampling alternatives are possible.

Observing that the gradient, with respect to $\theta_t$ concerning $x_i$, is given by $2(y_i - \theta_t \cdot x_i)x_i$, the $L_2$ norm of the gradient can therefore be written as an absolute value of inner product.
\begin{align}\label{eq:gradNorm}
\Vert \nabla f(x_i, \theta_t)\Vert_2 &= \big|2 (\theta_t \cdot x_i - y_i)\Vert x_i\Vert_2\big|  = 2\big|[\theta_t,-1]\cdot [ x_i\Vert x_i\Vert_2,  y_i\Vert x_i\Vert_2 ]\big|,
\end{align}
where $[\theta_t,-1] $ is a vector concatenation of $\theta$ with $-1$.
According to ~\citep{gopal2016adaptive}, $w_i^* =\frac{\Vert \nabla f(x_i, \theta_{t})\Vert_2 } {  \sum_{j=1}^N \Vert \nabla f(x_j, \theta_{t})\Vert_2 }$ is also the optimal sampling weight for $x_i$.
Therefore, if the data is normalized, we should sample $x_i$ in proportion to $w_i* = \big|[\theta_t,-1] \cdot [ x_i, y_i]\big|$, i.e. large magnitude inner products should be sampled with higher probability.

As argued, such sampling process is expensive because $w_i^*$ changes with $\theta_t$. We address this issue by designing a sampling process that does not exactly sample with probability $w_i^*$ but instead samples from a different weighted distribution which is a monotonic function of $w_i^*$. Specifically, we sample from $w_i^{lsh} = f(w_i^*)$, where $f$ is some monotonic function. Before we describe the efficient sampling process, we first argue that a monotonic sampling is a good choice for gradient estimation. Figure 2 in the appendix helps visualize the relation among optimal weighted distribution (target), uniform sampling in SGD and adaptive sampling in LGD.


For any monotonic function $f$, the weighted distribution $w_i^{lsh} = f(w_i^*)$ is still adaptive and changes with $\theta_t$. Also, due to monotonicity, if the optimal sampling prefers $x_i$ over $x_j$ i.e. $w_i^* \ge w_j^*$, then monotonic sampling will also have same preference, i.e., $w_i^{lsh} \ge w_j^{lsh}$.
\begin{figure*} [ht!]
		\vspace{-0.2in}
	\begin{center}
		\subfloat{\includegraphics[width=0.25\textwidth]{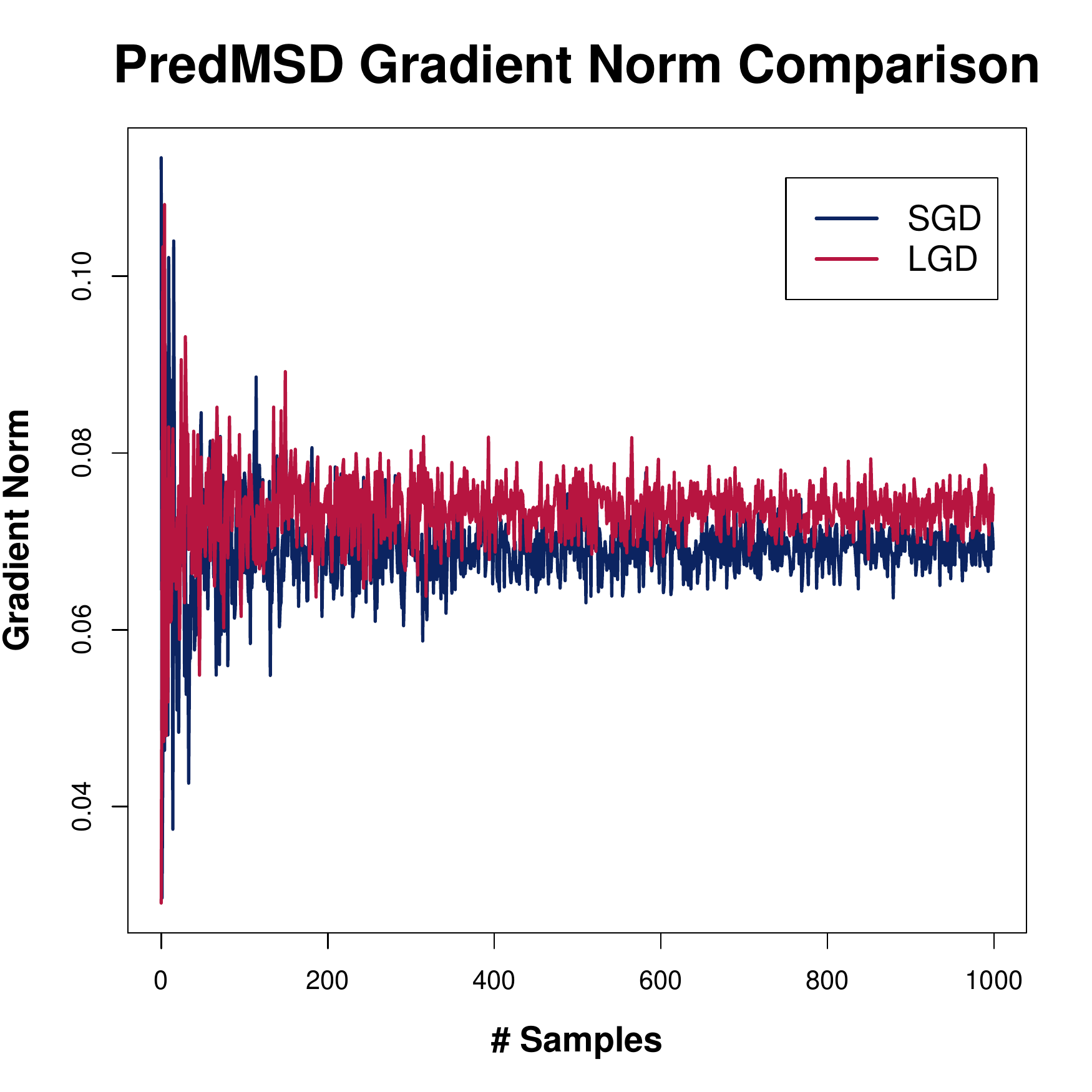}}
		\subfloat{	\includegraphics[width=0.25\textwidth]{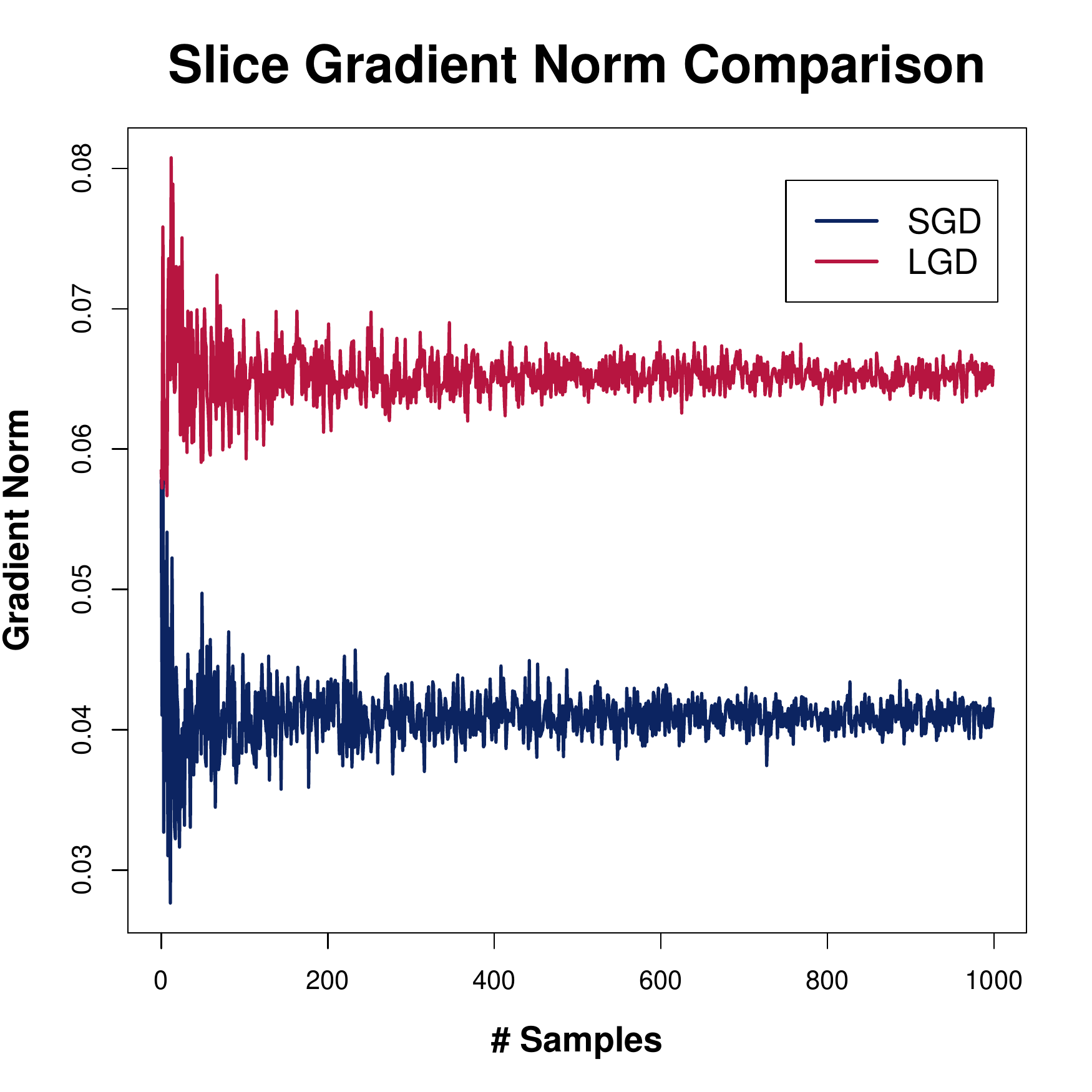}}
		\subfloat{		\includegraphics[width=0.25\textwidth]{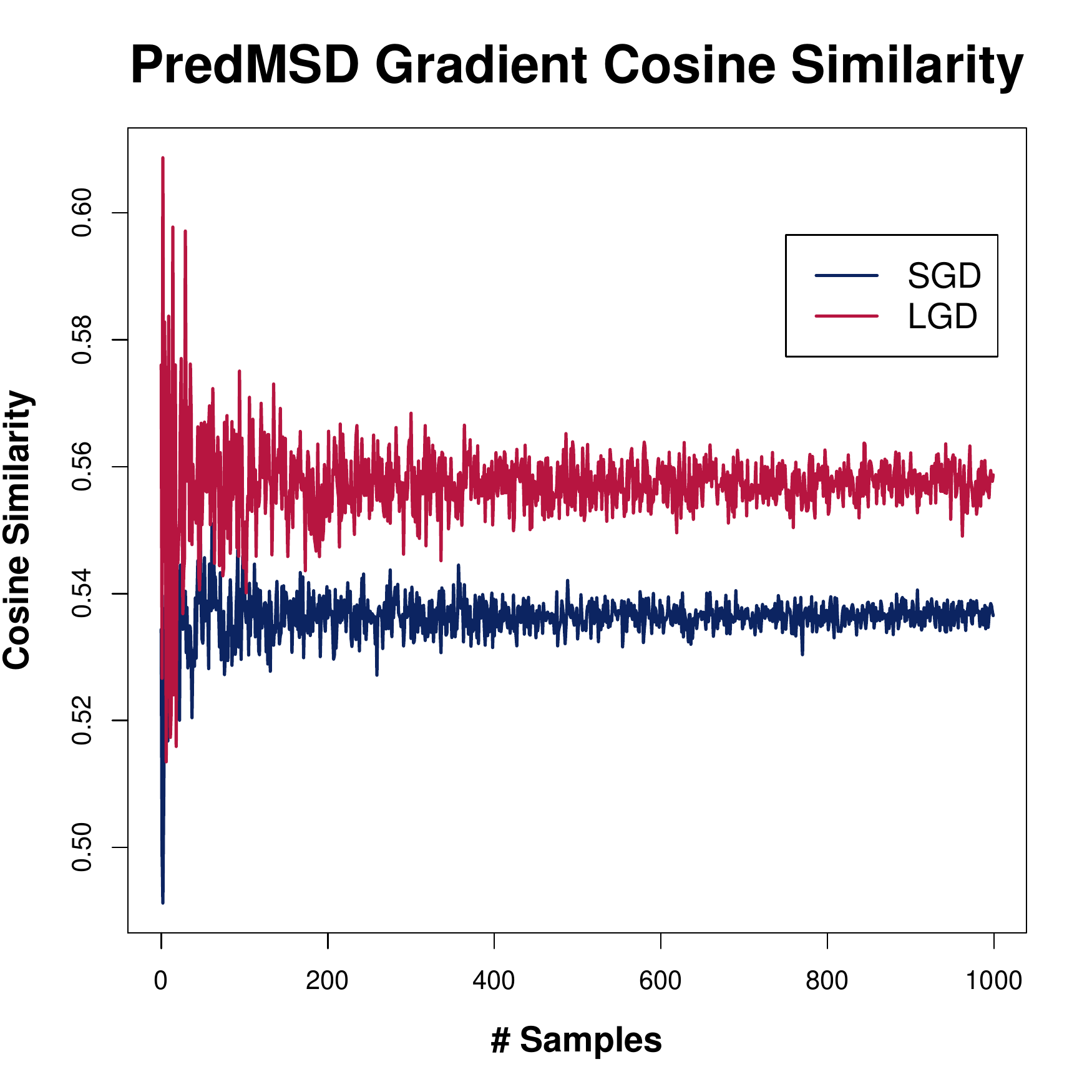}}
		\subfloat{	\includegraphics[width=0.25\textwidth]{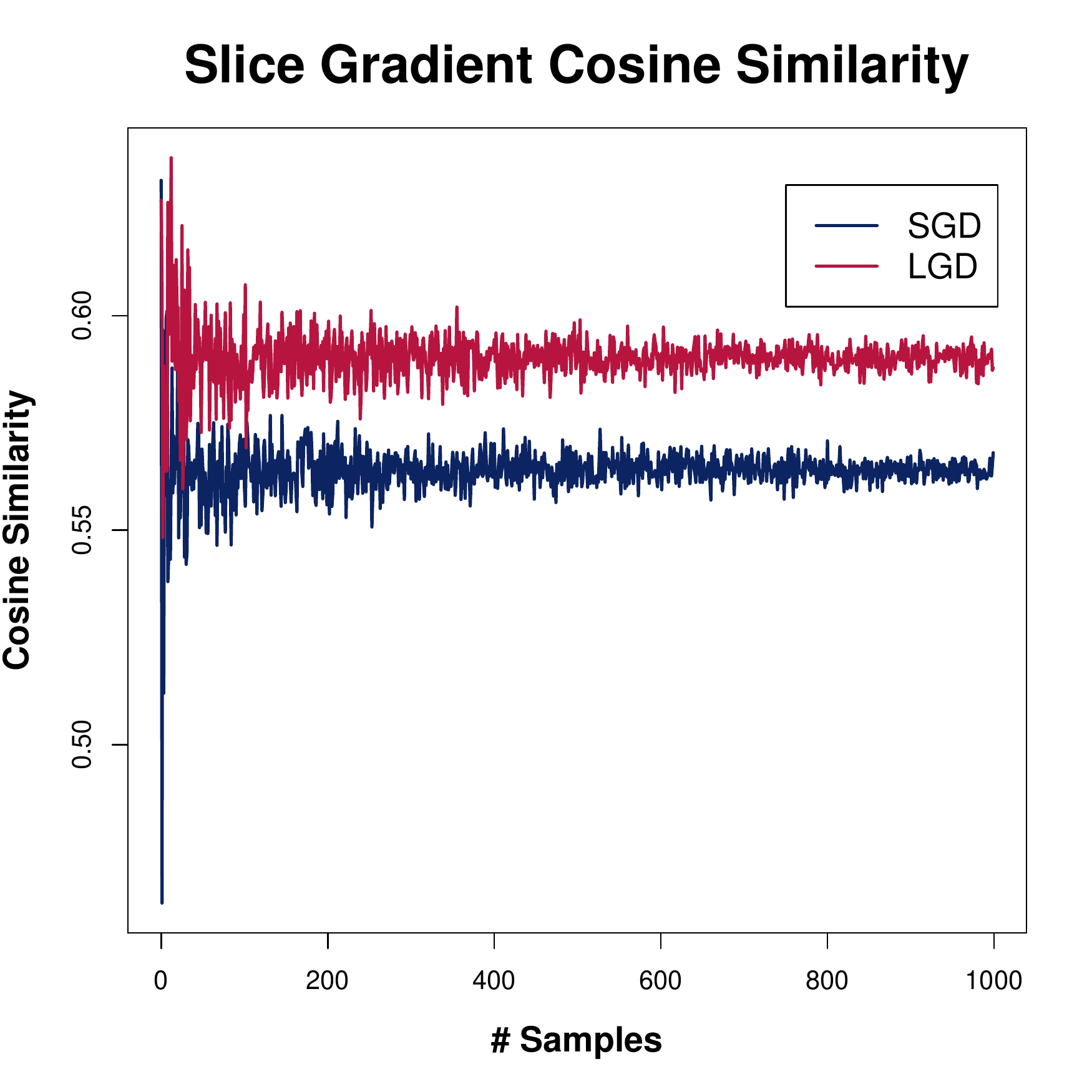}}
	\end{center}
    \vspace{-0.1in}
	\caption{Subplots (a)(b) show the comparisons of the average (over number of samples) gradient $L_2$ norm of the points that LGD and SGD sampled. Subplots (d)(e) show the comparison of the cosine similarity between gradient estimated by LGD and the true gradient and the cosine similarity between gradient estimated by SGD and the true gradient.}
	\label{gradient}
\end{figure*}
  \begin{wrapfigure}{L}{0.52\textwidth}
  \label{algo2}
  \vspace{-0.2in}
    \begin{minipage}{0.52\textwidth}
    \label{algo2}
      \begin{algorithm}[H]
      \label{algo2}
        \caption{assignment algorithm}
        \begin{algorithmic}
		\STATE \textbf{Input: $H$ (Hash functions), $HT[][]$ ($L$ Hash Tables), K, $Query$}
        \STATE \textbf{$cp(x,Q)$ is Pr(h(x)= h(Q)), under given LSH}
		\STATE \textbf{Output: sampled data $x$, sampling probability $p$}
        \STATE $l, \ S = 0$
        \WHILE{true}
		    \STATE $ti = random(1,L)$
            \STATE $bucket = H(Query,ti)$ (table specific hash)
            \IF{HT[ti][bucket] = empty}
             \STATE $l$++
            \ENDIF
            \STATE $S=|HT[ti][bucket]|$ (size of bucket)
            \STATE $x$ = randomly pick one element from $HT[ti][bucket]$
            \STATE break;
		\ENDWHILE
       \STATE $p =  cp(x,Query)^K (1-cp(x,Query)^K)^{l-1} \times \frac{1}{S}$
       \STATE return $x, \ p$
        \end{algorithmic}
      \end{algorithm}
    \end{minipage}
      \vspace{-0.1in}
  \end{wrapfigure}
The key insight is that there are two quantities in the inner product (equation~\ref{eq:gradNorm}), $[\theta_t,-1]$ and $[ x_i, y_i]$. With successive iteration,  $[\theta_t,-1]$ changes while $[ x_i, y_i]$ is fixed. Thus, it is possible to preprocess $[ x_i, y_i]$ into hash tables (one time cost) and query with $[ \theta_t,-1]$ for efficient and adaptive sampling. With every iteration, only the query changes to $[ \theta_{t+1},-1]$, but the hash tables remains the same. Few hash lookups are sufficient to sample $x_i$ for gradient estimation adaptively. Therefore, we only pay one-time preprocessing cost of building hash tables and few hash lookups, typically just one, in every iteration to get a sample for estimation.

There are a few more technical subtleties due to the absolute value of inner product $\big|[ \theta_t,-1[ \cdot [ x_i, y_i]\big|$, rather than the inner product itself. However, the square of the absolute value of the inner product
$$\big|[\theta_t,-1] \cdot [ x_i, y_i]\big|^2 = T([ \theta_t,-1]) \cdot T([ x_i, y_i]), $$ can also be written as an inner product as it is a quadratic kernel, and $T$ is the corresponding feature expansion transformation. Again square is monotonic function, and therefore, our sampling is still monotonic as composition of monotonic functions is monotonic. Thus, technically we hash $T([ x_i, y_i])$ to create hash tables and the query at $t^{th}$ step is $T([\theta_t,-1])$. Once an $x_i$ is sampled via LSH sampling (Algorithm~\ref{algo2}), we can precisely compute the probability of its sampling, i.e., $p_i$ . It is not difficult to show that our estimation of full gradient is unbiased (Section~\ref{variance}).

  \begin{wrapfigure}{L}{0.5\textwidth}
  \label{algo1}
    \vspace{-0.2in}
    \begin{minipage}{0.5\textwidth}
    \label{algo1}
      \begin{algorithm}[H]
      \label{algo1}
	\begin{algorithmic}[1]
		\STATE \textbf{Input: $D={x_i, \ y_i}$, $N$, $\theta_0$, $\eta$ }
        \STATE \textbf{Input: LSH Family $H$, parameters $K$, $L$}	
		\STATE \textbf{Output: $\theta^*$}
		\STATE $HT =$ Get preprocessed training data vectors $x_{lsh}, y_{lsh}$ and then put $[ x_{lsh}^i, \ y_{lsh}^i]$ into LSH Data structure.
		\STATE Get $x_{train}'$, $y_{train}'$ from preprocessed data 
        \STATE $t = 0$
		\WHILE{$Not Converged$}
		    \STATE $x_{lsh}^i,\ p$ = $Sample(H, $HT$, K, [ \theta_t, -1 ])$  (Algorithm~\ref{algo2})
		    \STATE Get $x_{train}^{i'}$, $y_{train}^{i'}$ from preprocessed data 
			\STATE $\theta_{t+1} := \theta_t - \eta_t (\frac{\nabla f(x_{train}^{i'},\theta_t)}{p \times N})$ 	
		\ENDWHILE
        \STATE return $\theta^*$
	\end{algorithmic}
	\caption{LSH-Sampled Stochastic gradient Descent (LGD) Algorithm }
\end{algorithm}
    \end{minipage}
      \vspace{-0.2in}
  \end{wrapfigure}
\subsection{Algorithm and Implementation Details}
\label{3.2}
We first describe the detailed step of our gradient estimator in Algorithm~\ref{algo1}. We also provide the sampling algorithm~\ref{algo2} with detail. Assume that we have access to the right LSH function $h$, and its collision probability expression $cp(x,y) = Pr(h(x) = h(y))$. For linear regression, we can use signed random projections, simhash~\citep{charikar2002similarity}, or MIPS hashing. With normalized data, simhash collision probability is $cp(x,y) = 1 - \frac{cos^{-1}(\frac{x \cdot y}{\Vert x \Vert_2 \Vert y \Vert_2})}{\pi}$, which is monotonic in the inner product. Furthermore, we centered the data we need to store in the LSH hash table to make the simhash query more efficient.

\textbf{LGD with Adaptive Learning Rate}
The learning rate or step size $\eta$ in SGD is a one parameter approximation to the inverse of the Hessian (second order derivative) ~\cite{bottou2008tradeoffs}. Time based (or step based) decay and exponential decay ~\cite{xu2011towards} have been empirically found to work well. Furthermore, ~\cite{adagrad} proposed the popular AdaGrad which is dimension specific adaptive learning rate based on first order gradient information. Although the methods mentioned above also help improve the convergence of SGD by tweaking the learning rate, LGD is not an alternative but a complement to them. In LGD implementation, AdaGrad as well as those learning rate decay methods are customized options that can be used in conjunction.     
%
%


\textbf{Running Time of Sampling}
\label{time}
The computational cost of SGD sampling is merely a single random number generator. The cost of gradient update (equation~\ref{eq:gradDescent}) is one inner product, which is $d$ multiplications. If we want to design an adaptive sampling procedure that beats SGD, the sampling cost cannot be significantly larger than $d$ multiplications.

The cost of LGD sampling (Algorithm~\ref{algo2}) is $K \times l$ hash computations followed by $l +1$ random number generator, (1 extra for sampling from the bucket). Since the scheme works for any $K$, we can always choose $K$ small enough so that empty buckets are rare (see~\citep{spring2017new}). In all of our experiments, $K=$ 5 for which $l$ is almost always 1. Thus, we require $K$ hash computations and only two random number generations. If we use very sparse random projections, then $K$ hash computations only require a constant $\ll d$  multiplications. For example, in all our experiments we only need $\frac{d}{30}$ multiplication,
in expectation, to get all the hashes using sparse projections.
Therefore, our sampling cost is significantly less than $d$  multiplication which is the cost of gradient update. Using fast hash computation is critical for our method to work in practice.
\begin{figure*} [ht]
		\vspace{-0.2in}
	\begin{center}
		\subfloat{\includegraphics[width=0.25\textwidth]{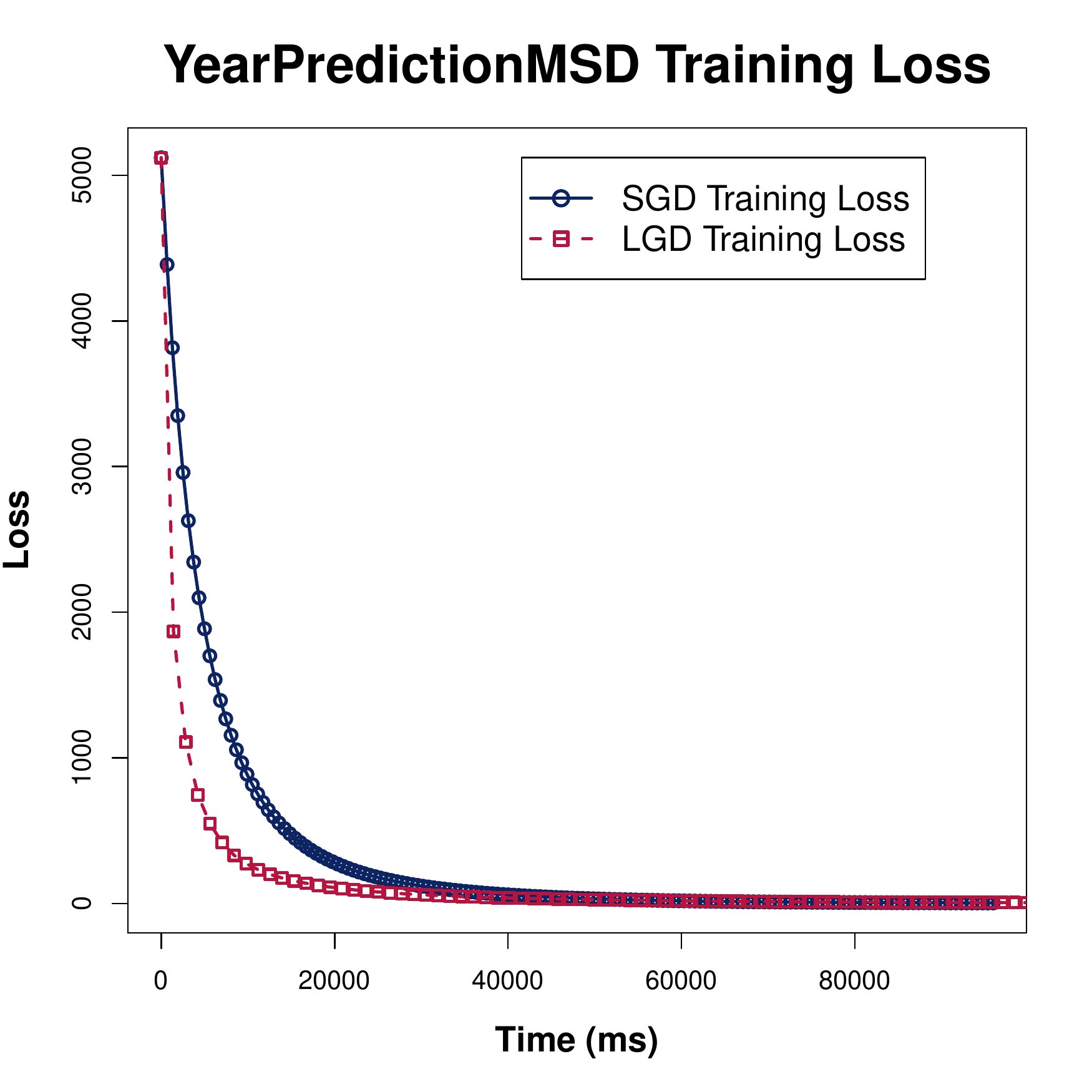}}
		\subfloat{	\includegraphics[width=0.25\textwidth]{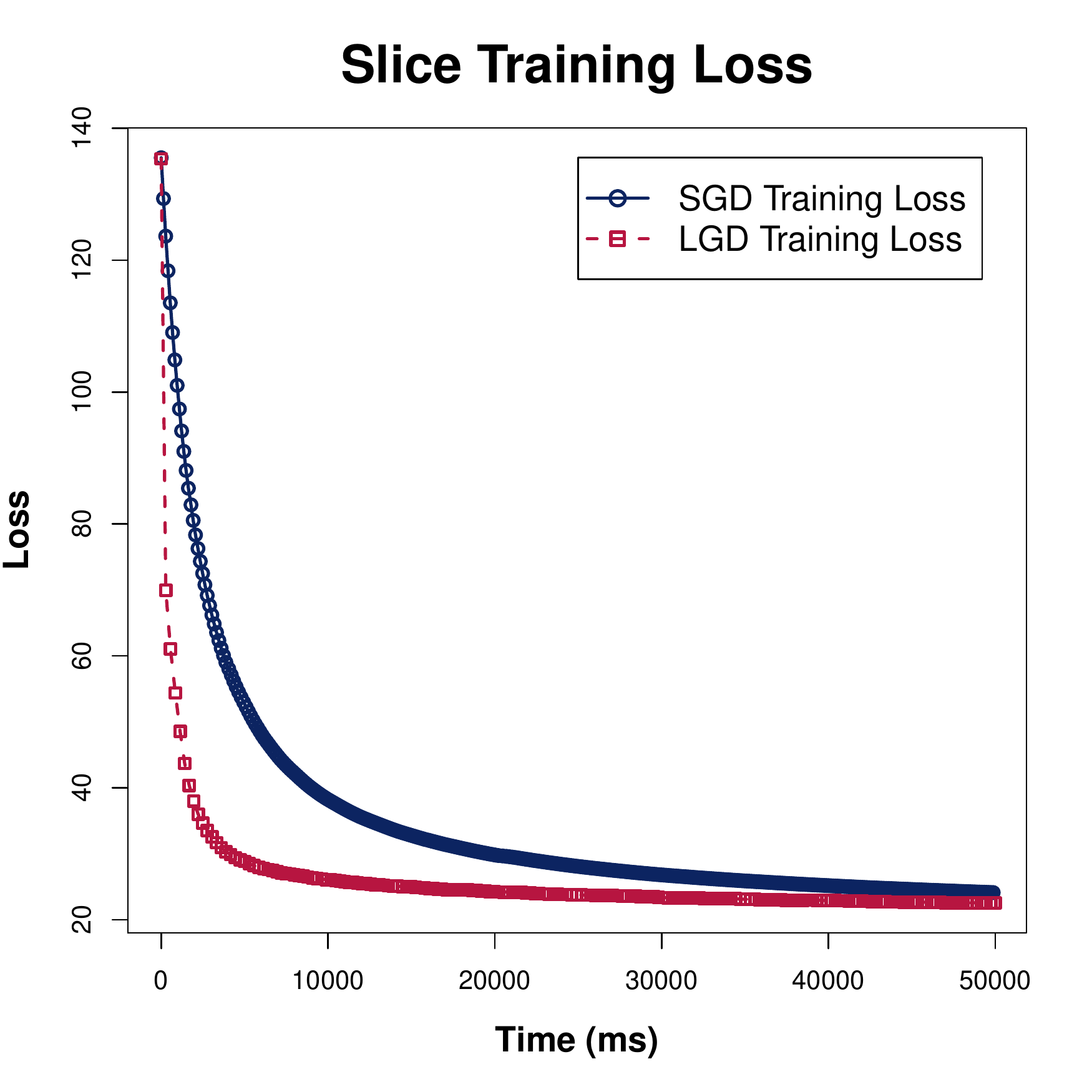}}
		\subfloat{		\includegraphics[width=0.25\textwidth]{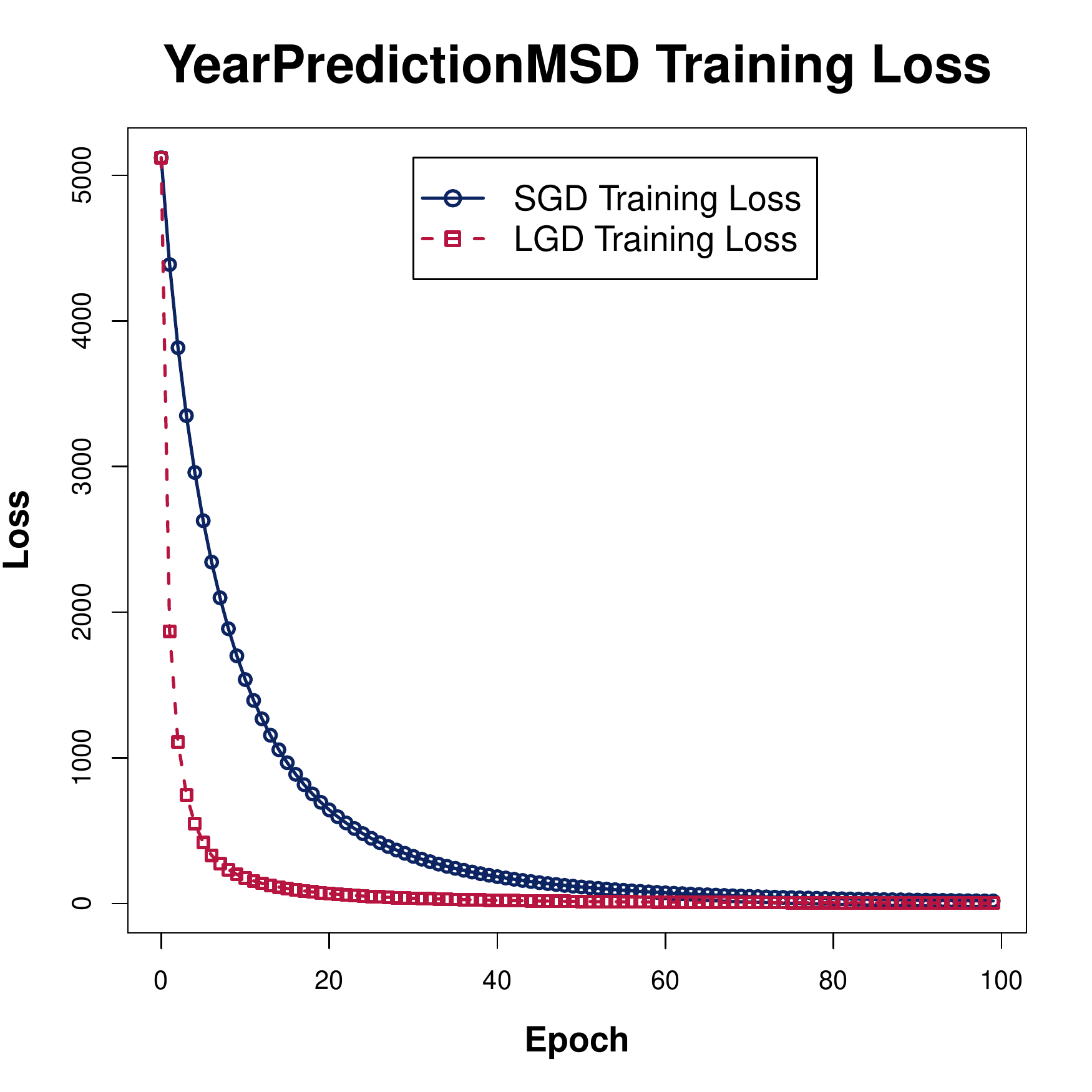}}
		\subfloat{	\includegraphics[width=0.25\textwidth]{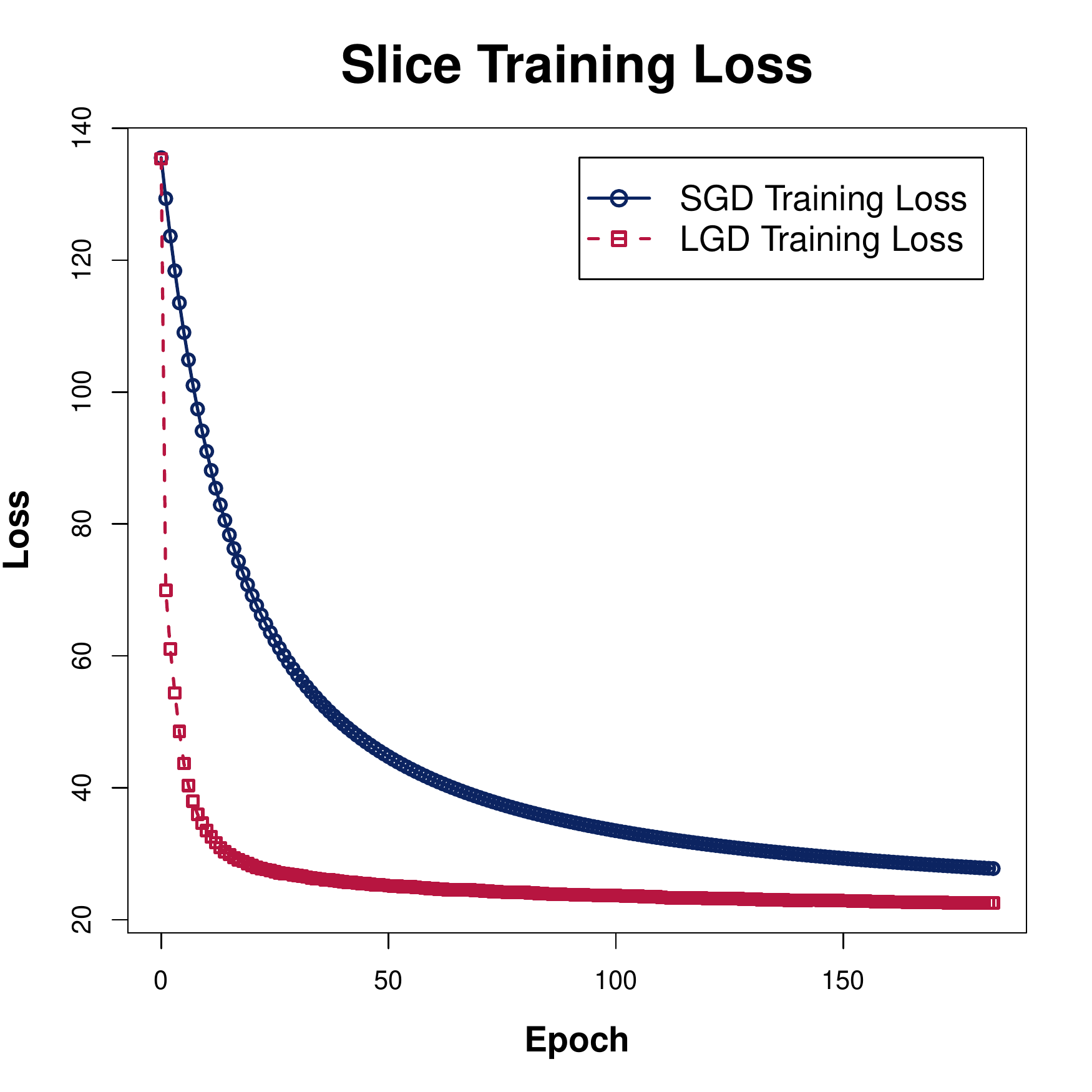}}
		
	\end{center}
	    \vspace{-0.1in}
	\caption{In subplots (a)(b), the comparisons of Wall clock training loss convergence are made between plain LGD (red lines) and plain SGD (blue lines). We can clearly see the big gap between them representing LGD converge faster than SGD even in time-wise. Subplots (d)(e) shows the results for same comparisons but in epoch-wise.}
	\label{plain_time}
	\vspace{-0.15in}
\end{figure*} 

\subsubsection{Near-Neighbor is Costlier than LSH-Sampling}
\label{sec:nnbor}
It might be tempting to use approximate near-neighbor search with query $\theta_t$ to find $x_i$. Near-neighbor search has been used in past~\cite{NIPS2011_4425} to speed up coordinate descent. However, near-neighbor queries are expensive due to candidate generation and filtering. It is still sub-linear in $N$ (and not constant). Thus, even if we see epoch-wise faster convergence, iterations with a near-neighbor query would be orders of magnitude slower than a single SGD iteration. Moreover, the sampling probability of $x$ cannot be calculated for near-neighbor search which would cause bias in the gradient estimates.

It is important to note that although LSH is heavily used for near-neighbor search, in our case, we use it as a sampler. For efficient near neighbor search, $K$ and $L$ grow with $N$~\citep{indyk1998approximate}. In contrast, the sampling works for any $K$ and $l$ \footnote{L represents the number of hash tables but l represents the number of hash tables used in one query} as small as one leading to only approximately 1.5 times the cost of SGD iteration (see section~\ref{experiments}). Efficient unbiased estimation is the key difference that makes sampling practical while near-neighbor query prohibitive. It is unlikely that a near-neighbor query would beat SGD in time, while sampling would. 
\newline

\subsection{Variance and Convergence Analysis}
\label{variance}

In section~\ref{Motivation}, we have discussed the convergence guarantees for SGD on convex functions under the assumptions of Lipschitz-continuous objective gradients with Lipschitz constant $L>0$. Now we strengthen the importance of reducing the variance of for faster convergence rate. It is well-known that GD converges at a linear rate while SGD converges to a slower sublinear convergence rate. The key reason for the much slower rate is due to the relatively large variance of the SGD estimator. Specifically, assume that the variance of SGD is bounded in a relatively smaller manner, the expected decrease in the objective function yielded by the $t^{th}$ step is bounded by~\cite{bottou2018optimization},
\begin{equation}
    \mathbb{E}(f(\theta^{t+1})) - f(\theta^t) \leq - \eta_t \Vert \nabla f(x, \theta_t) \Vert^2 + \eta_t^2 \frac{L}{2} \mathbb{E}[\Vert \nabla f(x_i, \theta_t) \Vert^2]
\end{equation}

If variance term $\mathbb{E}[\Vert \nabla f(x_i, \theta_t) \Vert^2]=0$, then SGD would have had linear convergence rate with a constant step size similar to GD. However, due to the stochasticity introduced by the gradient estimation, the variance, smaller step size is chosen and thereby slowing down the convergence \cite{bottou2008tradeoffs}. Clearly, lowering the variance of the estimator directly helps improve the convergence rate.

Therefore, in this section, we first prove that our estimator of the gradient is unbiased with bounded variance which is sufficient for convergence. We further argue about conditions under which LGD will have lower variance than SGD. Denote $S_b$ as the bucket that contains a set of samples which has the same hash value (same bucket) as the query and $x_m$ is the chosen sample in Algorithm ~\ref{algo2}. For simplicity we denote the query as $\theta_{t}$ and $p_i = cp(x_i,\theta_t)^K(1-cp(x_i,\theta_t)^K)^{l-1}$ as the probability of $x_i$ belonging to that bucket. 
\begin{theorem}
	\label{thm:thm1}
	The following expression is an unbiased estimator of the full gradient:
	\begin{align}\label{eq:thm1}
	&\text{Est}  =  \frac{1}{N} \sum_{i = 1}^N \mathbbm{1}_{x_i \in S_b} 	\mathbbm{1}_{(x_i= x_m | x_i \in S_b)} \frac {\nabla f(x_i, \theta_{t}) \cdot \left| S_b \right|}{p_i},
	&\mathbb{E}[\text{Est}] =   \frac{1}{N} \sum_{i = 1}^N \nabla f(x_i, \theta_{t}).
	\end{align}
\end{theorem}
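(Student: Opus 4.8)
The plan is to establish unbiasedness by a Horvitz--Thompson / importance-sampling argument: each summand in $\text{Est}$ is nonzero only when $x_i$ is the sample actually returned by Algorithm~\ref{algo2}, and it is divided by the probability of that event, so the expectation collapses to the true average of the gradients. The only care needed is that the normalising factor $p_i/|S_b|$ really is the probability with which $x_i$ is returned.

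First I would separate the two independent sources of randomness in Algorithm~\ref{algo2}: (i) the sequence of probed tables together with the buckets they hit and the contents of those buckets --- this determines the count $l$ and the final bucket $S_b$, hence $|S_b|$ and, since $\theta_t$ is fixed in this iteration and $cp(\cdot,\theta_t)$ is a deterministic function, also $p_i$; and (ii) the uniform draw of $x_m$ from $S_b$, which is the only randomness left once (i) is fixed. Let $\mathcal{F}$ be the $\sigma$-algebra generated by (i). Conditioning on $\mathcal{F}$ and using the tower property,
\[
\mathbb{E}[\text{Est}] = \frac1N\sum_{i=1}^N \nabla f(x_i,\theta_t)\,\mathbb{E}\!\left[\frac{\mathbbm{1}_{x_i\in S_b}}{p_i}\,|S_b|\,\mathbb{E}\big[\mathbbm{1}_{x_i=x_m}\,\big|\,\mathcal{F}\big]\right],
\]
and since, given $\mathcal{F}$, the sample $x_m$ is uniform on $S_b$, we have $\mathbb{E}[\mathbbm{1}_{x_i=x_m}\mid\mathcal{F}]=\mathbbm{1}_{x_i\in S_b}/|S_b|$. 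The two $|S_b|$ factors cancel and $\mathbbm{1}_{x_i\in S_b}^2=\mathbbm{1}_{x_i\in S_b}$, leaving
\[
\mathbb{E}[\text{Est}] = \frac1N\sum_{i=1}^N \nabla f(x_i,\theta_t)\,\mathbb{E}\!\left[\frac{\mathbbm{1}_{x_i\in S_b}}{p_i}\right].
\]
So it suffices to prove $\mathbb{E}\big[\mathbbm{1}_{x_i\in S_b}/p_i\big]=1$ for every $i$.

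This last step is the heart of the argument and the place I expect to need the most care. Here I would invoke the defining property of the LSH family --- that a fixed point and the query fall into the same $K$-tuple bucket of a probed table with probability $cp(x_i,\theta_t)^K$ --- to compute the joint law of the pair $(l,\mathbbm{1}_{x_i\in S_b})$ generated by the retry loop of Algorithm~\ref{algo2}. In the typical case $l=1$ (the first probed bucket is non-empty, which the paper arranges by taking $K$ small), $p_i=cp(x_i,\theta_t)^K=\Pr[x_i\in S_b]$ and the identity is immediate. For $l>1$ one must check that the factor $(1-cp(x_i,\theta_t)^K)^{l-1}$ exactly discounts the $l-1$ preceding rounds in which $x_i$ failed to collide with the query, which is precisely the per-sample weighting prescribed by the LSH-as-a-sampler construction of~\citep{spring2017new}; I would reduce the retry loop to that framework rather than reprove it. Granting the identity, summing over $i$ and pulling out $\tfrac1N$ yields $\mathbb{E}[\text{Est}]=\tfrac1N\sum_{i=1}^N\nabla f(x_i,\theta_t)$, the full gradient, which is the claim. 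Thus the obstacle is not the importance-sampling bookkeeping (routine conditioning) but the exact evaluation of the bucket-inclusion probability $p_i$ under the adaptive, retry-based sampler.
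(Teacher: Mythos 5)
Your proposal is correct and follows essentially the same route as the paper's proof: a Horvitz--Thompson argument that factors the expectation of the two indicators, with $\mathbb{E}[\mathbbm{1}_{x_i\in S_b}]=p_i$ and $\mathbb{E}[\mathbbm{1}_{x_i=x_m\mid x_i\in S_b}]=1/|S_b|$ cancelling the $|S_b|/p_i$ weight (your explicit conditioning on the table/bucket randomness is just a more careful rendering of the paper's independence claim). The one step you rightly flag as delicate --- that $p_i=cp(x_i,\theta_t)^K(1-cp(x_i,\theta_t)^K)^{l-1}$ is exactly the inclusion probability under the retry loop --- is not proved in the paper either; it is taken as the definition of $p_i$, inherited from the LSH-sampler framework of Spring and Shrivastava.
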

\begin{theorem}
	\label{thm:thm2}
	The Trace of the covariance of our estimator:
	\begin{align}\label{eq:thm2}
 	Tr(\Sigma(\text{Est})) 
    = \frac{1}{N^2} \sum_{i = 1}^N \frac {\Vert \nabla f(x_i, \theta_{t})\Vert_2 ^2 \cdot \sum_{j=1}^{N}\frac{\mathbb{P}(x_i, x_j \in S_b)}{p_i}}{p_i} - \frac{1}{N^2} \Vert (\sum_{i = 1}^N  \nabla f(x_i, \theta_{t}))\Vert_2^2
	\end{align}
\end{theorem}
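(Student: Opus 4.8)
The plan is to start from the standard identity $Tr(\Sigma(\text{Est})) = \mathbb{E}[\Vert \text{Est}\Vert_2^2] - \Vert \mathbb{E}[\text{Est}]\Vert_2^2$, which holds for any random vector since the trace of the covariance is the sum of coordinate-wise variances. Theorem~\ref{thm:thm1} already supplies the second term: $\Vert\mathbb{E}[\text{Est}]\Vert_2^2 = \frac{1}{N^2}\Vert\sum_{i=1}^N \nabla f(x_i,\theta_t)\Vert_2^2$, which is exactly the subtracted quantity in~\eqref{eq:thm2}. So the entire content of the proof is the evaluation of the second moment $\mathbb{E}[\Vert\text{Est}\Vert_2^2]$.

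First I would note that, although $\text{Est}$ is written as a sum over all $i$, for any fixed realization of the hashing and of the drawn sample $x_m$ exactly one summand is nonzero, namely $i=m$: the product $\mathbbm{1}_{x_i\in S_b}\mathbbm{1}_{(x_i=x_m\mid x_i\in S_b)}$ is precisely the indicator of the event $\{x_i=x_m\}$, and two distinct indices cannot simultaneously equal $x_m$. Hence, expanding $\Vert\text{Est}\Vert_2^2$ as a double sum, every off-diagonal $i\ne j$ term vanishes and the diagonal indicators are idempotent, leaving
\[
\Vert\text{Est}\Vert_2^2 = \frac{1}{N^2}\sum_{i=1}^N \mathbbm{1}_{x_i\in S_b}\,\mathbbm{1}_{(x_i=x_m\mid x_i\in S_b)}\,\frac{\Vert\nabla f(x_i,\theta_t)\Vert_2^2\,\vert S_b\vert^2}{p_i^2}.
\]
Next I would take expectations term by term, conditioning on the realized bucket $S_b$ (equivalently, on the chosen table and its hash value). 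Given that bucket with $x_i\in S_b$, the sample $x_m$ is drawn uniformly from $S_b$, so $\mathbb{P}(x_i=x_m\mid S_b)=1/\vert S_b\vert$; this cancels one power of $\vert S_b\vert$ and reduces the $i$-th term to $\tfrac{1}{p_i^2}\Vert\nabla f(x_i,\theta_t)\Vert_2^2\,\mathbb{E}\big[\mathbbm{1}_{x_i\in S_b}\,\vert S_b\vert\big]$. Writing $\vert S_b\vert=\sum_{j=1}^N \mathbbm{1}_{x_j\in S_b}$ and using linearity of expectation gives $\mathbb{E}\big[\mathbbm{1}_{x_i\in S_b}\,\vert S_b\vert\big]=\sum_{j=1}^N \mathbb{P}(x_i\in S_b,\ x_j\in S_b)=\sum_{j=1}^N \mathbb{P}(x_i,x_j\in S_b)$. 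Substituting back yields $\mathbb{E}[\Vert\text{Est}\Vert_2^2] = \frac{1}{N^2}\sum_{i=1}^N \frac{\Vert\nabla f(x_i,\theta_t)\Vert_2^2}{p_i^2}\sum_{j=1}^N \mathbb{P}(x_i,x_j\in S_b)$, which is exactly the first term of~\eqref{eq:thm2} after the trivial rewriting $\frac{1}{p_i^2}\sum_j \mathbb{P}(x_i,x_j\in S_b)=\frac{1}{p_i}\sum_j \frac{\mathbb{P}(x_i,x_j\in S_b)}{p_i}$. Subtracting the squared mean then gives the claimed formula.

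I expect the main obstacle to be the bookkeeping in the conditioning step: $\vert S_b\vert$ is itself a random quantity, correlated with which data points fall into the bucket, so one must condition on the \emph{entire} bucket before invoking the uniform-in-bucket draw, rather than treating $\vert S_b\vert$ as fixed. A secondary subtlety is making the probabilistic model precise enough that $\mathbb{P}(x_i\in S_b)=p_i$ and the pairwise quantities $\mathbb{P}(x_i,x_j\in S_b)$ are well defined — in particular that $p_i = cp(x_i,\theta_t)^K\bigl(1-cp(x_i,\theta_t)^K\bigr)^{l-1}$ is the marginal collision probability under the ``probe a random table until the bucket is nonempty'' rule of Algorithm~\ref{algo2}. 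I would either import this marginal directly from~\citep{spring2017new} or, as the statement implicitly does, simply take these collision probabilities as the primitive objects of the analysis, so that no further justification is needed beyond the conditioning argument above.
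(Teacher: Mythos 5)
Your proposal is correct and follows essentially the same route as the paper's own proof: both reduce to computing $\mathbb{E}[\text{Est}^T\text{Est}]$, kill the off-diagonal terms of the double sum because only one sample is drawn, cancel one factor of $\vert S_b\vert$ via the uniform-in-bucket draw, and identify $\mathbb{E}\bigl[\mathbbm{1}_{x_i\in S_b}\vert S_b\vert\bigr]=\sum_j \mathbb{P}(x_i,x_j\in S_b)$ by writing $\vert S_b\vert$ as a sum of indicators. Your explicit remark that one must condition on the entire (random) bucket before invoking the uniform draw is a point the paper glosses over, but it is the same argument.
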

The trace of the covariance of LGD is the total variance of the descent direction. The variance can be minimized when
the sampling probability of $x_i$ is proportional to the $L_2$-norm of the gradient we
mentioned in Section \ref{as}. The intuition of the advantage of LGD estimator comes from sampling $x_i$ under a distribution monotonic to the optimal one. 
We first make a simple comparison of the variance of LGD with that of SGD theoretically and then in Section \ref{experiments} and we would further empirically show the drastic superiority of LGD over SGD.  
\begin{lemma}\label{lemma1}
	The Trace of the covariance of LGD's estimator is smaller than that of SGD's estimator if
	\begin{align} \label{lemma}
	\frac{1}{N} \sum_{i = 1}^N \frac {\Vert \nabla f(x_i, \theta_{t})\Vert_2 ^2 \cdot \sum_{j=1}^{N}\frac{\mathbb{P}(x_i, x_j \in S_b)}{p_i}}{p_i} &< \sum_{i=1}^N \Vert \nabla f(x_i, \theta_{t})\Vert_2^2,
	\end{align}		
\end{lemma}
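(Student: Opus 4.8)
The plan is to compare the trace of the covariance of the LGD estimator, as given by Theorem~\ref{thm:thm2}, with the corresponding quantity for vanilla SGD, and observe that the inequality in the statement is exactly what is needed for the comparison to go through. First I would write down the trace of the covariance of the single-sample SGD estimator: since SGD samples $x_i$ uniformly and rescales by $N$ (so that $\text{Est}_{SGD} = \nabla f(x_i,\theta_t)$ in expectation equals the full gradient), a direct computation gives
\begin{equation}
Tr(\Sigma(\text{Est}_{SGD})) = \frac{1}{N}\sum_{i=1}^N \Vert \nabla f(x_i,\theta_t)\Vert_2^2 - \frac{1}{N^2}\Vert \sum_{i=1}^N \nabla f(x_i,\theta_t)\Vert_2^2 .
\end{equation}
The second (negative) term is identical to the one appearing in Theorem~\ref{thm:thm2}, so it cancels out of the comparison entirely.

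Next I would subtract: $Tr(\Sigma(\text{Est}_{LGD})) < Tr(\Sigma(\text{Est}_{SGD}))$ holds if and only if
\begin{equation}
\frac{1}{N^2}\sum_{i=1}^N \frac{\Vert \nabla f(x_i,\theta_t)\Vert_2^2 \cdot \sum_{j=1}^N \frac{\mathbb{P}(x_i,x_j\in S_b)}{p_i}}{p_i} < \frac{1}{N}\sum_{i=1}^N \Vert \nabla f(x_i,\theta_t)\Vert_2^2 ,
\end{equation}
and multiplying both sides by $N$ yields precisely the claimed inequality~\eqref{lemma}. So the proof is essentially just the observation that the shared negative term cancels and the remaining comparison is the stated hypothesis. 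I would also note briefly (for the reader's sanity) that $\mathbb{P}(x_i,x_i\in S_b)=\mathbb{P}(x_i\in S_b)=p_i$, so the diagonal $j=i$ terms of the inner sum contribute $\Vert\nabla f(x_i,\theta_t)\Vert_2^2/p_i$, making transparent the role of the collision probabilities $p_i$ in inflating or deflating the variance relative to the uniform case.

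The only real subtlety — and the step I would be most careful about — is making sure the SGD baseline is stated in the same normalization as the LGD estimator, i.e. that both are unbiased estimators of $\frac{1}{N}\sum_i \nabla f(x_i,\theta_t)$ rather than of $\sum_i \nabla f(x_i,\theta_t)$; the factors of $N$ and $N^2$ in Theorem~\ref{thm:thm2} must be matched exactly against the $\frac1N$ and $\frac1{N^2}$ in the SGD expression, otherwise the cancellation of the negative term is illusory. Once the normalization is pinned down, there is no genuine obstacle: the lemma is a one-line consequence of Theorem~\ref{thm:thm2} together with the elementary variance formula for uniform sampling. If desired, I would append a short remark interpreting condition~\eqref{lemma}: since by Jensen/rearrangement the left side is minimized when $p_i \propto \Vert \nabla f(x_i,\theta_t)\Vert_2$, and LSH sampling makes $p_i$ monotonically increasing in $\Vert \nabla f(x_i,\theta_t)\Vert_2$, the inequality is expected to hold comfortably in practice even though it need not hold for adversarial gradient configurations.
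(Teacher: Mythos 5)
Your proposal is correct and follows exactly the same route as the paper's proof: write down the trace of the covariance of the uniform single-sample SGD estimator, observe that the common negative term $\frac{1}{N^2}\Vert\sum_i \nabla f(x_i,\theta_t)\Vert_2^2$ cancels against the corresponding term in Theorem~\ref{thm:thm2}, and note that the remaining comparison of the leading terms is precisely condition~\eqref{lemma}. Your added remarks on normalization and on the diagonal terms $\mathbb{P}(x_i,x_i\in S_b)=p_i$ are sound but not needed beyond what the paper does.
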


We analyze a simple case that if the data is uniformly distributed, such that every collision probability is the same. It is trivial to see that the trace of the covariance of LGD is exactly the same as SGD from equation~\ref{lemma}. Intuitively, this happens when all the gradient norms are equal. Therefore, SGD would perform well if the data is uniform, but this is unlikely in practice.

Observe that when the gradients are large, $p_i$ is also large due to the monotonicity of $p_i$ with gradient norms. As a result, the term $\frac{\sum_{j=1}^{N}\frac{\mathbb{P}(x_i, x_j \in S_b)}{p_i}}{p_i}$ is likely to be much smaller than $N$ making the corresponding component of LHS (left hand side) smaller favoring LGD estimator. In a real scenario, with more of a power-law behavior, we have few large gradients, and most other gradients would be uniform in expectation. In such cases, We can expect LGD to have smaller variance. Rigorous characterization of distributions where LGD is better than SGD is hard due to correlations between gradients and collision probabilities $p$s as well as the size of buckets. ~\cite{charikarhashing} shows that such analysis is only possible under several query and data specific assumptions. A rigorous analysis in the gradient descent settings where both query and data distribution changes in every iteration is left for the future work. 

Here we provide the analysis based on assumptions on the data. We first upper bound the left side of equation~\ref{lemma} by,
\begin{equation}\label{upperbound}
    \frac{1}{N} \sum_{i = 1}^N \frac {\Vert \nabla f(x_i, \theta_{t})\Vert_2 ^2 \cdot \sum_{j=1}^{N}\frac{\mathbb{P}(x_i, x_j \in S_b)}{p_i}}{p_i} \leq \sum_{i = 1}^N \Vert \nabla f(x_i, \theta_{t})\Vert_2 ^2 \cdot \frac{\sum_{j=1}^{N} p_j}{p_i^2N}
\end{equation}
Assume the normalized collision probability follows Pareto distribution~\cite{arnold2008pareto}, which is a power-law probability distribution. If $X$ is a random variable with a Pareto (Type I) distribution, then the probability that $X$ is greater than some number $x$, is 
$
   Pr(X>x) = 
    \begin{cases}
      (\frac{x_m}{x})^{\alpha}, & \text{if}\ x>x_m \\
      1, &  \text{if } x\leq x_m
    \end{cases}
    $
  where $x_m$ is the minimum possible value of X, and $\alpha$ is a positive parameter.
Then $\frac{\sum_{j=1}^{N} p_j}{N}$ (mean) is $\mu_{p} =  \frac{\alpha x_m}{\alpha-1}$. Assume $p_i$ is sorted in descending order and let first separate the right side of equation~\ref{upperbound} into two parts, $\sum_{i = 1}^k \Vert \nabla f(x_i, \theta_{t})\Vert_2 ^2 \cdot \frac{\mu_{p}}{p_i^2} + \sum_{i = k+1}^N \Vert \nabla f(x_i, \theta_{t})\Vert_2 ^2 \cdot \frac{\mu_{p}}{p_i^2}$, where k is the index that separates the summation based on $\mu_p \leq p_i^2$ or $\mu_p > p_i^2$. Then we equation~\ref{lemma} becomes,
\begin{equation}
    \sum_{i = 1}^k \Vert \nabla f(x_i, \theta_{t})\Vert_2 ^2 \cdot (1-\frac{\mu_{p}}{p_i^2}) > \sum_{i = k+1}^N \Vert \nabla f(x_i, \theta_{t})\Vert_2 ^2 \cdot (\frac{\mu_{p}}{p_i^2}-1),
\end{equation}
making lemma~\ref{lemma1} a reasonable condition if the distribution of the gradient norm also follows power-law. Because the large gradient norm terms will be on the LHS and the small gradient terms are on the RHS and under power-law assumption the small gradient norms terms drop off extremely fast.

In practice, we can tune parameter $K$ for our hashing scheme in LGD, which controls the values of $p_i$. With this tuning, we achieve better controls over relative decays of $p_i$ leading to more possibilities of better variance.
Recall that the collision probability $p_i = cp(x_i,\theta_t)^K(1-cp(x_i,\theta_t)^K)^{l-1}$. Note that $l$ here, according to Algorithm ~\ref{algo2} is the number of tables that have been utilized by the sampling process. In most practical cases and also in our experiment, $K$ and $l$ are relatively small. $L$, which is the total number of hash tables, should be large to ensure enough independence across samples, but it does not contribute to the sampling time (See Alg.~\ref{algo2}). Overall, our experiments show that LGD is efficient and generally achieves smaller variance than SGD by setting small enough values of $K$ and $l$ making the sampling process as efficient as SGD.

\textbf{LGD for Logistic Regression}
We can derive a similar form of LGD for logistic regression. Noted that the label $y_i \in \{-1, +1\}$. The loss function of logistic regression can be written as,
$
L(\theta_t) = \frac{1}{N} \sum_{i=1}^{N} ln(1+e^{-y_i  \theta_t x_i}),s
$
where the $l_2$ norm of the gradient can be derived as,
\begin{align}
    \Vert \nabla L(\theta_t)_i \Vert_2 =  \frac{ \Vert x_i \Vert_2 }{e^{y_i \theta_t x_i}+1}  = \frac{1 }{e^{y_i \theta_t x_i}+1},
\end{align}
when $x_i$ is normalized to have unit norm. Similar to linear regression, we get two quantities in the inner product, $y_i \cdot x_i$ and $-\theta_t$. The inner product is monotonic to $\frac{1 }{e^{y_i \theta_t x_i}+1}$, which is the $l_2$ norm of the gradient. 
To apply our LGD framework for estimating the gradient, we can preprocess $y_i \cdot x_i$ into hash tables and query with $-\theta_t$ for efficient and adaptive sampling.



\section{Experiments}
\label{experiments}
Linear regression is a basic and commonly used supervised machine learning algorithm for prediction. Deep learning models recently become popular for their state-of-the-art performance on Natural Language Processing (NLP) and also Computer Vision tasks. Therefore, we chose both linear regression and deep learning models as the target experiment tasks to examine the effectiveness of our algorithm. We follow the following four steps: (1) Compare the quality of samples retrieved by LGD and that of samples retrieved by SGD. According to Section ~\ref{3.1}, high quality samples have larger gradient $L_2$ norm. (2) Compare the convergence of linear regression task in time using SGD and LGD. (3) Compare the convergence of linear regression task in time using SGD with AdaGrad and LGD with AdaGrad. (4) Compare the epoch-wise convergence of NLP tasks between SGD and LGD in with BERT \citep{devlin2018bert}. 

{\bf Dataset: } We used three large regression, YearPredictionMSD ~\citep{Lichman:2013},Slice ~\citep{Lichman:2013}, UJIIndoorLoc ~\citep{7275492},  and two NLP benchmarks, MRPC ~\citep{dolan2005automatically}, RTE ~\citep{wang2018glue}. The details are shown in Table \ref{datatable} and Appendix.

\subsection{Linear Regression Tasks}\footnote{Note that in the experiments, we show the plots of two datasets and the third one is in the appendix.}
Three regression datasets were preprocessed as described in Section \ref{3.2}. Note that for all the experiments, the choice of the gradient decent algorithm was the same. For both SGD and LGD, the only difference in the gradient algorithm was the gradient estimator. For SGD, a random sampling estimator was used, while for LGD, the estimator used the adaptive estimator. We used fixed values $K=5$ and $L=100$ for all the datasets. $l$ is the number of hash tables that have been searched before landing in a non-empty bucket in a query. In our experiments $l$ is almost always as low as 1. $L$ only affects preprocessing but not sampling. Our hash function was simhash (or signed random projections) and we used sparse random projections with sparsity $\frac{1}{30}$ for speed. We know that epoch-wise convergence is not a true indicator of speed as it hides per epoch computation. Our main focus is convergence with running time, which is a better indicator of computational efficiency.
\begin{wrapfigure}{r}{0.58\textwidth}
\vspace{-0.15in}
\caption{Statistics Information for Datasets}
\label{datatable}
\begin{center}
\begin{small}
\begin{sc}
\begin{tabular}{lcccr}
\specialrule{0em}{0pt}{-8pt}
\toprule
Data set & Training & Testing & Dimension \\
\midrule
YearMSD    & 463,715& 51,630& 90 \\
Slice & 53,500 &42,800 & 74\\
UJIIndoorLoc    & 10,534 &10,534&529 \\
MRPC    & 3669 & 409   & N/A     \\
RTE     & 2491& 278 & N/A\\
\bottomrule
\end{tabular}
\end{sc}
\end{small}
\end{center}
\vspace{-0.2in}
\end{wrapfigure}
To the best of our knowledge, there is no other adaptive estimation baseline, where the cost of sampling per iteration is less than linear $O(N)$. Since our primary focus would be on wall clock speedup, no $O(N)$ estimation method would be able to outperform $O(1)$ SGD (and LGD) estimates on the same platform. From section~\ref{sec:nnbor}, even methods requiring a near-neighbor query would be too costly (orders of magnitude) to outperform SGD from computational perspective.


\begin{figure*} [ht!]
		\vspace{-0.2in}
	\begin{center}
		\subfloat{\includegraphics[width=0.25\textwidth]{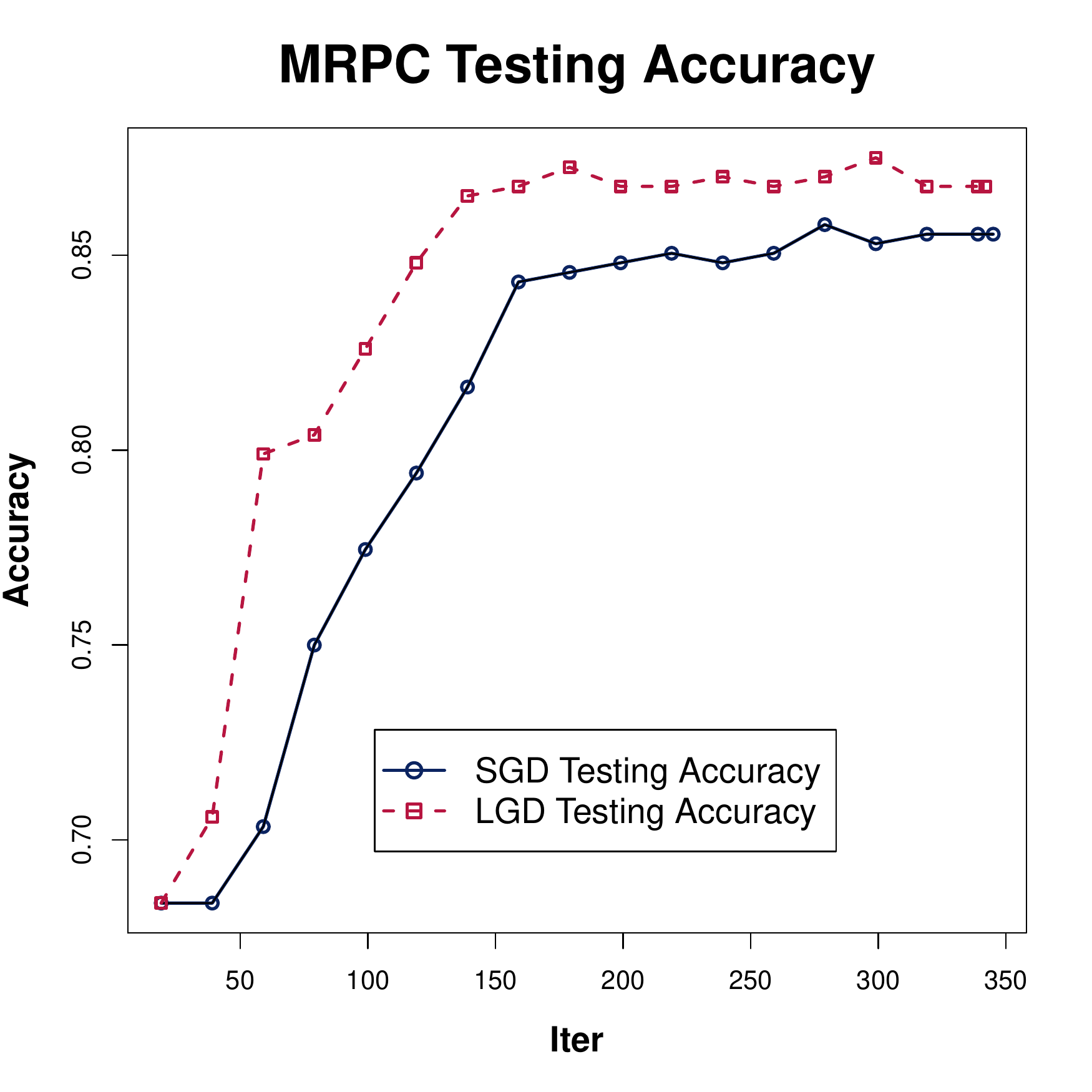}}
		\subfloat{	\includegraphics[width=0.25\textwidth]{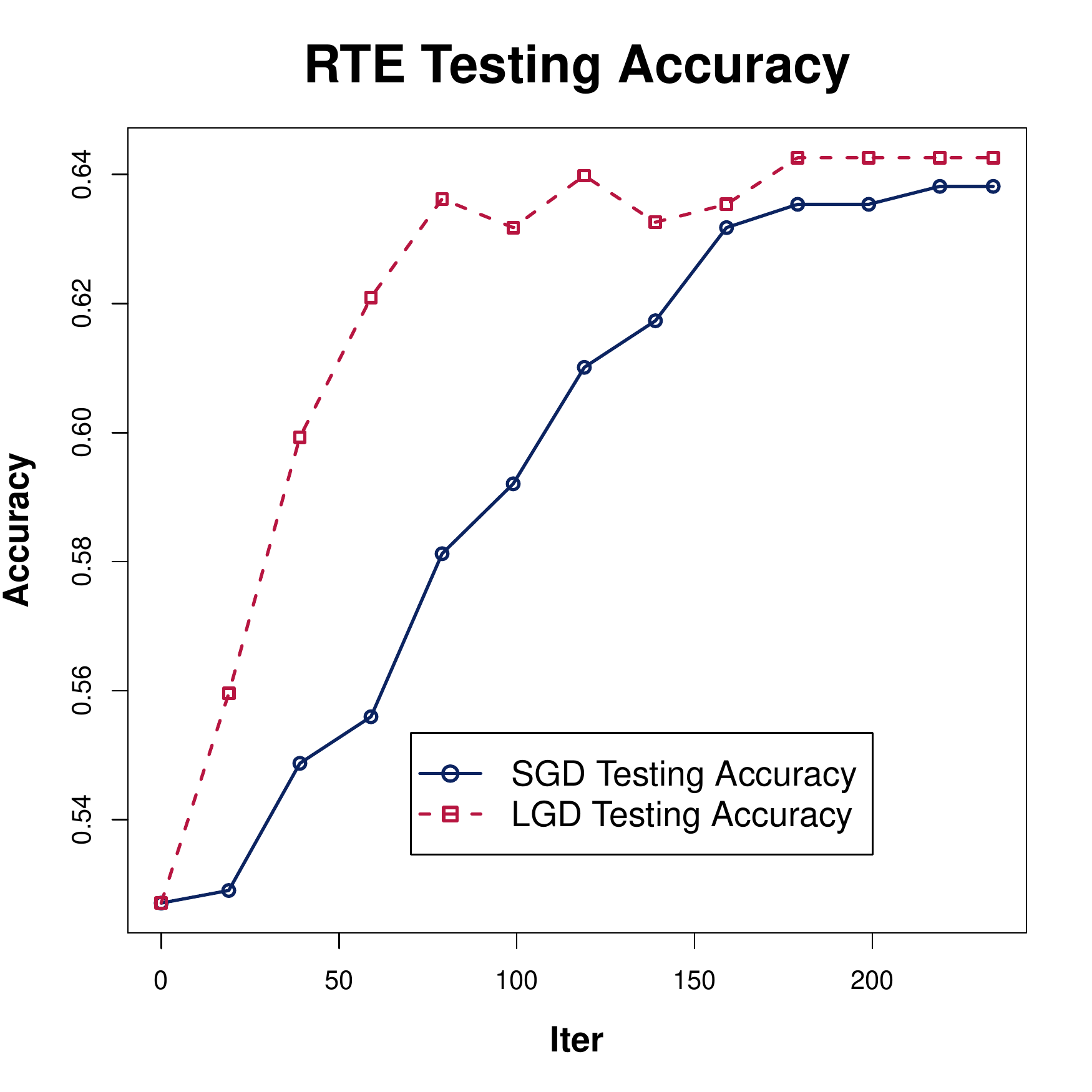}}
		\subfloat{\includegraphics[width=0.25\textwidth]{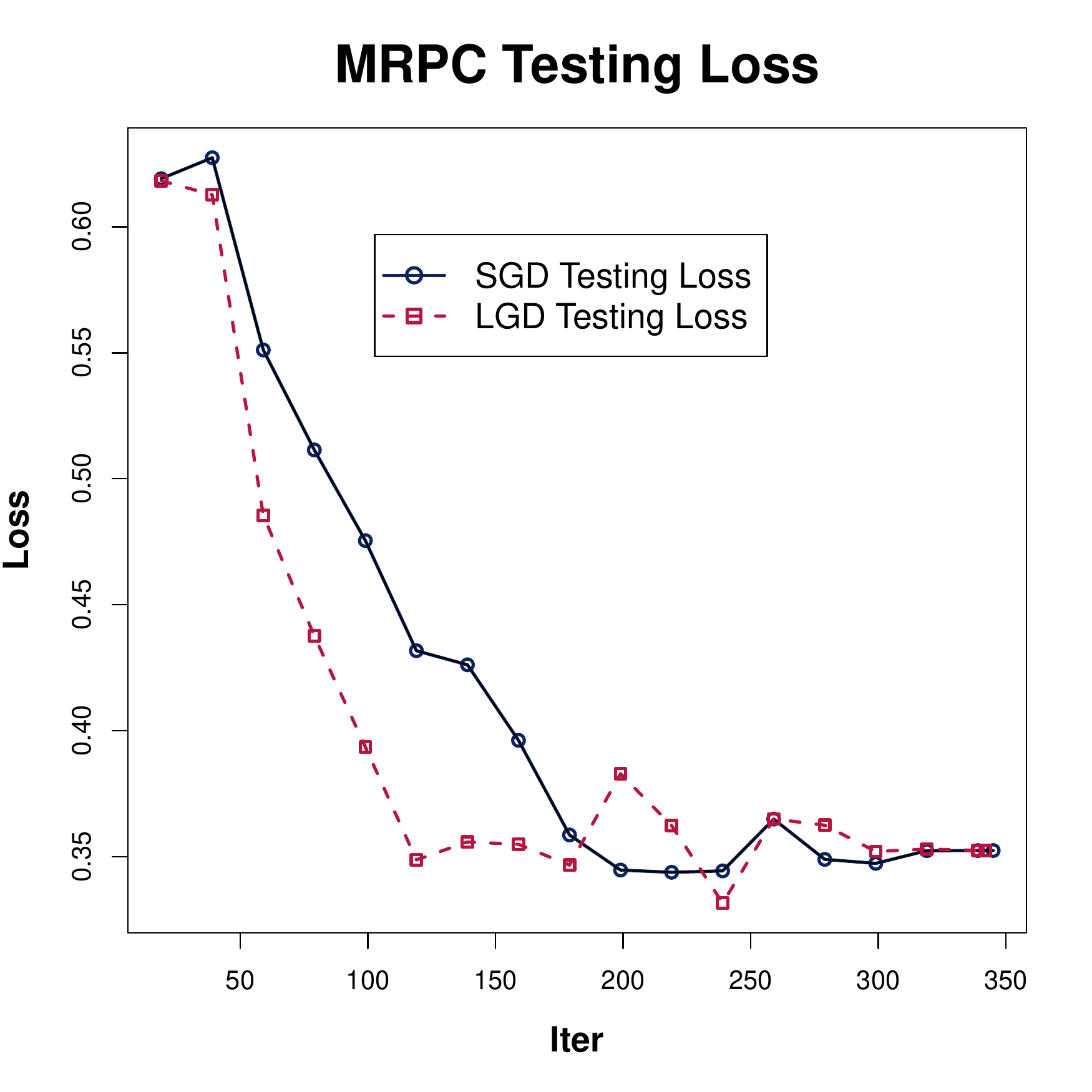}}
		\subfloat{	\includegraphics[width=0.25\textwidth]{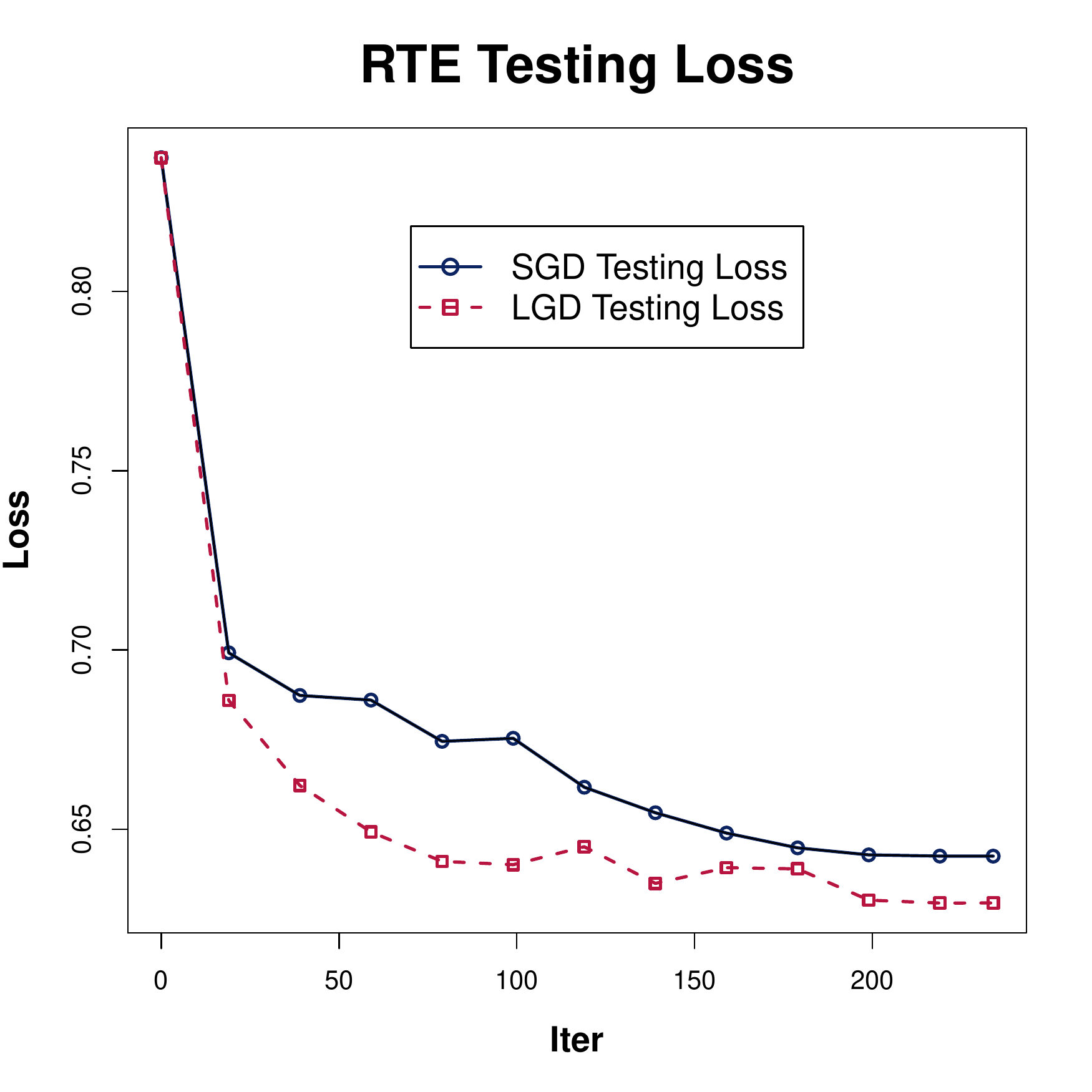}}
	\end{center}
\vspace{-0.2in}
	\caption{In subplots (a)(b), the comparisons of epoch-wise testing accuracy convergence are made between LGD (red lines) and SGD (blue lines) separately in two NLP benchmarks. We can see the big gap between them representing LGD converge faster than SGD. Subplots (c)(d) shows similar comparison over testing loss.}
	\label{bert_epoch}
\end{figure*}

\textbf{ LGD, SGD vs. True Gradient}
In the first experiment, as a sanity check, we first verify weather LGD samples data point with probability monotonic to $L_2$ norm of the gradient mentioned in section \ref{3.1}. In order to do that, we freeze the optimization at an intermediate iteration and use the $\theta$ at that moment to sample data points with LGD as well as SGD to compute gradient $L_2$ norm separately. We observe that if freezing at the beginning iterations, the difference of average gradient norm between LGD and SGD samples is not obvious. This is not surprising because model $\theta$ is initialized randomly. To visualize the quality difference of SGD and LGD samples more clearly, we choose to freeze after $\frac{1}{4}$ epoch of cold start. The upper three plots in Figure \ref{gradient} show the comparison of the sampled gradient norm of LGD and SGD. X-axis represents the number of samples that we averaged in the above process. It is obvious that LGD sampled points have larger gradient norm than SGD ones consistently across all three datasets.

In addition, we also do a sanity check that if empirically, the chosen sample from LGD get better estimation of the true gradient direction than that of SGD. Again, we freeze the program at an intermediate iteration like the experiments above. Then we compute the angular similarity of full gradient (average over the training data) direction with both LGD and SGD gradient direction, where,
$Similarity=1- \frac{cos^{-1} \frac{x \cdot y}{\Vert x\Vert_2 \Vert y\Vert_2} }{\pi}.$
From the right two plots in Figure \ref{gradient}, we can see that in average, LGD estimated gradient has smaller angle (more aligned) to true gradient than SGD estimated gradient.The variance of both norm and cosine similarity reduce when averaging them over samples as shown in plots.

\begin{wrapfigure}{r}{0.5\textwidth}
\vspace{-0.4in}
\begin{center}
		\subfloat{\includegraphics[width=0.25\textwidth]{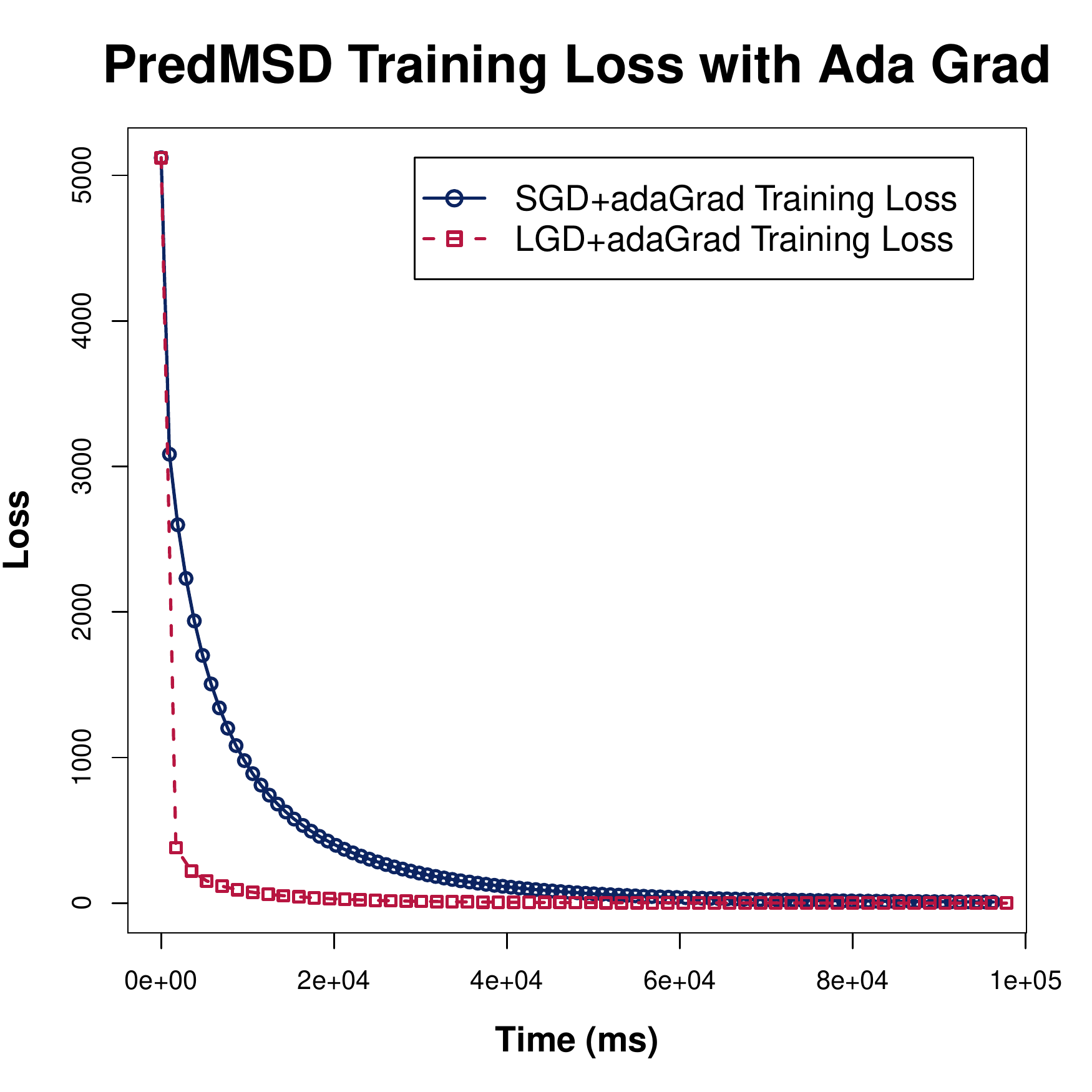}}
		\subfloat{	\includegraphics[width=0.25\textwidth]{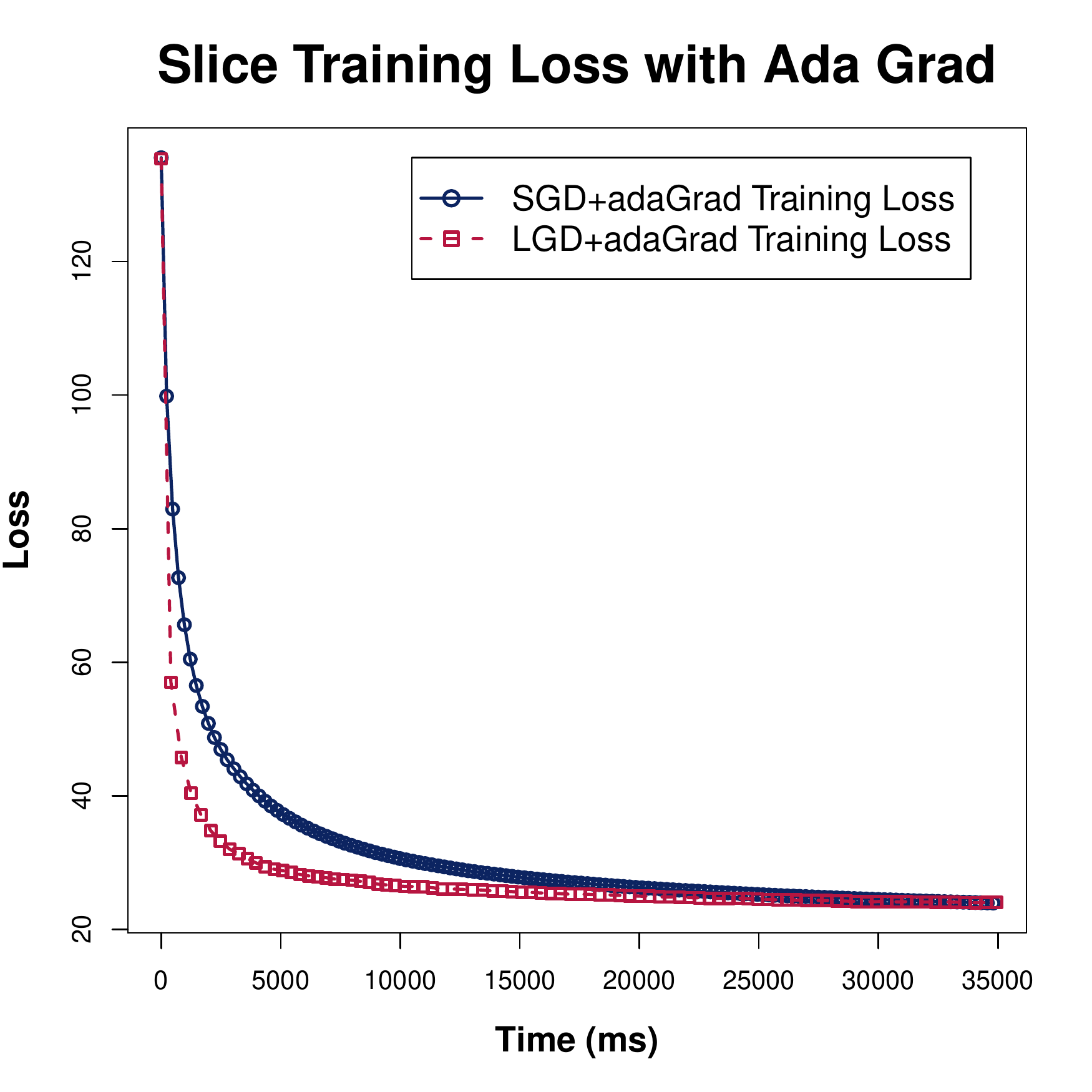}}
		
		
		
	\end{center}
	\vspace{-0.2in}
	\caption{The comparisons of Wall clock training loss convergence are made between LGD+adaGrad and SGD+adaGrad separately in three datasets. We can again see the similar gap between them representing LGD converge faster than SGD in time-wise. Epoch-wise comparisons are in appendix.}
	\label{ada_time}
\end{wrapfigure}
\textbf{ LGD vs. SGD}  
In this section, we compare vanilla SGD with LGD, i.e., we use simple SGD with fixed learning rate. This basic experiment aims to demonstrate the performance of pure LGD and SGD without involving other factors like $L_1/L_2$ regularization on linear regression task. In such a way, we can quantify the superiority of LGD more easily. We tried a sweep of initial step size from $1e^{-5}$ to $1e^{-1}$ and choose the one that will lead to convergence with LGD and SGD. Figure ~\ref{plain_time} shows the decrease in the squared loss error with epochs. Blue lines represent SGD and red lines represent LGD. It is obvious that LGD converges much faster than SGD in both training and testing loss comparisons. This is not surprising with the claims in Section ~\ref{time} and theoretical proof in Section ~\ref{variance}. Since LGD uses slightly more computations per epoch than SGD does, it is hard to defend if LGD gains enough benefits simply from the epoch-wise comparisons. We therefore also show the decrease in error with wall clock time also in figure \ref{plain_time}. Wall clock time is the actual quantification of speedups. Again, on every single dataset, LGD shows  faster time-wise convergence as well.
%
		
As argued in section \ref{as}, our LGD algorithm is complimentary to any gradient-based optimization algorithm. We repeated the first experiment but using AdaGrad~\citep{adagrad} instead of plain SGD. 
Figure ~\ref{ada_time} shows running time comparisons on LGD and SGD training convergence. The trends as expected are similar to those of LGD vs. SGD. LGD with AdaGrad outperforms AdaGrad (SGD) estimates of gradients both epoch-wise and time-wise.

\subsection{BERT Tasks}
BERT \citep{devlin2018bert}, a recent popular language representation model, is designed to pre-train deep bidirectional representations that can be fine-tuned jointly with just one additional layer to create state-of-the-art models for various tasks. To strengthen the power of LGD, we adapted LGD in BERT for several natural language processing (NLP) tasks. The implementation details were included in the appendix. We used two popular benchmarks in NLP, MRPC and RTE, and replicated the same experiments setting in BERT paper. For the pre-trained model, we chose $BERT_{base}$ because it performs more stable for such smaller downstream tasks. For each task, we ran fine-tunings for 3 epochs with batch size 32 and used Adam optimizer with initial learning rates $2e$. As for LSH parameter, we chose $K=7$, $L=10$. Results are presented in Figure \ref{bert_epoch}. We show that LGD outperformed SGD in epoch-wise convergence on both tasks with a substantial margin. It is encouraging because in the previous section, we have shown that even with the hashing overhead, LGD leads to faster time-wise convergence. We do not explore the time-wise convergence comparison between LGD and SGD in current tasks because BERT is implemented in Tensorflow \citep{tensorflow2015-whitepaper} and Pytorch \citep{paszke2017automatic} on GPU. We currently only have the CPU implementation of LSH. Therefore running LGD algorithm on BERT creates an extra overhead of switching between GPUs and CPUs. An efficient GPU implementation of LGD can be an independent research interest for future work. This section is to demonstrate the power of LGD in non-linear models.

%


\section{Conclusion}
In this paper, we proposed a novel LSH-based sampler with a reduction to the gradient estimation variance. We achieved it by sampling with probability proportional to the $L_2$ norm of the instances gradients leading to an optimal distribution that minimizes the variance of estimation. More remarkably, LGD is as computationally efficient as SGD but achieves faster convergence not only epoch-wise but also time-wise.

\section*{Acknowledgments}
We thank the reviewers for their valuable comments. We also thank Ben Benjamin Coleman for the helpful discussions. The work was supported by NSF-1652131, NSF-BIGDATA 1838177, AFOSR-YIPFA9550-18-1-0152, Amazon Research Award, and ONR BRC grant for Randomized Numerical Linear Algebra.

\bibliography{arxiv}
\bibliographystyle{plain}

\appendix

 \section{More details for LSH}
 \label{background}
 In this section, we first describe a recent advancement in the theory of sampling and estimation using locality sensitive hashing (LSH)~\citep{indyk1998approximate} which will be heavily used in our proposal. Before we get into the details of sampling, let us revise the two-decade-old theory of LSH.

 \subsection{Locality Sensitive Hashing (LSH)}
 \label{sec:LSH}
 This section briefly reviews LSH for large-scale nearest-neighbor search. Please refer to~\cite{Proc:Indyk_STOC98,indyk2006polylogarithmic} for more details.

 A favorite sub-linear time algorithm for approximating the nearest-neighbor search uses the underlying theory of \emph{Locality Sensitive Hashing}~\cite{Proc:Indyk_STOC98}. LSH is a family of functions, with the property that similar input objects in the domain of these functions have a higher probability of colliding in the range space than non-similar ones.
 In formal terms, consider $\mathcal{H}$ a family of hash functions mapping $\mathbb{R}^\mathbb{d}$ to some set $\mathcal{S}$.
 \begin{definition} [\label{def:lsh}\bf LSH Family]\ A family $\mathcal{H}$ is called\\
 $(S_0,cS_0,p_1,p_2)$-sensitive if for any two point $x,y \in \mathbb{R}^\mathbb{d}$  and $h$ chosen uniformly from $\mathcal{H}$ satisfies the following:
 	\begin{itemize}
 		\item if $Sim(x,y)\ge S_0$ then ${Pr}(h(x) = h(y)) \ge p_1$
 		\item if $ Sim(x,y)\le cS_0$ then ${Pr}(h(x) = h(y)) \le p_2$
 	\end{itemize}
 \end{definition}
 For approximate nearest neighbor search typically, $p_1 > p_2$ and $c < 1$ is needed. 
 LSH allows us to construct data structures that give provably efficient query time algorithms for approximate nearest-neighbor problem with the associated similarity measure.

 \begin{figure} [ht]
 	\begin{center}
 	\includegraphics[width=2.5in]{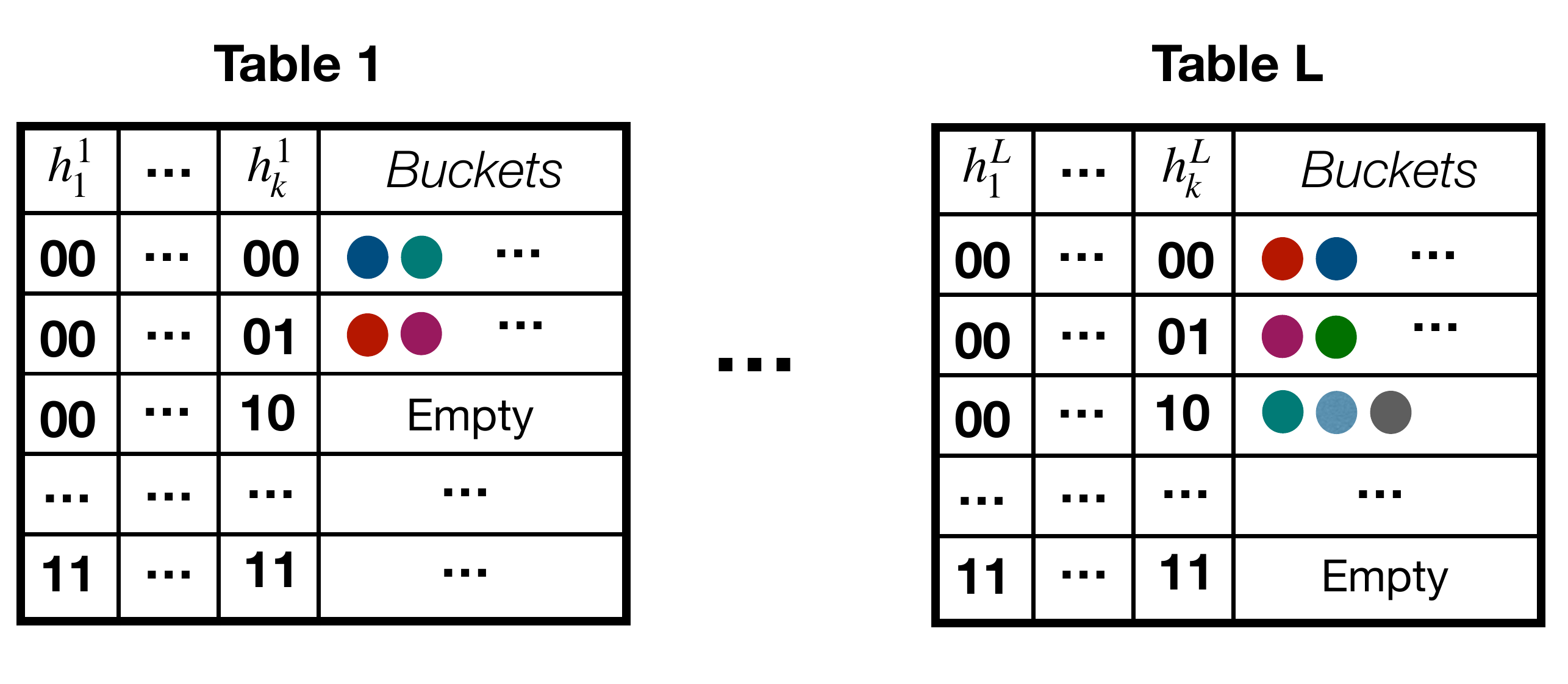}
 	\end{center}
 	\caption{Example of LSH hash tables}
 	\label{lsh}
 		\vspace{-0.1in}
 \end{figure} 

 One sufficient condition for a hash family $\mathcal{H}$ to be a LSH family is that the \emph{\bf collision probability} ${Pr}_\mathcal{H}(h(x) = h(y))$ is a monotonically increasing function of the similarity $Sim$, i.e. \begin{equation}\label{eq:monotonic}{Pr}_\mathcal{H}(h(x) = h(y)) = \mathcal{M}(Sim(x,y)),\end{equation} where $\mathcal{M}$ is a monotonically increasing function. Essentially, similar items are more likely to collide with each other under the same hash fingerprint. In fact, most of the popular known LSH family, such as SimHash (Section~\ref{sec:SimHash}), actually satisfies this stronger property. It can be noted that Equation~\ref{eq:monotonic}  automatically guarantees the two required conditions in Definition~\ref{def:lsh} for any $S_0$ and $c < 1$.

 It was shown~\cite{Proc:Indyk_STOC98} that having a LSH family for a given similarity measure is sufficient for efficiently solving a nearest-neighbor search problem in sub-linear time.


 The algorithm uses two parameters, $(K, L)$. We construct $L$ independent hash tables from the collection $\mathcal{C}$. Each hash table has a meta-hash function $H$ that is formed by concatenating $K$ random independent hash functions from $\mathcal{F}$. Given a query, we collect one bucket from each hash table and return the union of $L$ buckets. Figure ~\ref{lsh} shows the visualization of the hash tables. Intuitively, the meta-hash function makes the buckets sparse and reduces the number of false positives, because only valid nearest-neighbor items are likely to match all $K$ hash values for a given query. The union of the $L$ buckets decreases the number of false negatives by increasing the number of potential buckets that could hold valid nearest-neighbor items.

 The candidate generation algorithm works in two phases [See \citep{spring2017new} for details]:
 \begin{enumerate}
 	\item {\bf Pre-processing Phase:} We construct $L$ hash tables from the data by storing all elements $x \in \mathcal{C}$. We only store pointers to the vector in the hash tables because storing whole data vectors is very memory inefficient.
 	\item {\bf Query Phase:} Given a query $Q$; we will search for its nearest-neighbors. We report the union from all of the buckets collected from the $L$ hash tables. Note, we do not scan all of the elements in $\mathcal{C}$, we only probe $L$ different buckets, one bucket for each hash table.
 \end{enumerate}
 After generating the set of potential candidates, the nearest-neighbor is computed by comparing the distance between each item in the candidate set and the query.
\begin{figure} [ht]
	\begin{center}
		\includegraphics[width=2in]{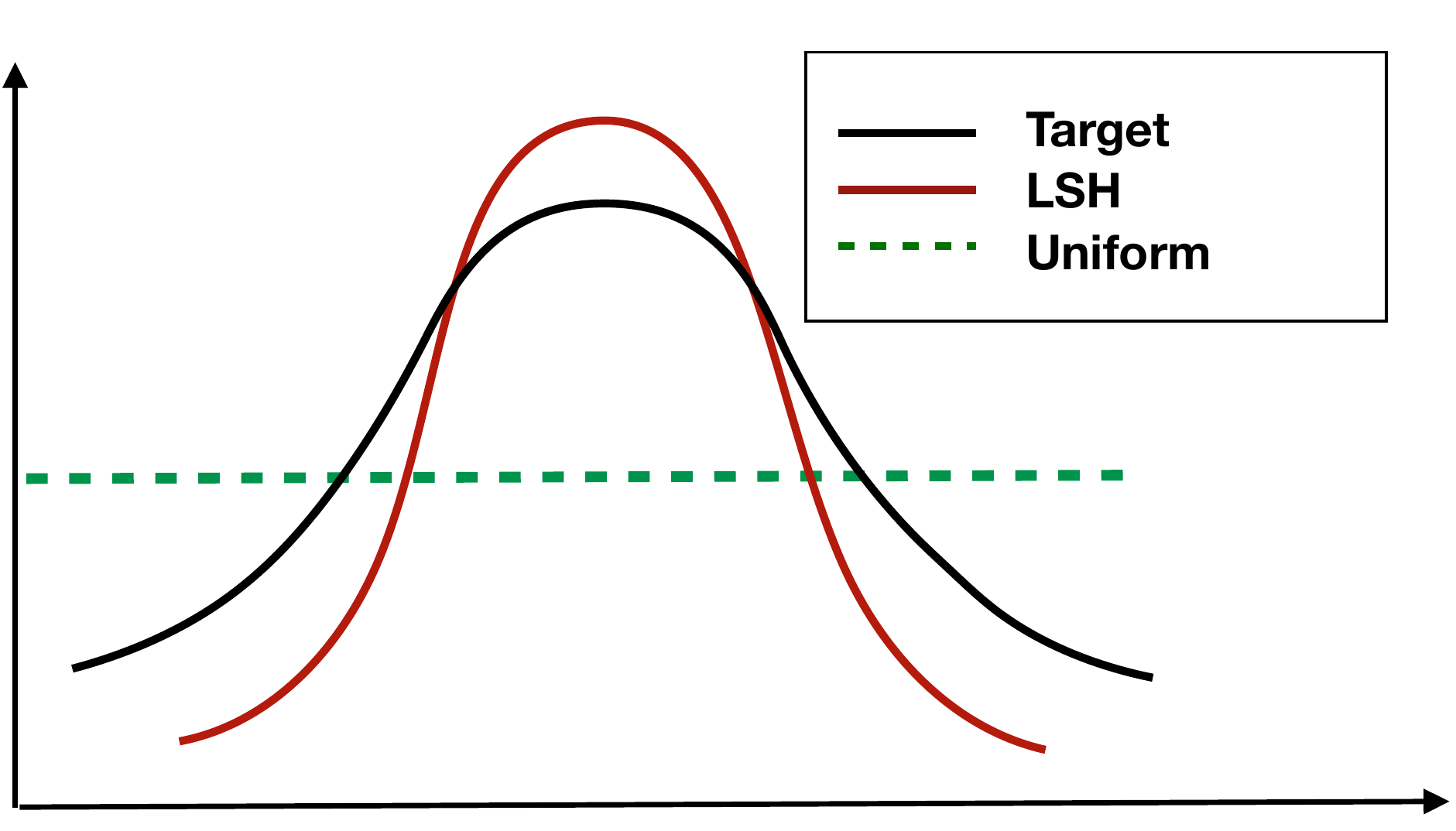}
	\end{center}
	\caption{shows the optimal weighted distribution (target), uniform sampling in SGD and adaptive sampling in LGD.}
	\label{distribution}
\end{figure} 
\begin{figure*} [ht]
	\begin{center}
		\subfloat[YearPredictionMSD]{\includegraphics[width=0.3\textwidth]{new_new_fig/year_gradient.pdf}}
		\subfloat[Slice]{	\includegraphics[width=0.3\textwidth]{new_new_fig/slice_gradient.pdf}}
		\subfloat[UJIIndoorLoc]{	\includegraphics[width=0.3\textwidth]{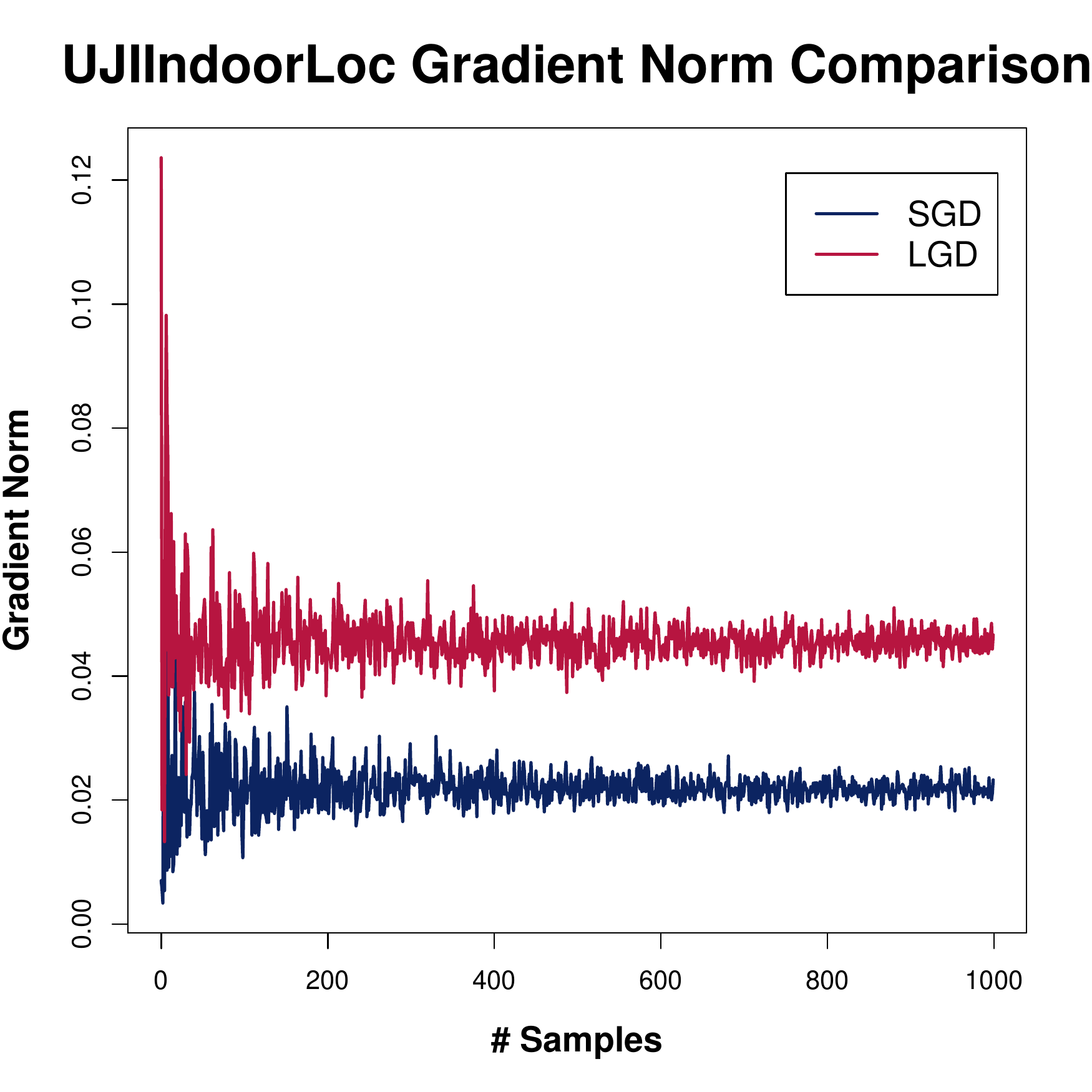}}
		
		\subfloat[YearPredictionMSD]{		\includegraphics[width=0.3\textwidth]{new_new_fig/year_sim.pdf}}
		\subfloat[Slice]{	\includegraphics[width=0.3\textwidth]{new_new_fig/slice_sim.pdf}}
		\subfloat[UJIIndoorLoc]{	\includegraphics[width=0.3\textwidth]{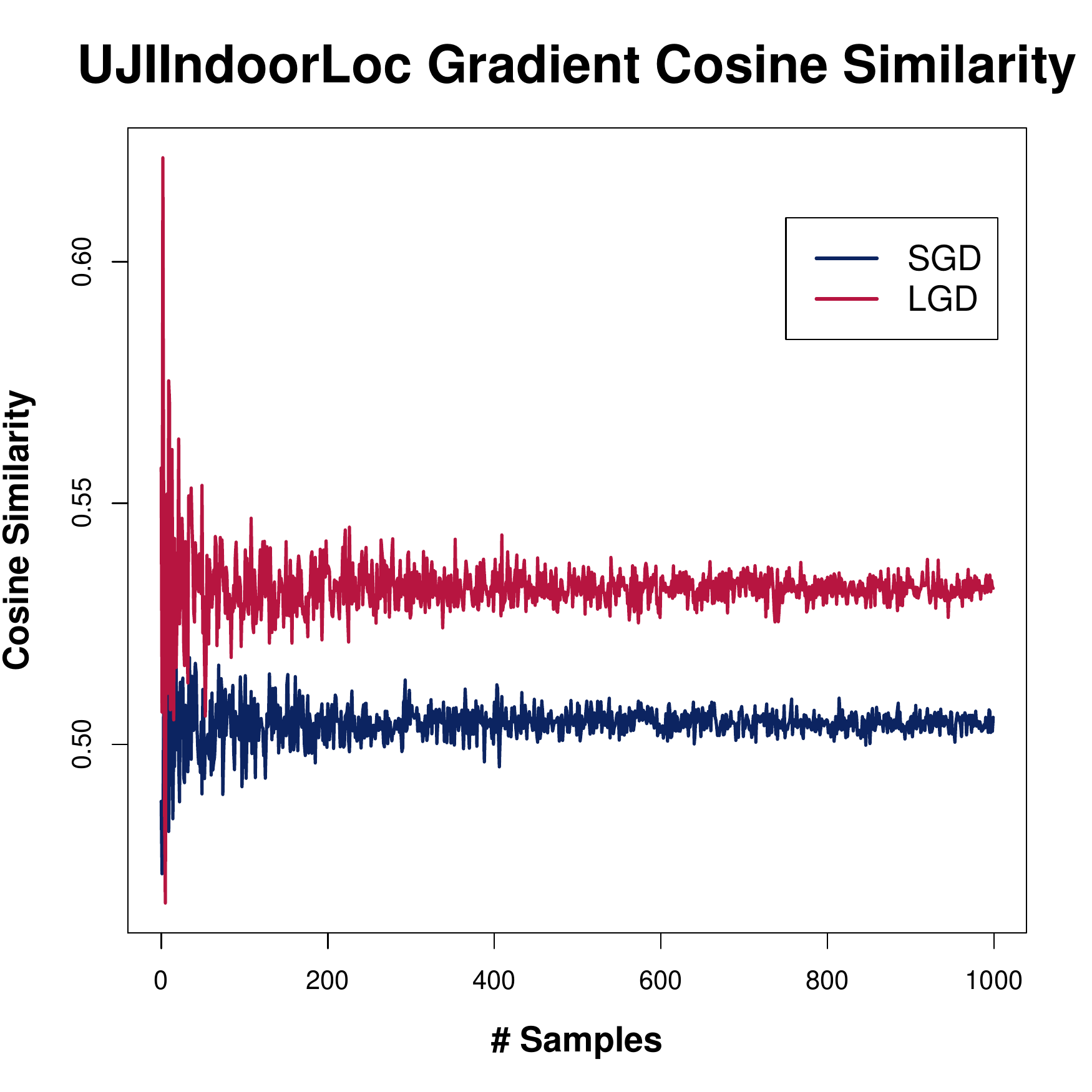}}
	\end{center}
	\vspace{-0.1in}
	\caption{Norm and cosine similarity comparisons of LGD and SGD gradient estimation. Subplots (a)(b)(c) show the comparisons of the average (over number of samples) gradient $L_2$ norm of the points that LGD (red lines) and SGD sampled (blue lines). As argued before, LGD samples with probability monotonic to $L_2$ norm of the gradients while SGD samples uniformly. It matches with the results shown in the plots that LGD queries points with larger gradient than SGD does. Subplots (d)(e)(f) show the comparison of the cosine similarity between gradient estimated by LGD and the true gradient and the cosine similarity between gradient estimated by SGD and the true gradient. Note that the variance of both norm and cosine similarity reduce when we average over more samples.}
	\label{gradient}
\end{figure*}
\begin{figure*} [ht]
	\begin{center}
		\subfloat[YearPredictionMSD]{\includegraphics[width=0.3\textwidth]{new_new_fig/year_train_time.pdf}}
		\subfloat[Slice]{	\includegraphics[width=0.3\textwidth]{new_new_fig/slice_train_time.pdf}}
		\subfloat[UJIIndoorLoc]{	\includegraphics[width=0.3\textwidth]{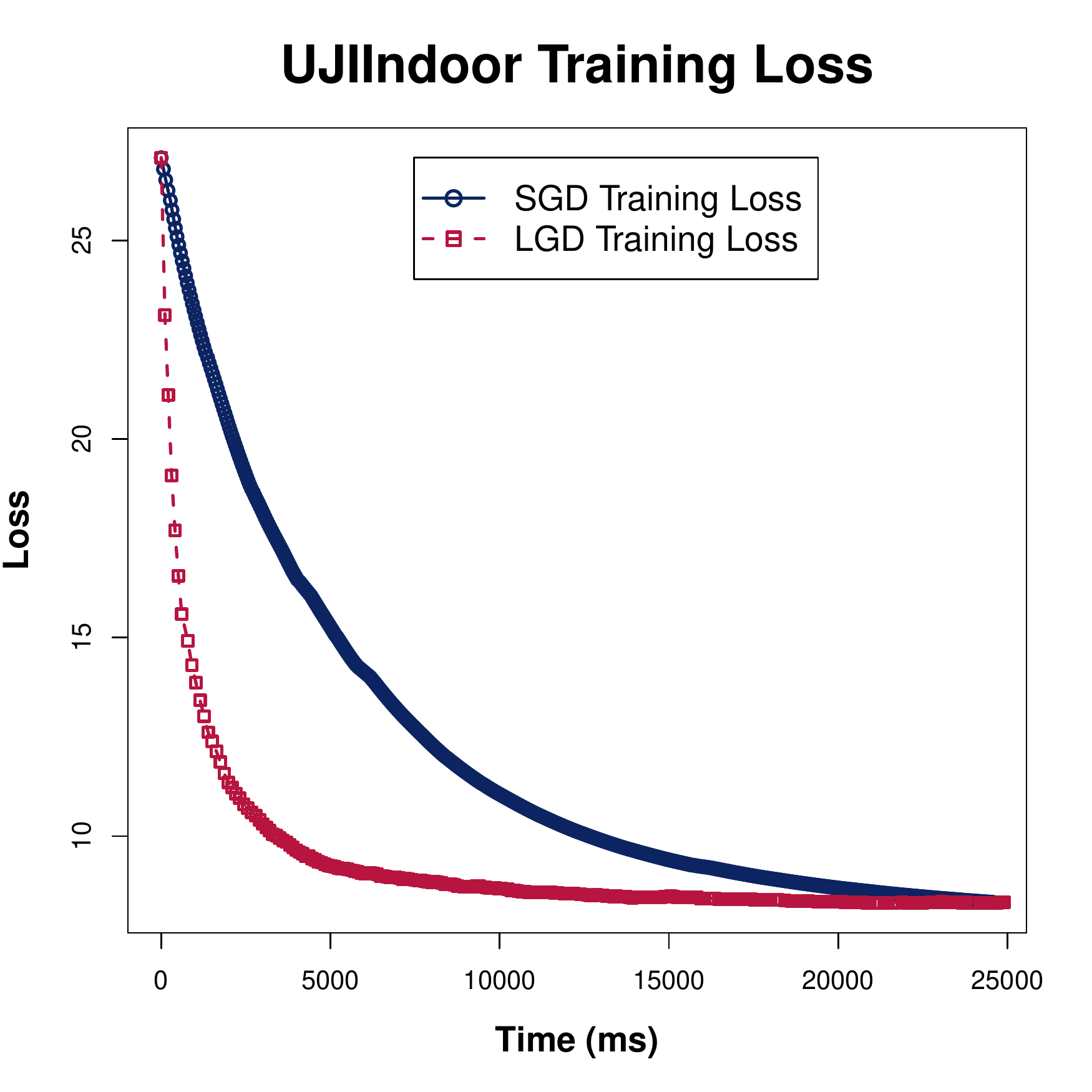}}	
		\vspace{-0.1in}
		
		
		\subfloat[YearPredictionMSD]{		\includegraphics[width=0.3\textwidth]{new_new_fig/year_train_epoch.pdf}}
		\subfloat[Slice]{	\includegraphics[width=0.3\textwidth]{new_new_fig/slice_train_epoch.pdf}}
		\subfloat[UJIIndoorLoc]{	\includegraphics[width=0.3\textwidth]{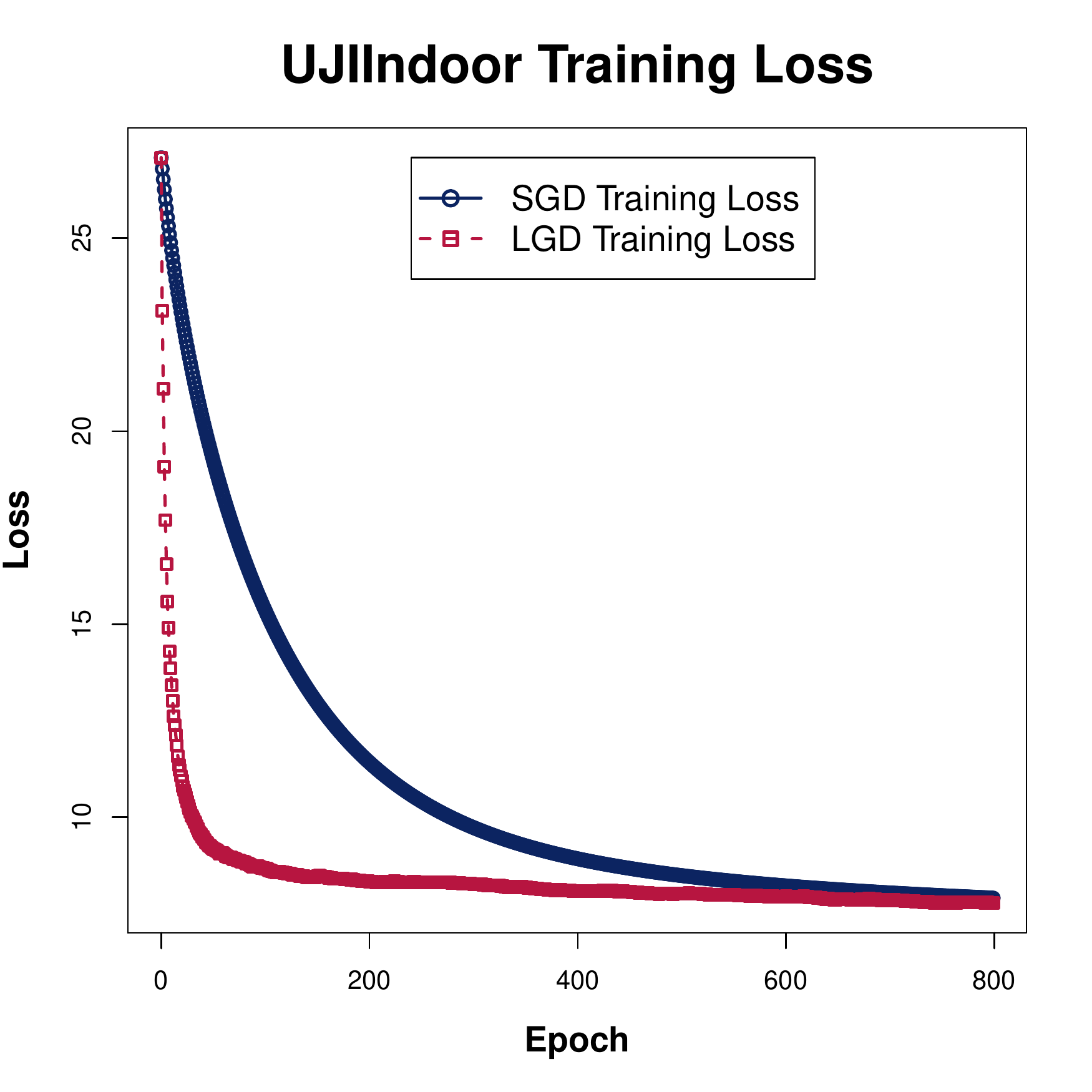}}
		
	\end{center}
	\vspace{-0.1in}
	\caption{In subplots (a)(b)(c), the comparisons of Wall clock training loss convergence are made between plain LGD (red lines) and plain SGD (blue lines) separately on three datasets. We can clearly see the big gap between them representing LGD converge faster than SGD even in time wise. Subplots (d)(e)(f) shows the results for same comparisons but in epoch wise. We can see that LGD converges even faster than SGD which is not surprising because LGD costs a bit more time than SGD does in every iteration.}
	\label{plain_time}
\end{figure*} 
\begin{figure*} [ht]
	\begin{center}
		\subfloat[YearPredictionMSD]{\includegraphics[width=0.3\textwidth]{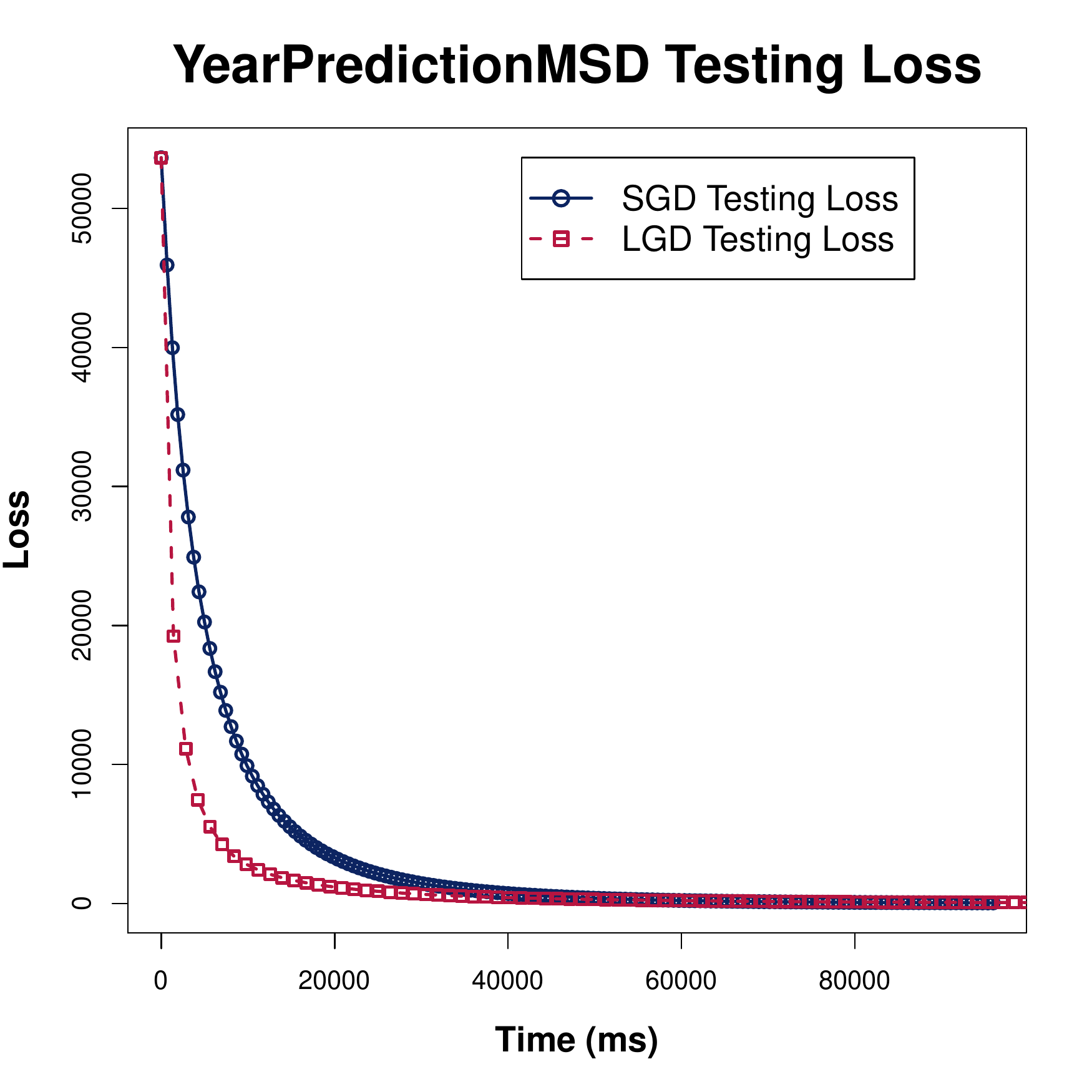}}
		\subfloat[Slice]{	\includegraphics[width=0.3\textwidth]{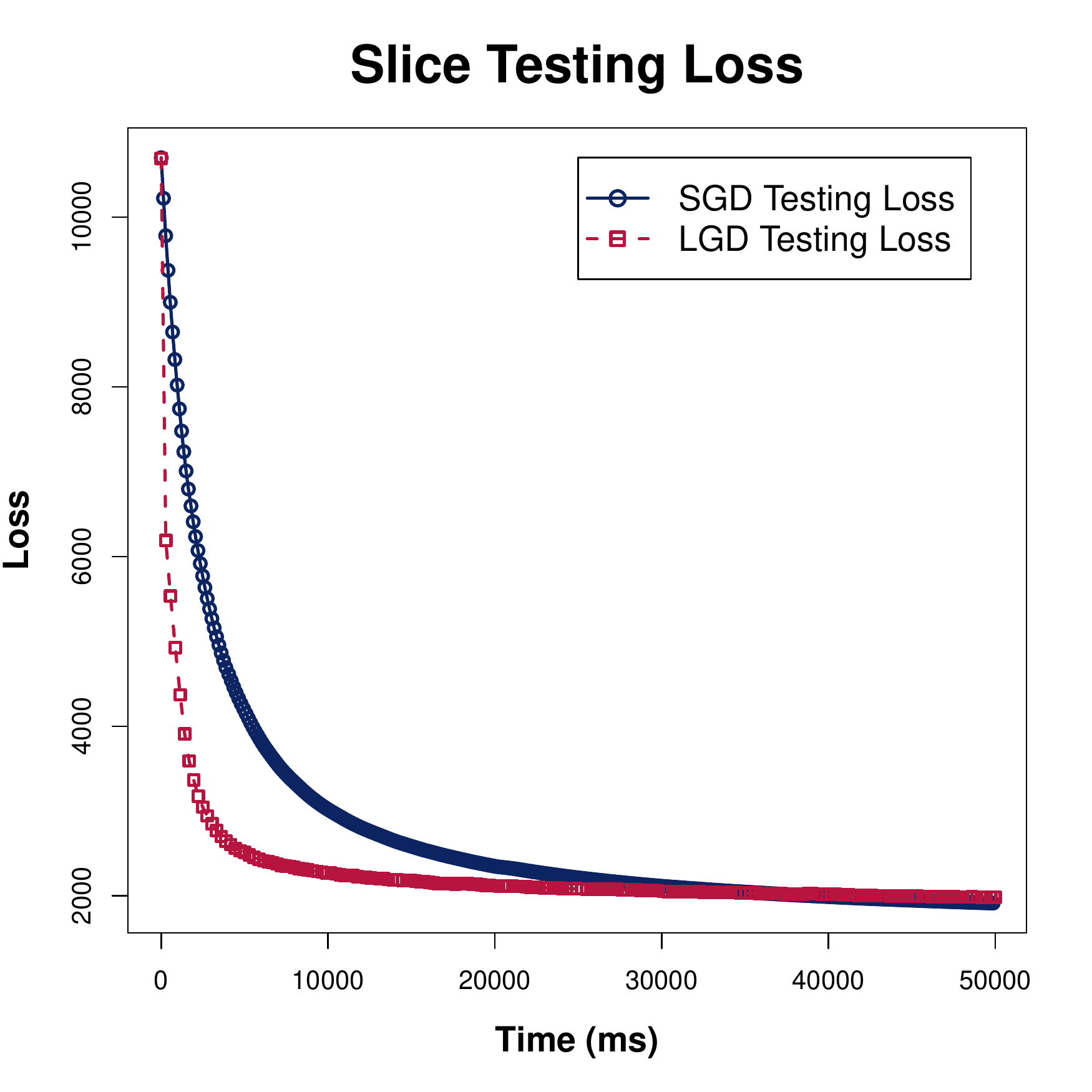}}
		\subfloat[UJIIndoorLoc]{	\includegraphics[width=0.3\textwidth]{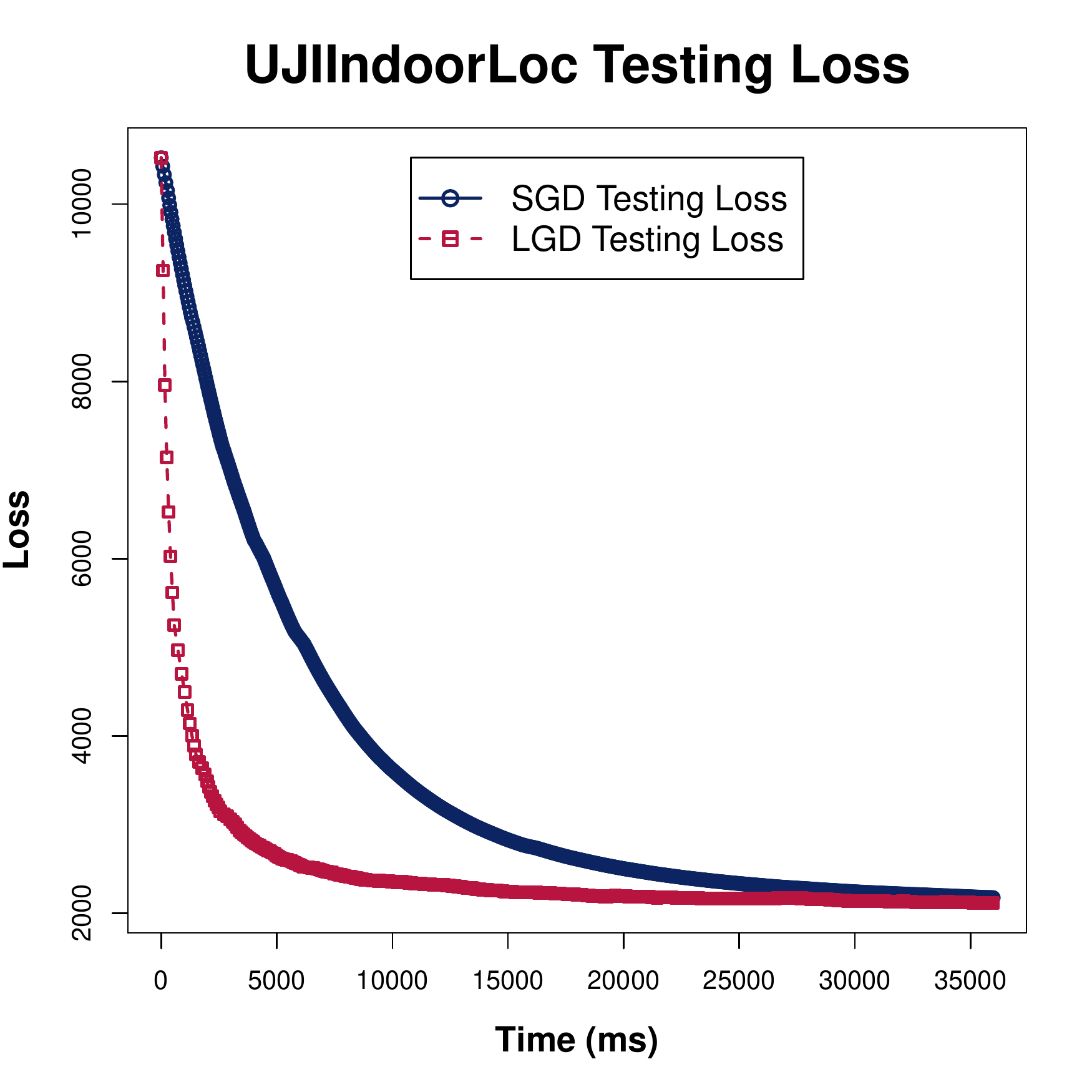}}
		\vspace{-0.1in}
		\subfloat[YearPredictionMSD]{		\includegraphics[width=0.3\textwidth]{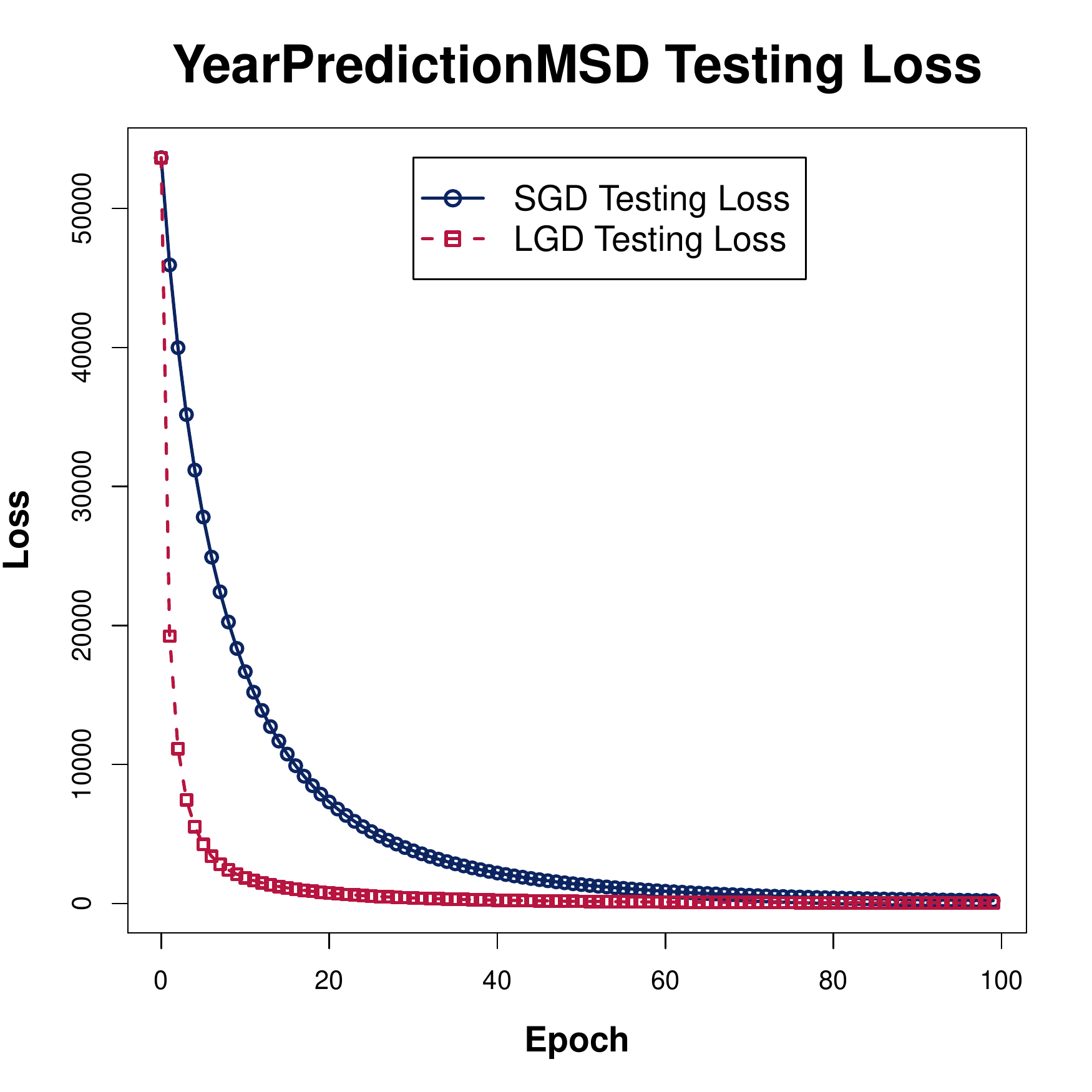}}
		\subfloat[Slice]{	\includegraphics[width=0.3\textwidth]{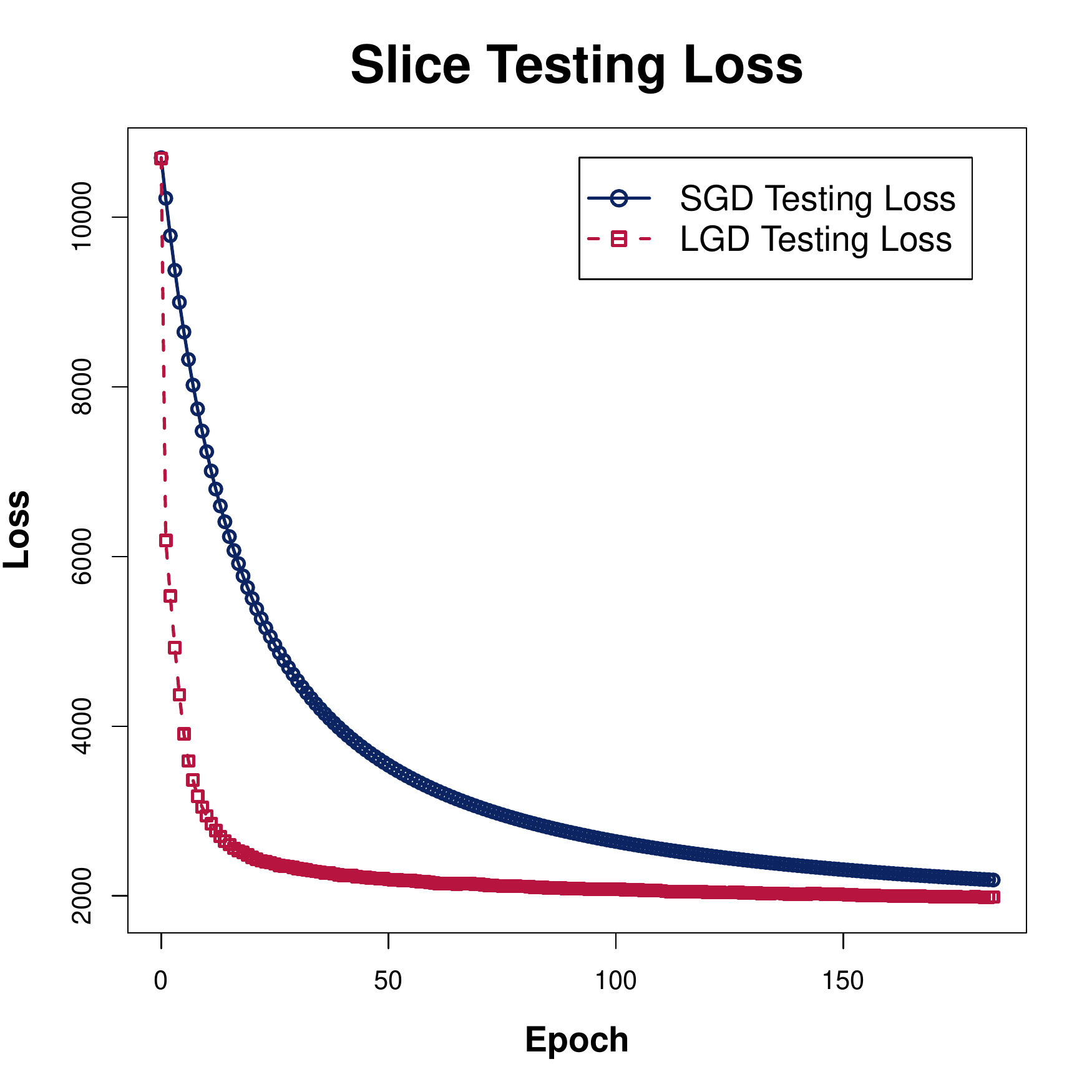}}
		\subfloat[UJIIndoorLoc]{	\includegraphics[width=0.3\textwidth]{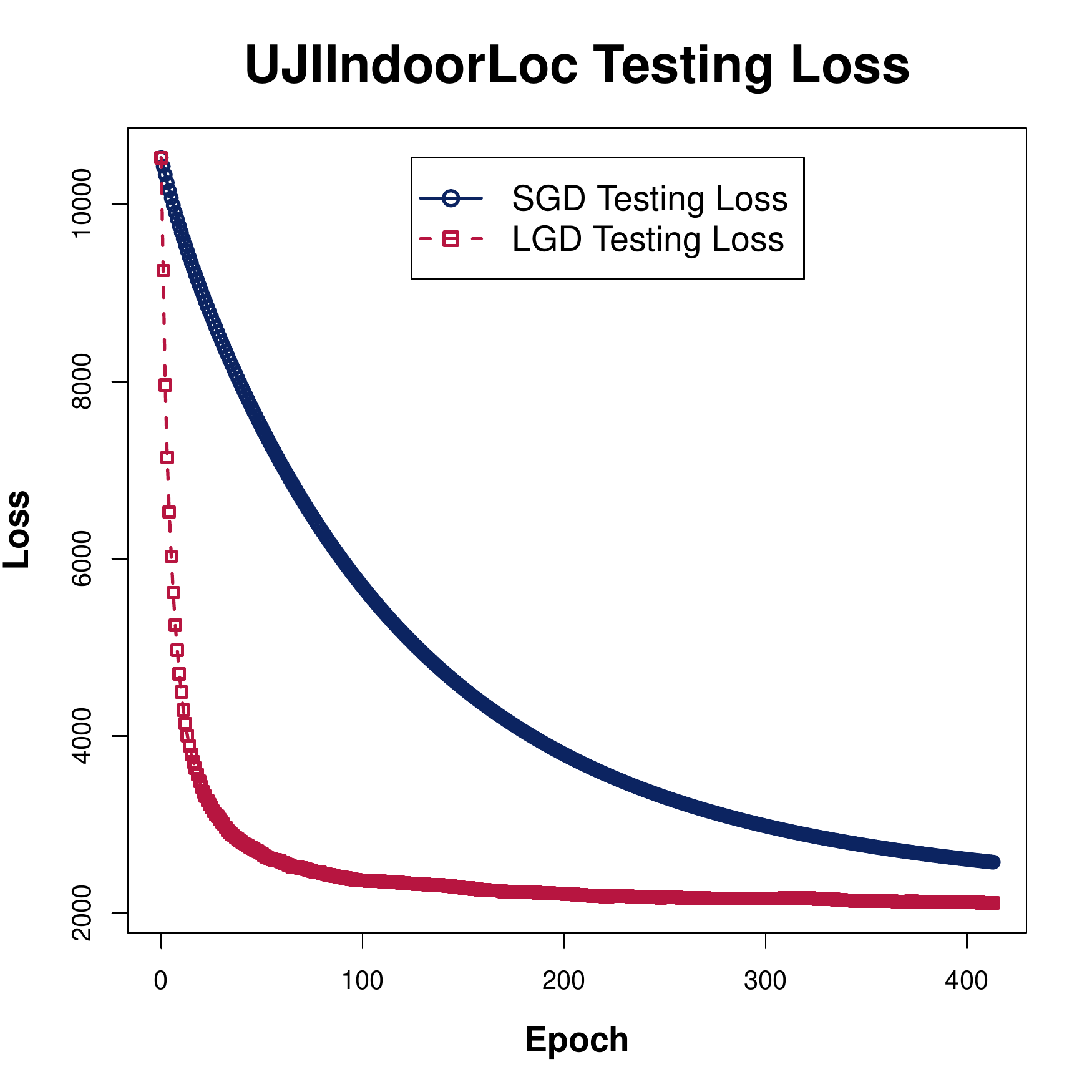}}	
	\end{center}
	\vspace{-0.1in}
	\caption{In subplots (a)(b)(c), the comparisons of Wall clock testing loss convergence are made between plain LGD (red lines) and plain SGD (blue lines) on three datasets. We can see the gap between them representing LGD converge faster than SGD even in time wise. Subplots (d)(e)(f) shows the results for same comparisons but in epoch wise. We can see that LGD converges even faster than SGD.}
	\label{epoch}
\end{figure*}
\begin{figure*} [ht]
	\begin{center}
		\subfloat[YearPredictionMSD]{\includegraphics[width=0.3\textwidth]{new_new_fig/year_train_time_ada.pdf}}
		\subfloat[Slice]{	\includegraphics[width=0.3\textwidth]{new_new_fig/slice_train_time_ada.pdf}}
		\subfloat[UJIIndoorLoc]{	\includegraphics[width=0.3\textwidth]{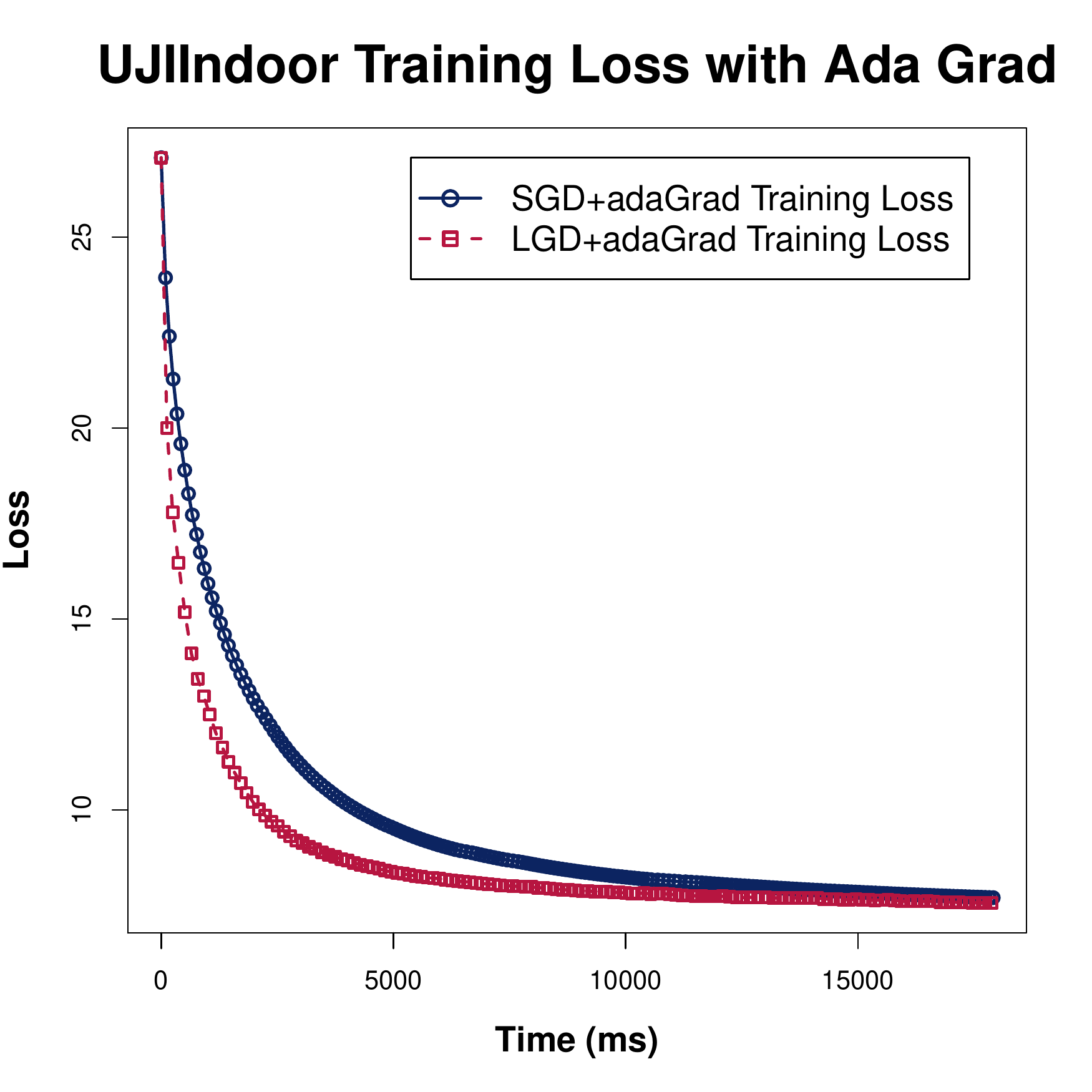}}
		\vspace{-0.15in}
		
		
		\subfloat[YearPredictionMSD]{		\includegraphics[width=0.3\textwidth]{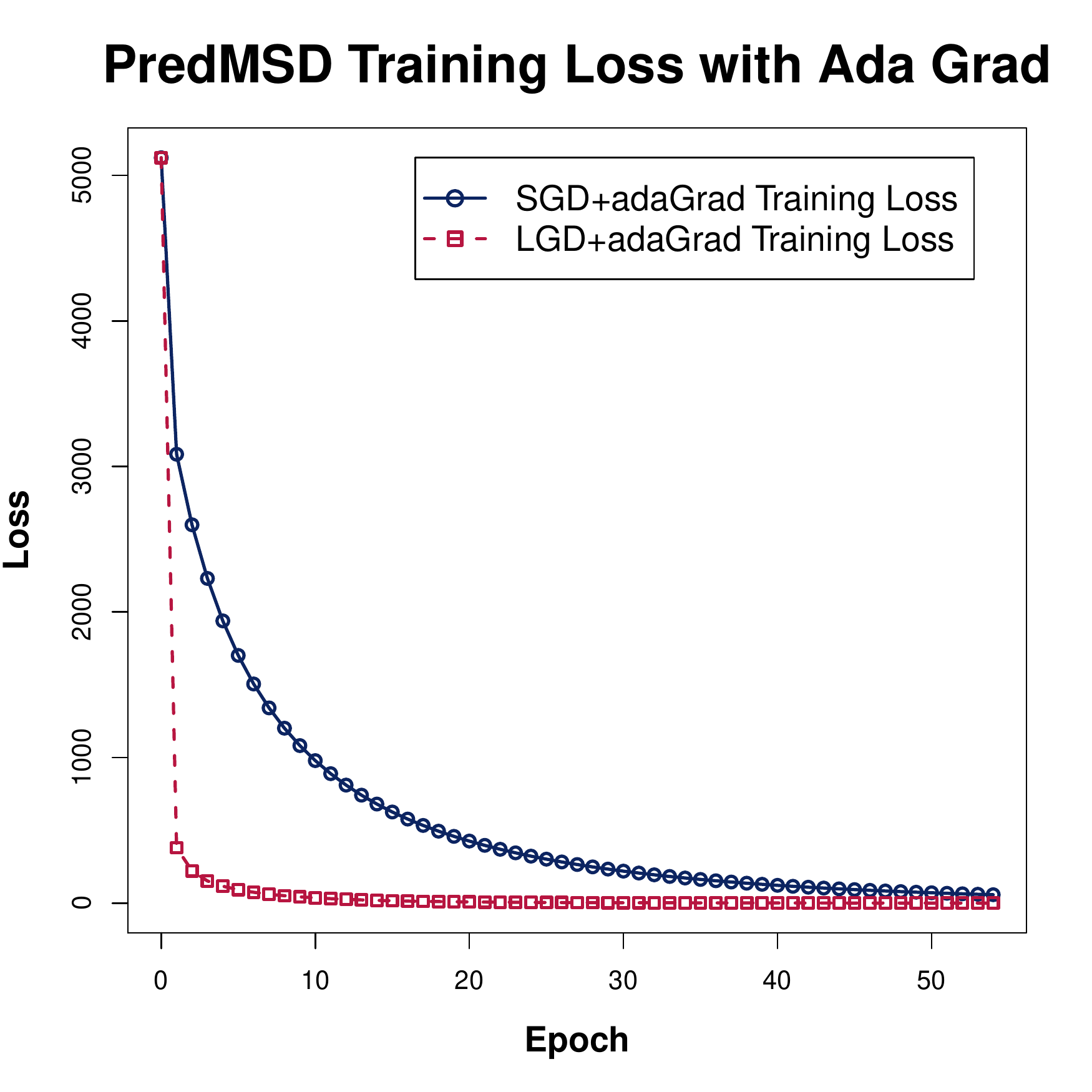}}
		\subfloat[Slice]{	\includegraphics[width=0.3\textwidth]{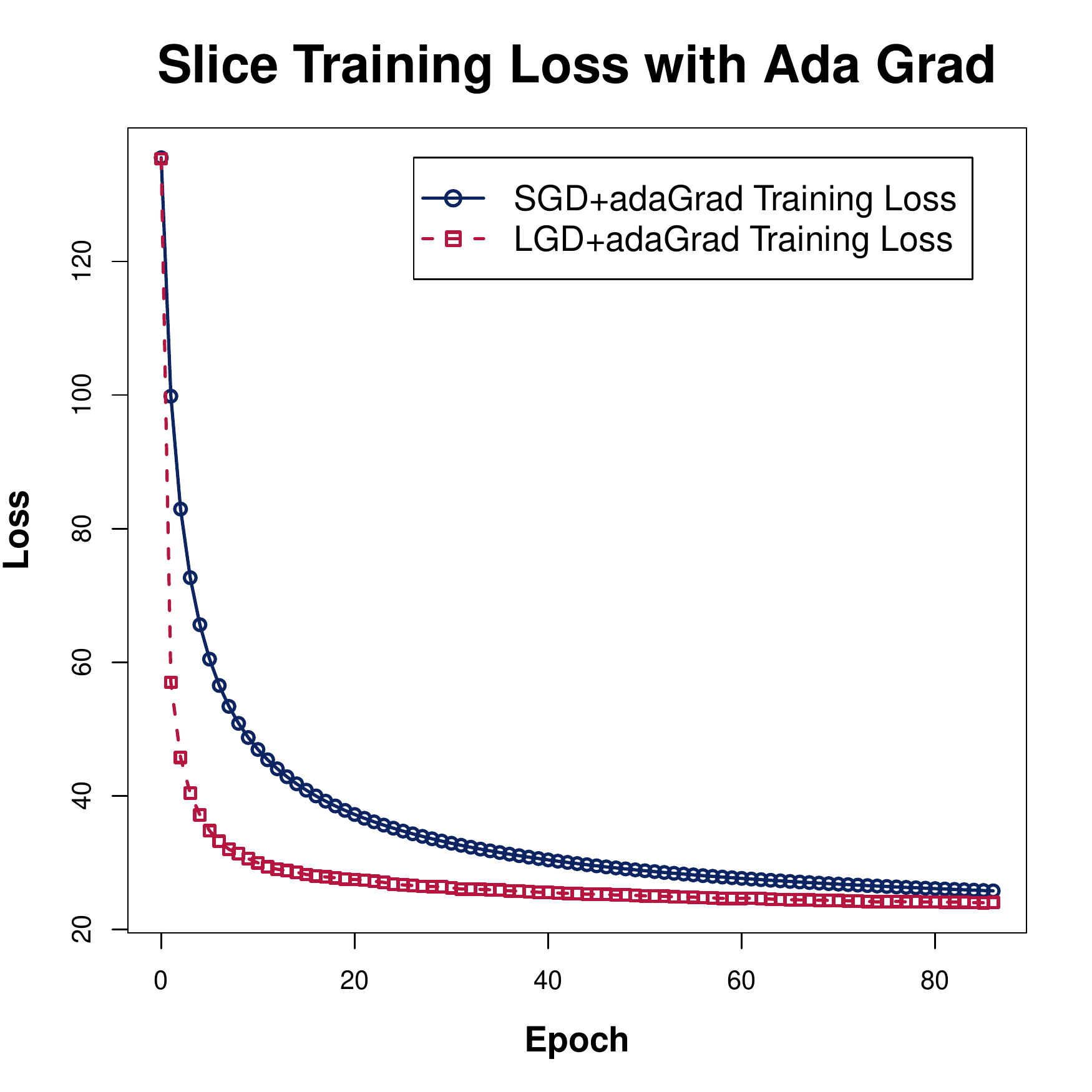}}
		\subfloat[UJIIndoorLoc]{	\includegraphics[width=0.3\textwidth]{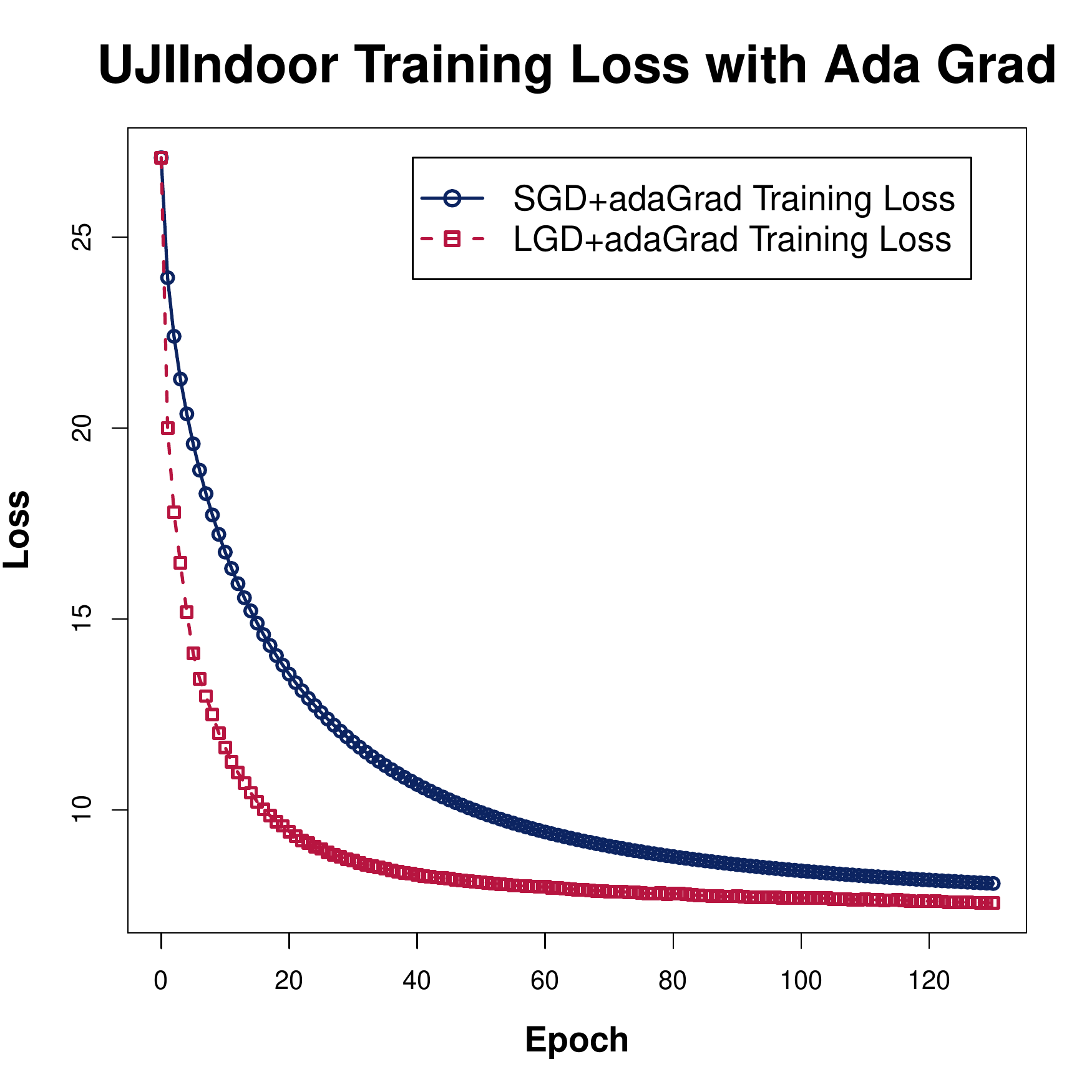}}	
		
		
	\end{center}
	\vspace{-0.15in}
	\caption{In subplots (a)(b)(c), the comparisons of Wall clock training loss convergence are made between LGD+adaGrad (red lines) and SGD+adaGrad (blue lines) separately in three datasets. We can again see the similar gap between them representing LGD converge faster than SGD in time wise. Subplots (d)(e)(f) show the results for same comparisons but in epoch wise. We can see that LGD converges even faster than SGD. }
	\label{ada_time}
\end{figure*}
\begin{figure*} [ht]
	\begin{center}
		\subfloat[YearPredictionMSD]{\includegraphics[width=0.3\textwidth]{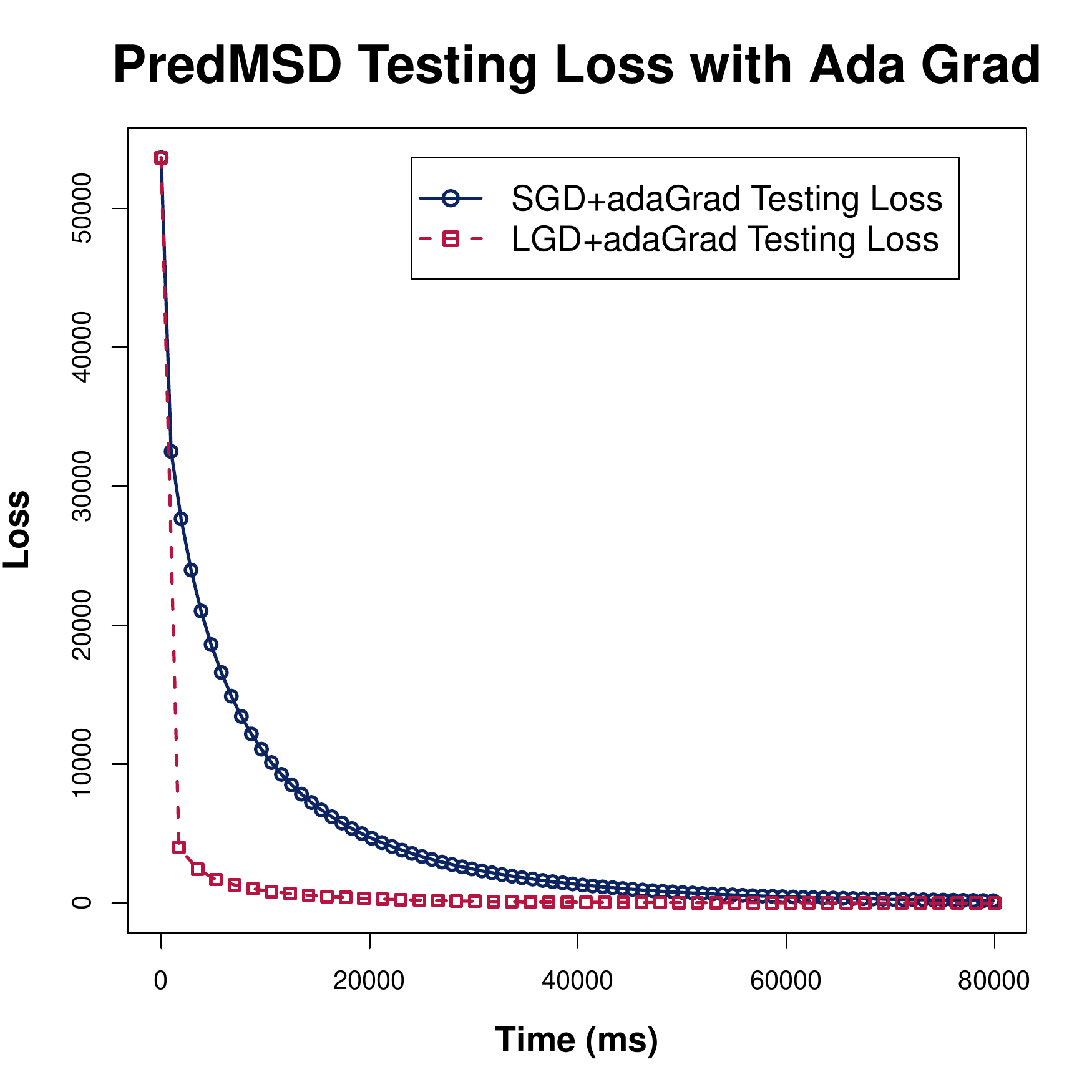}}
		\subfloat[Slice]{	\includegraphics[width=0.3\textwidth]{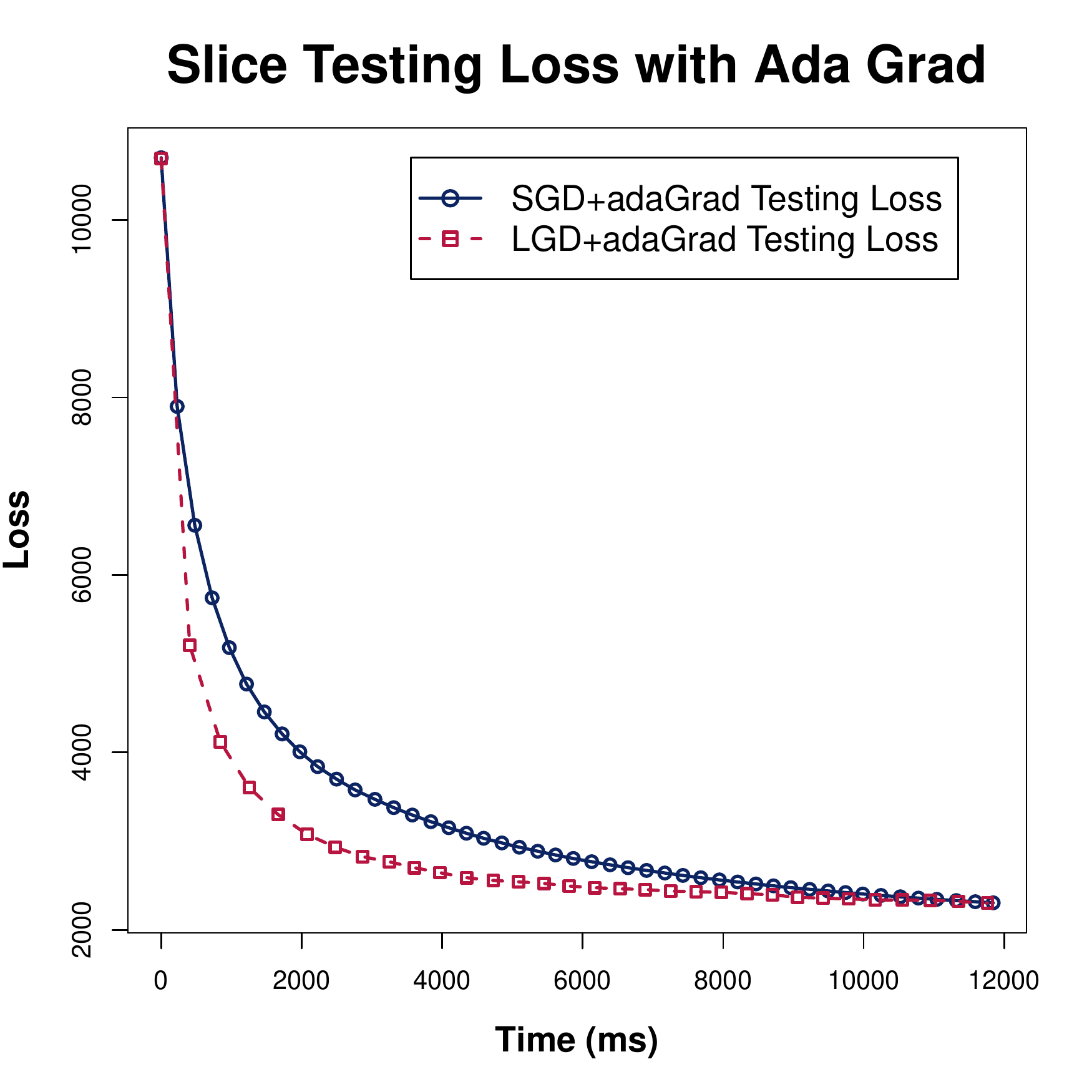}}
		\subfloat[UJIIndoorLoc]{	\includegraphics[width=0.3\textwidth]{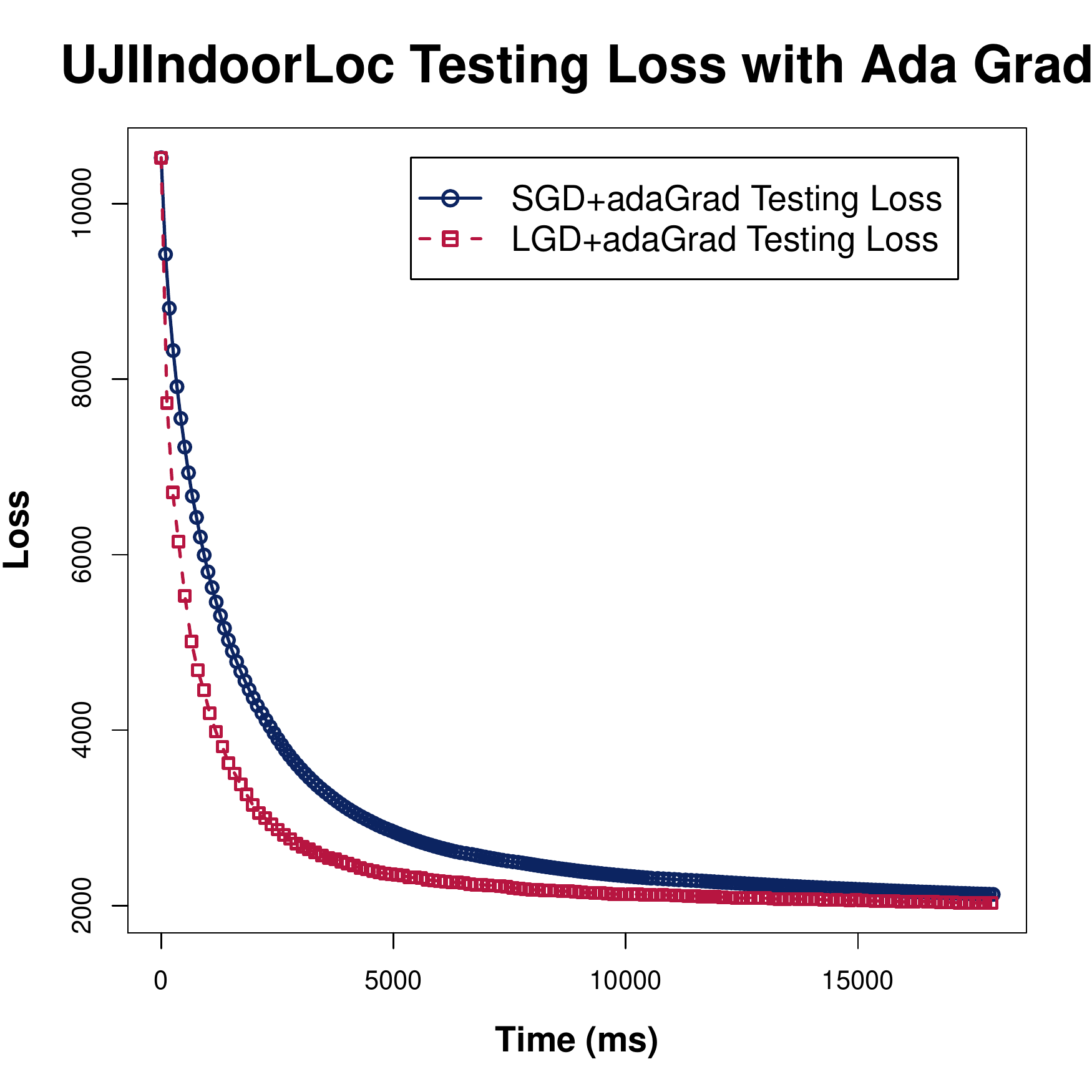}}
		\vspace{-0.1in}
		\subfloat[YearPredictionMSD]{		\includegraphics[width=0.3\textwidth]{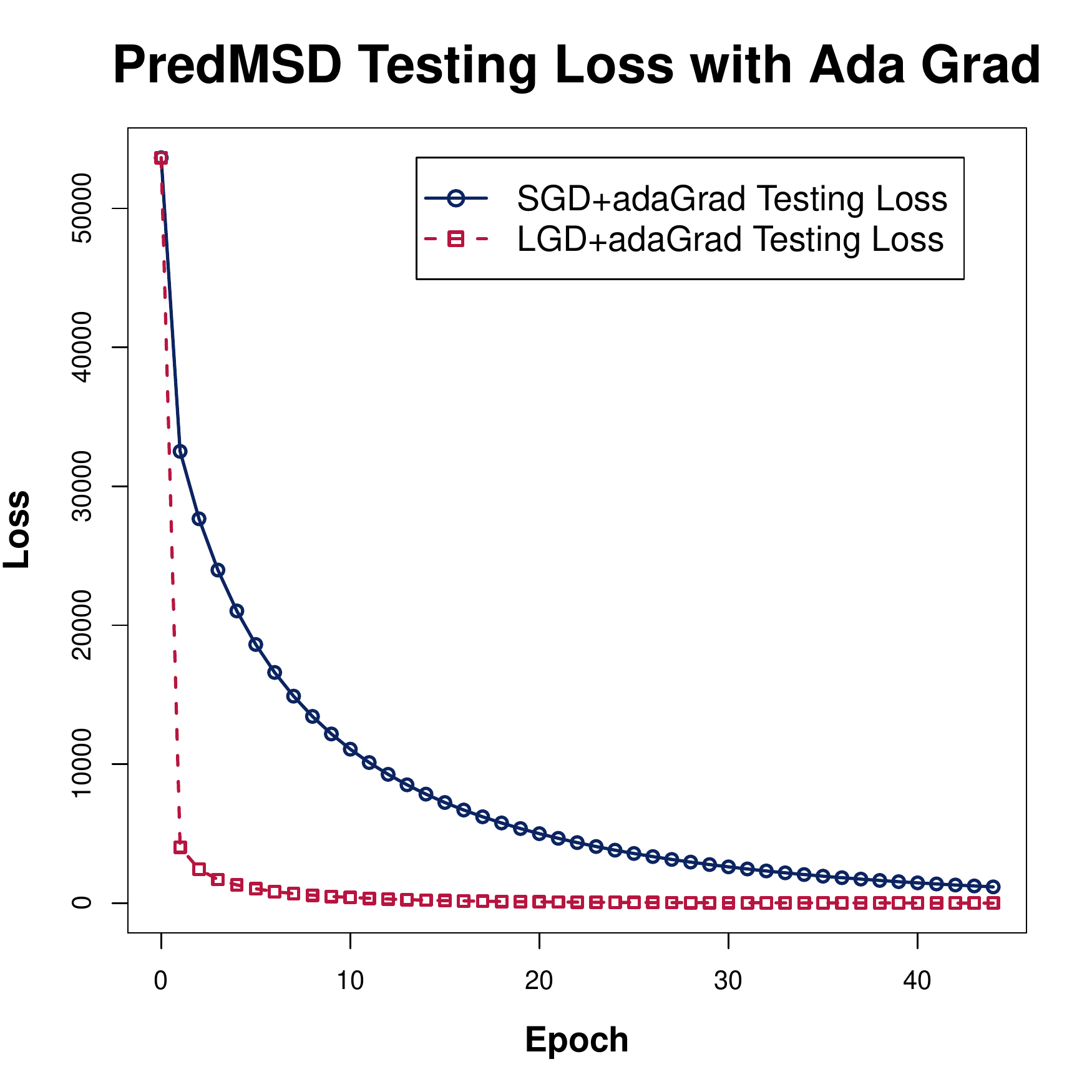}}
		\subfloat[Slice]{	\includegraphics[width=0.3\textwidth]{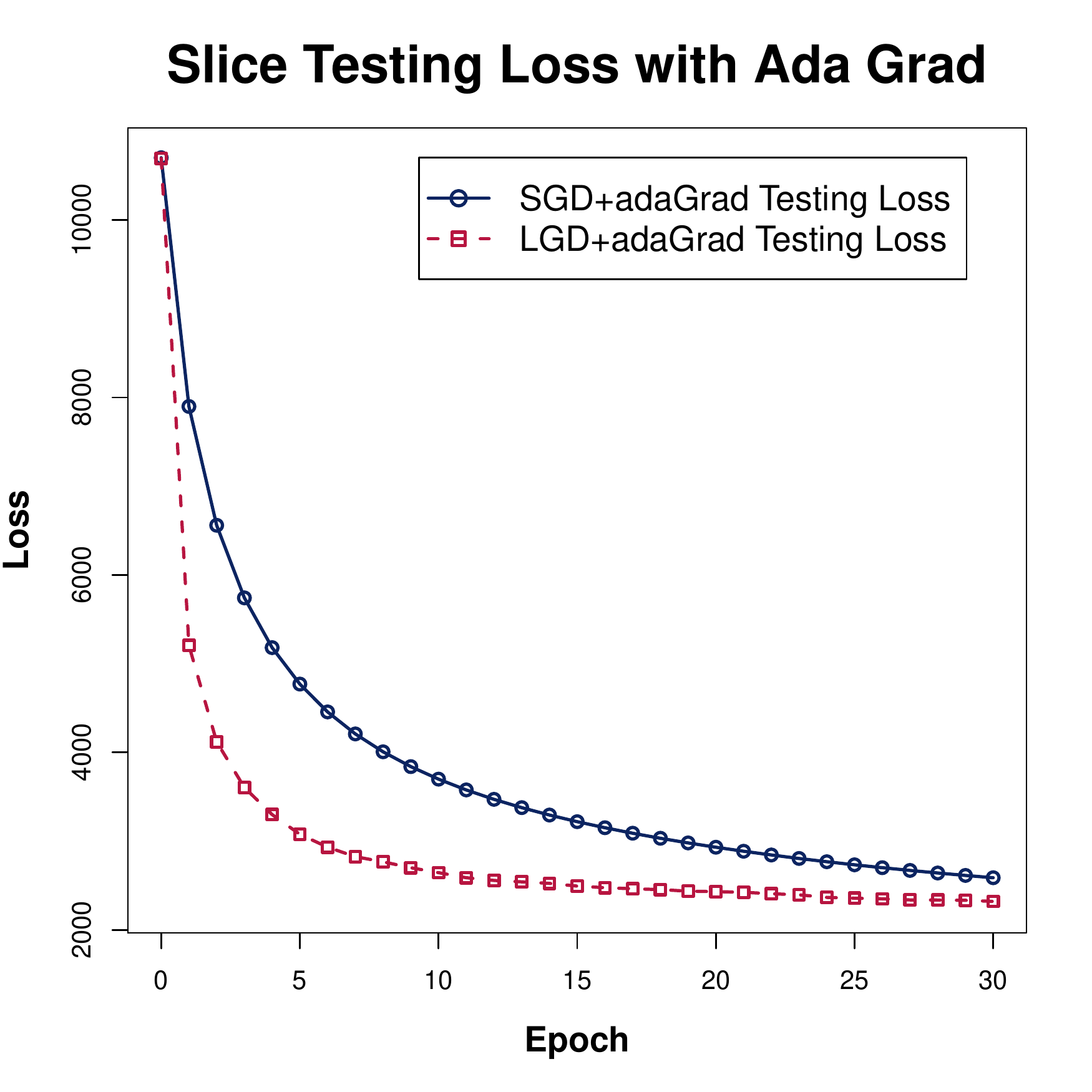}}
		\subfloat[UJIIndoorLoc]{	\includegraphics[width=0.3\textwidth]{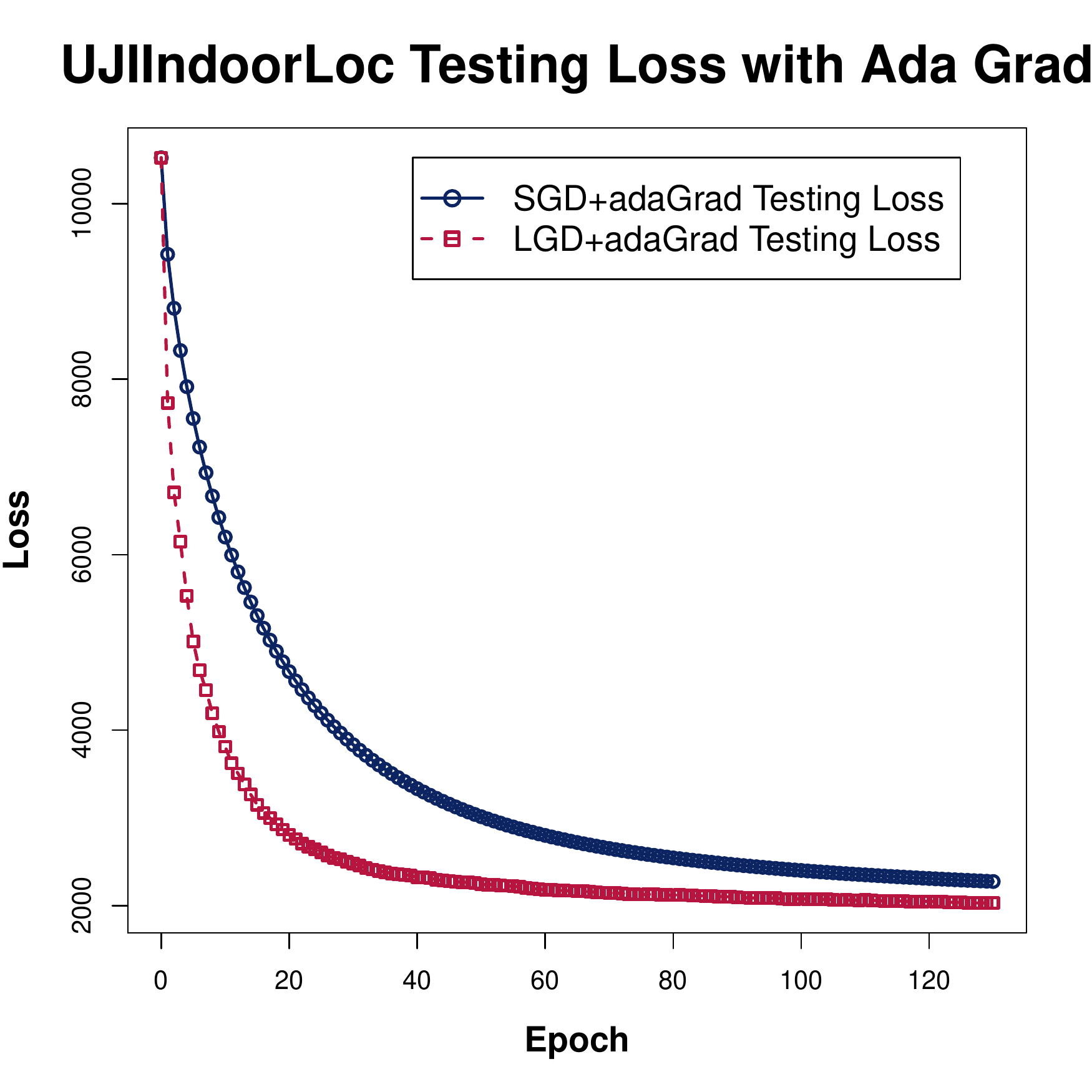}}	
	\end{center}
	\caption{In subplots (a)(b)(c), the comparisons of Wall clock testing loss convergence are made between LGD+adaGrad (red lines) and SGD+adaGrad (blue lines) separately in three datasets. We can see the big gap between them representing LGD converge faster than SGD in time wise. Subplots (d)(e)(f) show the results for same comparisons but in epoch wise.}
	\label{ada_epoch}
\end{figure*}
 \subsection{Popular LSH: Signed Random Projections (SimHash) and Cosine Similarity}
 \label{sec:SimHash}

 SimHash is a popular LSH, which originates from the concept of \emph{\bf Signed Random Projections (SRP)}~\cite{Proc:Charikar,Book:Rajaraman_11, Proc:Henzinger_06} for the cosine similarity measure.  Given a vector $x$, SRP utilizes a random $w$ vector with each component generated from i.i.d. normal, i.e., $w_i \sim N(0,1)$, and only stores the sign of the projection. Formally SimHash is given by
 \begin{equation}\label{eq:hsrp}h^{sign}_w(x) = sign(w^Tx).\end{equation}
 It was shown in the seminal work~\cite{Article:Goemans_95} that collision probability under SRP satisfies the following equation:
 \begin{equation}
 \label{eq:srp}
 Pr(h^{sign}_w(x) = h^{sign}_w(y)) = 1 - \frac{\theta}{\pi},
 \end{equation}
 where  $\theta = cos^{-1}\left( \frac{x^Ty}{\Vert x\Vert_2\cdot \Vert y \Vert_2}\right)$. The term $\frac{x^Ty}{\Vert x \Vert_2\cdot \Vert y \Vert_2}$ is the cosine similarity.
 There is a variant of SimHash where, instead of $w_i \sim N(0,1)$, we choose each $w_i$ independently as either +1 or -1 with probability $\frac{1}{2}$. It is known that this variant performs similarly 
 to the one with $w \sim N(0,1)$~\cite{Book:Rajaraman_11}.
 Since $1 - \frac{\theta}{\pi}$ is monotonic to cosine similarity $\mathcal{S}$, it is a valid LSH.

\section{More Algorithm Details}
\subsection{Sampling from Hash Tables}
$L$ independent hash tables are constructed during the initialization phase, which is a one-time cost. During the sampling phase, LGD does not collect the union of samples from buckets in all L hash tables for efficiency. Because L is usually large to guarantee the randomness involved in the hash tables so that all the training samples can be covered. Instead, in each iteration, LGD randomly selects a hash table to retrieve a sample. If the bucket is empty, LGD will keep searching in L tables until it finds a non-empty bucket. 

\subsection{LGD for Mini-batch}
There is a compromise between computing the batch gradient and the gradient at a single sample called mini-batch gradient descent ~\cite{li2014efficient}. It computes the gradient against more than one training sample at each step, and the batch size is also a hyper-parameter that can be tuned for better performance. It results in smoother convergence, as the gradient computed at each step is averaged over more training examples. This trick can also be applied to LGD without affecting its efficiency. We can see that according to the hash codes of the query, a random sample from the matching bucket in a random hash table is chosen for plain LGD. Therefore, if a mini-batch of $m$ samples is needed for every iteration and the first matching bucket LGD find
has $n$ points, in our implementation, LGD can sample $m$ examples from that bucket when $m < n$. Otherwise, LGD will continue sampling from matching buckets in other hash tables until $n$ samples have been collected or all hash tables have been visited.

\section{Variance Analysis Proofs}

\begin{theorem}
	\label{thm:appendixThm1}
In this section, we first prove that our estimator of the gradient is unbiased with lower variance than SGD for most real datasets. Denote $S_b$ as the bucket that contains a set of samples which has the same hash value (same bucket) as the query and $x_m$ is the chosen sample in Algorithm 2. For simplicity we denote the query as $\theta_{t}$ and $p_i = cp(x_i,\theta_t)^K(1-cp(x_i,\theta_t)^K)^{l-1}$ as the probability of $x_i$ belonging to that bucket. 
	\begin{align}
	\text{Est}  &=  \frac{1}{N} \sum_{i = 1}^N \mathbbm{1}_{x_i \in S_b} 	\mathbbm{1}_{(x_i= x_m | x_i \in S_b)} \frac {\nabla f(x_i, \theta_{t}) \cdot \left| S_b \right|}{p_i} \nonumber \\
	\mathbb{E}[\text{Est}] &=   \frac{1}{N} \sum_{i = 1}^N \nabla f(x_i, \theta_{t}) \nonumber
	\end{align}
\end{theorem}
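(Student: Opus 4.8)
The plan is to prove unbiasedness term-by-term via the standard importance-sampling (Horvitz--Thompson) identity, using the law of total expectation over the two sources of randomness in Algorithm~\ref{algo2}: the random choice of hash table (which determines the realized bucket $S_b$), and the uniform draw of the returned point $x_m$ from that bucket. By linearity of expectation it suffices to show that for each fixed $i$,
\[
\mathbb{E}\!\left[ \mathbbm{1}_{x_i \in S_b}\, \mathbbm{1}_{(x_i = x_m \mid x_i \in S_b)}\, \frac{|S_b|}{p_i} \right] = 1,
\]
because then $\mathbb{E}[\text{Est}] = \frac1N \sum_{i=1}^N \nabla f(x_i,\theta_t)\cdot 1$. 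The first observation is that the product of the two indicators is exactly $\mathbbm{1}_{x_i = x_m}$: since $x_m$ always lies in $S_b$, the event $\{x_i = x_m\}$ is contained in $\{x_i \in S_b\}$, so the gradient weight $\nabla f(x_i,\theta_t)$ factors out cleanly and only this single indicator matters.

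Next I would condition on the bucket $S_b$ returned by the while-loop of Algorithm~\ref{algo2}. Conditioned on $S_b$, the point $x_m$ is uniform over the $|S_b|$ elements of the bucket, hence $\Pr[x_m = x_i \mid S_b] = \mathbbm{1}_{x_i \in S_b}/|S_b|$, and therefore
\[
\mathbb{E}\big[\, \mathbbm{1}_{x_i = x_m}\, |S_b| \;\big|\; S_b \,\big] \;=\; |S_b|\cdot \frac{\mathbbm{1}_{x_i\in S_b}}{|S_b|} \;=\; \mathbbm{1}_{x_i \in S_b}.
\]
Taking the outer expectation over the table-selection process gives $\mathbb{E}\big[\mathbbm{1}_{x_i = x_m}\,|S_b|\big] = \Pr[x_i \in S_b]$. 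It then remains to identify $\Pr[x_i \in S_b]$ with $p_i = cp(x_i,\theta_t)^K\big(1-cp(x_i,\theta_t)^K\big)^{l-1}$: a single random table puts $x_i$ into the query's bucket with probability $cp(x_i,\theta_t)^K$ (collision on all $K$ concatenated LSH functions, by independence of the meta-hash coordinates), and the factor $(1-cp(x_i,\theta_t)^K)^{l-1}$ accounts for the $l-1$ tables that were probed and skipped before the accepted non-empty bucket $S_b$. Dividing by $p_i$ yields the value $1$, and summing over $i$ finishes the argument.

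The main obstacle is precisely this last identification $\Pr[x_i \in S_b] = p_i$: it requires care with the retry-until-non-empty logic of Algorithm~\ref{algo2}, i.e.\ arguing that after the $l-1$ rejected probes the membership of the accepted bucket still follows the per-table collision law so that the conditioning on ``first non-empty bucket at step $l$'' does not distort which $x_i$ appears. This is exactly the LSH-as-a-sampler accounting of~\citep{spring2017new}, which I would invoke rather than re-derive; everything else is the routine importance-sampling computation above. A minor point worth recording is that $p_i > 0$ whenever $cp(x_i,\theta_t) > 0$, so the weight $|S_b|/p_i$ is well defined on the event $\{x_i = x_m\}$ picked out by the indicator, and $\text{Est}$ is finite almost surely.
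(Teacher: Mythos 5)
Your proposal is correct and follows essentially the same route as the paper's proof: both reduce unbiasedness to the importance-sampling identity $\mathbb{E}\bigl[\mathbbm{1}_{x_i=x_m}\,|S_b|\bigr]=\Pr[x_i\in S_b]=p_i$, using that $x_m$ is uniform on the realized bucket and that $p_i$ is the per-query collision probability from Algorithm~2. If anything, your version is slightly more careful than the paper's, which factors $\mathbb{E}[\mathbbm{1}_{x_i\in S_b}\mathbbm{1}_{x_i=x_m\mid x_i\in S_b}\cdot \nabla f(x_i,\theta_t)|S_b|/p_i]$ into a product of expectations despite $|S_b|$ being correlated with the indicators, whereas your tower-property conditioning on $S_b$ handles that correlation explicitly.
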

\begin{proof}
	\begin{align}
	\mathbb{E}[\mathbbm{1}_{x_i \in S_b}] &= p_i, \quad and \quad \mathbb{E}[\mathbbm{1}_{x_i=x_m| x_i \in S_b}] = \frac{1}{\left| S_b \right|}. \nonumber
	\end{align}
	Also note that
	\begin{align}
	\mathbb{E}[\mathbbm{1}_{x_i \in S_b} \mathbbm{1}_{x_i = x_m | x_i \in S_b}] = \mathbb{E}[\mathbbm{1}_{x_i \in S_b}]\mathbb{E}[\mathbbm{1}_{x_i = x_m | x_i \in S_b}]. \nonumber
	\end{align}
	Then,
	\begin{align}
	\mathbb{E}[Est] &= \frac{1}{N} \mathbb{E}[\sum_{i = 1}^N \mathbbm{1}_{x_i \in S_b} \mathbbm{1}_{x_i = x_m | x_i \in S_b} \frac {\nabla f(x_i, \theta_{t}) \cdot \left| S_b \right|}{p_i}] \nonumber \\
	&= \frac{1}{N} \sum_{i = 1}^N \mathbb{E}[\mathbbm{1}_{x_i \in S_b} \mathbbm{1}_{x_i = x_m | x_i \in S_b}] \cdot \mathbb{E}[\frac {\nabla f(x_i, \theta_{t}) \cdot \left| S_b \right|}{p_i}] \nonumber \\
	&= \frac{1}{N} \sum_{i = 1}^N \nabla f(x_i, \theta_{t}) \nonumber
	\end{align}
\end{proof}
	\begin{theorem}
		\label{thm:AppendixThm2}
		The Trace of the covariance of our estimator is:
		\begin{align}
		&Tr(\Sigma(Est)) =   \frac{1}{N^2} \sum_{i = 1}^N \frac {\Vert \nabla f(x_i, \theta_{t}) \Vert_2^2 \cdot \mathbb{E}(\left| S_b \right|)}{p_i} - \frac{1}{N^2} (\sum_{i = 1}^N  \Vert \nabla f(x_i, \theta_{t})\Vert_2) ^2. \nonumber
		\end{align}
	\end{theorem}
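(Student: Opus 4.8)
\textbf{Proof proposal for Theorem~\ref{thm:AppendixThm2}.}
The plan is to compute the trace of the covariance straight from its definition,
$Tr(\Sigma(\text{Est})) = \mathbb{E}\big[\Vert \text{Est}\Vert_2^2\big] - \big\Vert \mathbb{E}[\text{Est}]\big\Vert_2^2$.
The second term is already available: by Theorem~\ref{thm:appendixThm1} the estimator is unbiased, so $\mathbb{E}[\text{Est}] = \tfrac{1}{N}\sum_{i=1}^N \nabla f(x_i,\theta_t)$ and hence $\big\Vert \mathbb{E}[\text{Est}]\big\Vert_2^2 = \tfrac{1}{N^2}\big\Vert \sum_{i=1}^N \nabla f(x_i,\theta_t)\big\Vert_2^2$ (this is the subtracted term; the form written with $\big(\sum_i\Vert\nabla f\Vert_2\big)^2$ should be read as this quantity, matching Theorem~\ref{thm:thm2}). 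All the real work is therefore in evaluating $\mathbb{E}\big[\Vert \text{Est}\Vert_2^2\big]$.

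First I would expand $\Vert \text{Est}\Vert_2^2$ as a double sum over $i,j$, with summand proportional to $\mathbbm{1}_{x_i\in S_b}\mathbbm{1}_{(x_i=x_m\mid x_i\in S_b)}\,\mathbbm{1}_{x_j\in S_b}\mathbbm{1}_{(x_j=x_m\mid x_j\in S_b)}\,\langle \nabla f(x_i,\theta_t),\nabla f(x_j,\theta_t)\rangle\,\tfrac{\vert S_b\vert^2}{p_ip_j}$. The key observation is that Algorithm~\ref{algo2} outputs exactly one element $x_m$, so the product of the two selection indicators is $0$ whenever $i\neq j$; every off-diagonal term vanishes and the expression collapses to the single sum $\tfrac{1}{N^2}\sum_{i=1}^N \mathbbm{1}_{x_i\in S_b}\mathbbm{1}_{(x_i=x_m\mid x_i\in S_b)}\,\tfrac{\Vert\nabla f(x_i,\theta_t)\Vert_2^2\,\vert S_b\vert^2}{p_i^2}$. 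I would then take expectations term by term with the tower rule, conditioning on the realized bucket: given that $x_i$ lies in the landing bucket, the selected element is uniform over that bucket, so $\mathbb{E}[\mathbbm{1}_{(x_i=x_m\mid x_i\in S_b)}\mid \text{bucket}] = 1/\vert S_b\vert$; one power of $\vert S_b\vert$ cancels, leaving $\mathbb{E}[\mathbbm{1}_{x_i\in S_b}\,\vert S_b\vert]/p_i^2$ per $i$. Writing $\vert S_b\vert = \sum_{j}\mathbbm{1}_{x_j\in S_b}$ and using linearity gives the exact value $\sum_{j=1}^N \mathbb{P}(x_i,x_j\in S_b)/p_i^2$ (the form of Theorem~\ref{thm:thm2}); to obtain the compact form stated here I would approximate the size of the landing bucket as essentially uncorrelated with membership of any one fixed point, $\mathbb{E}[\mathbbm{1}_{x_i\in S_b}\vert S_b\vert]\approx \mathbb{P}(x_i\in S_b)\,\mathbb{E}[\vert S_b\vert] = p_i\,\mathbb{E}[\vert S_b\vert]$, which turns the per-$i$ contribution into $\Vert\nabla f(x_i,\theta_t)\Vert_2^2\,\mathbb{E}(\vert S_b\vert)/p_i$. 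Summing over $i$, dividing by $N^2$, and subtracting $\Vert\mathbb{E}[\text{Est}]\Vert_2^2$ yields the claim.

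The main obstacle is the dependency bookkeeping: $\vert S_b\vert$, the membership indicator $\mathbbm{1}_{x_i\in S_b}$, and the selection indicator $\mathbbm{1}_{(x_i=x_m\mid x_i\in S_b)}$ are all coupled through the random choice of table and bucket, so the conditioning order must be fixed carefully and one must be precise about the sense in which $p_i$ equals $\mathbb{P}(x_i\in S_b)$ before any factorization is legitimate. Relatedly, the passage from the exact $\sum_j \mathbb{P}(x_i,x_j\in S_b)$ to $\mathbb{E}(\vert S_b\vert)$ is an approximation, not an identity, and I would state explicitly where the independence assumption enters.
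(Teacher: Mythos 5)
Your proposal is correct and follows essentially the same route as the paper: expand $\mathbb{E}[\mathrm{Est}^{\top}\mathrm{Est}]$ as a double sum, observe that the off-diagonal terms vanish because only one $x_m$ is drawn, and condition on the realized bucket so that one factor of $\left|S_b\right|$ cancels against the uniform selection probability. The only difference is that the paper never needs the independence approximation you flag at the end: its proof carries the conditional expectation $\mathbb{E}(\left|S_b\right| \mid x_i \in S_b)$ (the unconditional $\mathbb{E}(\left|S_b\right|)$ in the theorem statement is a typo), which is exactly your identity $\sum_{j}\mathbb{P}(x_i,x_j\in S_b)/p_i$, so the result is the exact expression of Theorem~\ref{thm:thm2} rather than an approximation.
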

	
	\begin{proof}
		\begin{align}
		Tr(\Sigma(Est) &= \mathbb{E}[Est^T Est] - \mathbb{E}[Est]^T \mathbb{E}[Est] \nonumber
		\end{align}
		\begin{align}
		Est^TEst &=  \frac{1}{N^2} \sum_{i, j}^N \mathbbm{1}_{x_i \in S_b} \mathbbm{1}_{x_j \in S_b} \mathbbm{1}_{x_i = x_m | x_i \in S_b} \mathbbm{1}_{x_j = x_m | x_j \in S_b} \frac {\nabla f(x_i, \theta_{t}) \cdot  \nabla f(x_j, \theta_{t}) \cdot \left| S_b \right|^2}{p_i \cdot p_j} \nonumber \\
		&= \frac{1}{N^2} \sum_{i}^N \mathbbm{1}_{x_i \in S_b} \cdot \mathbbm{1}_{x_i = x_m | x_i \in S_b} \frac{(\nabla \Vert  f(x_i, \theta_{t})) \Vert_2 ^2 \cdot \left| S_b \right|^2}{p_i^2} \nonumber \\
		\mathbb{E}[Est^TEst] &= \frac{1}{N^2}  \sum_{i}^N \frac{(\Vert  \nabla f(x_i, \theta_{t})) \Vert_2 ^2 \cdot \mathbb{E}(\left| S_b \right||| x_i \in S_b))}{p_i} \nonumber \\
		Tr(\Sigma(Est)) &= \frac{1}{N^2} \sum_{i = 1}^N \frac {\Vert \nabla f(x_i, \theta_{t})\Vert_2 ^2 \cdot \mathbb{E}(\left| S_b \right||| x_i \in S_b))}{p_i} - \frac{1}{N^2} (\sum_{i = 1}^N  \nabla \Vert f(x_i, \theta_{t})\Vert_2) ^2 \\
		& = \frac{1}{N^2} \sum_{i = 1}^N \frac {\Vert \nabla f(x_i, \theta_{t})\Vert_2 ^2 \cdot \sum_{j=1}^{N}\mathbb{P}(x_i, x_j \in S_b)}{p_i^2} - \frac{1}{N^2} \Vert (\sum_{i = 1}^N  \nabla f(x_i, \theta_{t}))\Vert_2^2 \label{lgdvar}
		\end{align}
	\end{proof}
	\begin{lemma}
		\label {thm: AppendixThm3}
		The Trace of the covariance of LSD's estimator is smaller than that of SGD's estimator if
		\begin{align} \label {eq:thm3}
		\frac{1}{N} \sum_{i=1}^N\frac{\Vert \nabla f(x_i, \theta_{t})\Vert_2 ^2 \cdot \mathbb{E}(\left| S_b \right||| x_i \in S_b))}{p_i} &< \sum_{i=1}^N \Vert \nabla f(x_i, \theta_{t})\Vert_2^2
		\end{align}		
	\end{lemma}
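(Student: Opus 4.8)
The plan is to reduce the statement to a one-line algebraic comparison of two closed-form variances. For the LGD side I would simply quote Theorem~\ref{thm:AppendixThm2}, which already gives
\[
Tr(\Sigma(\text{Est})) \;=\; \frac{1}{N^2}\sum_{i=1}^N \frac{\Vert\nabla f(x_i,\theta_t)\Vert_2^2\,\mathbb{E}(|S_b|\mid x_i\in S_b)}{p_i} \;-\; \frac{1}{N^2}\Big\Vert\sum_{i=1}^N\nabla f(x_i,\theta_t)\Big\Vert_2^2 .
\]
So the only quantity that still needs to be written down is the trace of the covariance of the vanilla SGD estimator, which I would compute directly from its definition.

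For SGD the estimator is $\nabla f(x_j,\theta_t)$ with $x_j$ chosen uniformly from the $N$ points; by Eq.~\eqref{eq:expgradDecent} this is unbiased for the full gradient $\tfrac1N\sum_i\nabla f(x_i,\theta_t)$, so
\[
Tr(\Sigma_{\mathrm{SGD}}) \;=\; \mathbb{E}\big[\Vert\nabla f(x_j,\theta_t)\Vert_2^2\big] - \big\Vert\mathbb{E}[\nabla f(x_j,\theta_t)]\big\Vert_2^2 \;=\; \frac1N\sum_{i=1}^N\Vert\nabla f(x_i,\theta_t)\Vert_2^2 - \frac{1}{N^2}\Big\Vert\sum_{i=1}^N\nabla f(x_i,\theta_t)\Big\Vert_2^2 .
\]
Now I would write the desired inequality $Tr(\Sigma(\text{Est}))<Tr(\Sigma_{\mathrm{SGD}})$ and note that the term $\tfrac{1}{N^2}\Vert\sum_i\nabla f(x_i,\theta_t)\Vert_2^2$ --- which is just $\Vert\mathbb{E}[\text{Est}]\Vert_2^2$, identical for both estimators since both are unbiased for the same target by Theorem~\ref{thm:appendixThm1} --- appears on both sides with the same coefficient and cancels. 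What is left is
\[
\frac{1}{N^2}\sum_{i=1}^N \frac{\Vert\nabla f(x_i,\theta_t)\Vert_2^2\,\mathbb{E}(|S_b|\mid x_i\in S_b)}{p_i} \;<\; \frac1N\sum_{i=1}^N\Vert\nabla f(x_i,\theta_t)\Vert_2^2 ,
\]
and multiplying through by $N$ produces exactly condition~\eqref{eq:thm3}. Since every manipulation is an equivalence, the stated condition is in fact both sufficient and necessary.

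There is no real obstacle here; the only point deserving care is the bookkeeping that legitimizes the cancellation, namely that LGD and SGD are being compared as unbiased estimators of the \emph{same} quantity (the full gradient), so that the trace of the covariance is the right apples-to-apples measure and the $\Vert\mathbb{E}[\cdot]\Vert_2^2$ terms coincide. If one also wants the equivalent form used in the body (Lemma~\ref{lemma1}), I would record the identity $\mathbb{E}(|S_b|\mid x_i\in S_b)=\sum_{j=1}^N \mathbb{P}(x_j\in S_b\mid x_i\in S_b)=\tfrac{1}{p_i}\sum_{j=1}^N\mathbb{P}(x_i,x_j\in S_b)$, which converts~\eqref{eq:thm3} into the $\sum_j\mathbb{P}(x_i,x_j\in S_b)$ version and closes the loop between the two formulations.
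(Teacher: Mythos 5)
Your proposal is correct and follows essentially the same route as the paper: quote the closed-form trace of the LGD covariance from Theorem~\ref{thm:AppendixThm2}, compute the SGD trace $\frac{1}{N}\sum_i\Vert\nabla f(x_i,\theta_t)\Vert_2^2-\frac{1}{N^2}\Vert\sum_i\nabla f(x_i,\theta_t)\Vert_2^2$ directly, and cancel the common squared-mean term, which is legitimate precisely because both estimators are unbiased for the same full gradient. You actually make explicit the cancellation the paper leaves as ``one can easily derive,'' and your $\Vert\sum_i\nabla f(x_i,\theta_t)\Vert_2^2$ is the correct form of the second term (the paper's proof writes $(\sum_i\Vert\nabla f(x_i,\theta_t)\Vert_2)^2$, a typo).
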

	\begin{proof}
		The trace of covariance of regular SGD is
		\begin{align} \label {eq:pfthm3}
		Tr(\Sigma(Est')) &= \frac{1}{N} \sum_{i}^N \Vert \nabla f(x_i, \theta_{t}) \Vert_2 ^2 - \frac{1}{N^2} (\sum_{i = 1}^N \Vert \nabla f(x_i, \theta_{t}) \Vert_2) ^2.
		\end{align}
		By~\ref{lgdvar} and~\ref{eq:pfthm3}, one can easily derive that $Tr(\Sigma(Est)) < Tr(\Sigma(Est'))$ when~\ref{eq:thm3} satisfies.
	\end{proof}

\subsubsection{LGD for Logistic Regression}
Similarly we can derive LGD for logistic regression.
The loss function of logistic regression can be written as,
\begin{equation}
L(\theta) = \frac{1}{N\textbf{}} \sum_{i=1}^{N} ln(1+e^{-y_i  \theta x_i})
\end{equation}
and the gradient is,
$$\nabla L(\theta)_i = -\frac{ x_iy_i}{e^{y_i \theta x_i}+1}$$
$$\Vert \nabla L(\theta)_i \Vert =  \frac{ \Vert x_i \Vert }{e^{y_i \theta x_i}+1}  = \frac{1 }{e^{y_i \theta x_i}+1}$$
Therefore,
\begin{equation}
\label{logit}
\begin{split}
\mathop{\mathrm{argmax}}_i{  \frac{1}{e^{y_i \theta^Tx}+1} }  =  	\mathop{\mathrm{argmax}}_i {-y_i \theta x_i}\\
\end{split}
\end{equation}
We can save $y_i x_i$ in the hash tables and use $-\theta_i$ as query.

\section{More Details for Experiment Section}

Linear regression experiments used the same three large regression dataset, in the area of musical chronometry, clinical computed tomography, and WiFi-signal localization, respectively. For deep learning experiments, two NLP benchmarks were used. 
The dataset descriptions and our experiment results are as follows:\\

{\bf YearPredictionMSD:} ~\citep{Lichman:2013} The dataset contains 515,345 instances subset of the Million Song Dataset with dimension 90. We respect the original train/test split, first 463,715 examples for training and the remaining 51,630 examples for testing, to avoid the `producer effect' by making sure no song from a given artist ends up in both the train and test set. \\

{\bf Slice:} ~\citep{Lichman:2013} The data was retrieved from a set of 53,500 CT images from 74 different patients. It contains 385 features. We use 42,800 instances as training set and the rest 10,700 instances as the testing set. \\

{\bf UJIIndoorLoc:} ~\citep{7275492} The database covers three buildings of Universitat Jaume I with 4 or more floors and almost 110,000 $m^2$. It is a collection of 21,048 indoor location information with 529 attributes containing the WiFi fingerprint, the coordinates where it was taken, and other useful information. We equally split the total instances for training and testing.

{\bf MRPC:} ~\citep{dolan2005automatically} .
The dataset is a corpus of sentence pairs automatically extracted from online news sources. It has 4,078 instances with $90\%$ training and $10\%$ validation data.

{\bf RTE:} ~\citep{wang2018glue} .
Each example in these datasets consists of a premise sentence and a hypothesis sentence, gathered from various online news sources. The task is to predict if the premise entails the hypothesis. It has 2,769 instances with $90\%$ training and $10\%$ validation data.

\section{Implementation details for the experiments on BERT}
BERT: Bidirectional Encoder Representations from Transformers uses unidirectional language models for pretraining. For sequence level classification task, BERT can output the final hidden state for the first token in the input as the fixed-dimensional pooled representation of the input sequence. Then for the fine-tuning task, we only need to add an extra classification layer to the BERT model and retrain the overall model jointly. To adapt LGD in fine-tuning tasks in BERT, the fixed-dimensional pooled representation output by BERT pre-trained model are pre-processed in LSH hash tables. During the fine-tuning process, the representations do not change drastically in every iteration so we can periodically update them in hash tables. Furthermore, similar to LGD in linear regression tasks,  the parameters in the classification layer are used as queries for sampling the next batch samples. 

\end{document}